\documentclass{article}

\usepackage{arxiv}

\usepackage[utf8]{inputenc} 
\usepackage[T1]{fontenc}    
\usepackage{hyperref}       
\usepackage{url}            
\usepackage{amsfonts}       
\usepackage{nicefrac}       
\usepackage{microtype}      
\usepackage{lipsum}
\usepackage{graphicx}

\usepackage{amsmath,amsfonts,amssymb}
\usepackage{mathtools}
\usepackage{enumerate, enumitem}
\usepackage{verbatim}
\usepackage{commath}
\usepackage{accents}
\usepackage{dashrule}
\usepackage{booktabs, tabularx, ragged2e, subcaption}

\usepackage[round,sort]{natbib}
\usepackage{cleveref}
\usepackage{algorithm}
\usepackage{algpseudocode}
\usepackage{todonotes}

\author{Tanya Veeravalli$^{1,2}$\thanks{Equal contribution} \quad David M. Bossens$^{1,2*}$ \quad Atsushi Nitanda$^{1,2,3}$ \\
\normalsize{\textit{$^1$Centre for Frontier AI Research, Agency for Science, Technology and Research (A*STAR)}} \\
\normalsize{\textit{$^2$Institute of High Performance Computing, Agency for Science, Technology and Research (A*STAR)}} \\
\normalsize{\textit{$^3$College of Computing and Data Science, Nanyang Technological University}} \\
\small{Email: \{Veeravalli\_Tanya\,,\,\,David\_Bossens\,,\,\,Atsushi\_Nitanda\}@a-star.edu.sg} }

\def\ddefloop#1{\ifx\ddefloop#1\else\ddef{#1}\expandafter\ddefloop\fi}

\def\ddef#1{\expandafter\def\csname c#1\endcsname{\ensuremath{\mathcal{#1}}}}
\ddefloop ABCDEFGHIJKLMNOPQRSTUVWXYZ\ddefloop
 
\def\ddef#1{\expandafter\def\csname s#1\endcsname{\ensuremath{\mathsf{#1}}}}
\ddefloop ABCDEFGHIJKLMNOPQRSTUVWXYZ\ddefloop

\def\:{\coloneqq}
\def\Reals{{\mathbb R}}

\def\Ex{{\mathbf E}} 
\def\Pr{{\mathbf P}} 
\def\bP{{\mathbb P}} 
\def\tr{{\mathsf T}} 
\def\eps{\varepsilon}
\DeclareMathOperator*{\argmax}{arg\,max}
\DeclareMathOperator*{\argmin}{arg\,min}
\DeclareMathOperator{\Tr}{Tr}

\makeatletter
\newsavebox{\@brx}
\newcommand{\pmt}[1][]{\tiny{ $\pm #1$} }
\newcommand{\llangle}[1][]{\savebox{\@brx}{\(\m@th{#1\langle}\)}%
  \mathopen{\copy\@brx\kern-0.5\wd\@brx\usebox{\@brx}}}
\newcommand{\rrangle}[1][]{\savebox{\@brx}{\(\m@th{#1\rangle}\)}%
  \mathclose{\copy\@brx\kern-0.5\wd\@brx\usebox{\@brx}}}
\makeatother
\newenvironment{proof}{\paragraph*{Proof:}}{\hfill$\square$}
\newtheorem{theorem}{Theorem}
\newtheorem{definition}{Definition}
\newtheorem{assumption}{Assumption}
\crefname{assumption}{assumption}{Assumptions}
\creflabelformat{assumption}{#2#1#3}

\newtheorem{lemma}{Lemma}
\newtheorem{corollary}{Corollary}
\newtheorem{remark}{Remark}

\newcolumntype{Y}{>{\RaggedRight\arraybackslash}X}

\begin{document}

\title{Policy Gradient for Continuous-Time Robust Markov Decision Processes}
\date{}
\maketitle

\begin{abstract}
The framework of robust Markov decision processes (RMDPs) allows the design of reinforcement learning agents that satisfy performance guarantees under worst-case transition dynamics. Traditional RMDPs consider discrete-time dynamics and recently, sample-efficient policy gradient algorithms have been considered in this context. This paper investigates policy gradient algorithms within a continuous-time RMDP framework. Policy gradients and adversarial gradients are derived using pathwise and adjoint-based formulas for stochastic and ordinary differential equations. We propose double-loop optimisers to obtain linear convergence in the oracle-based setting and an $\tilde{\cO}(\frac{1}{\epsilon^2})$ sample complexity in the sample-based setting in an analysis which also derives novel tools for the framework of undiscounted total cost MDPs. Additionally, we propose mean-field optimisers as distributional optimisers with an $\tilde{\cO}(\frac{1}{K})$ oracle-based convergence rate and an $\tilde{\cO}(\frac{N^2}{\epsilon})$ sample complexity under $N$-particle approximation. The effectiveness of continuous-time policy gradient algorithms is confirmed for both optimisers on continuous-time RMDPs with neural ordinary differential equation dynamics.
\end{abstract}
\section{Introduction}
Reinforcement learning agents require data trajectories of states, actions, and rewards to optimise the long-term cumulative reward. Such data trajectories are often obtained from poorly specified transition models, and therefore when an agent transfers to the true transition model, its performance will be far from optimal. The framework of robust Markov decision processes  (RMDPs;  \cite{Wiesemann2013,Nilim2005}) provides a natural approach to deal with such problems, with relevance to sim2real and robust control applications.

RMDPs optimise the policy of an agent under worst-case transition dynamics, based on some pre-defined uncertainty set. While RMDPs are often studied in mostly theoretical settings, they have also found their way to policy gradient algorithms, which are highly scalable and performant. In particular, double-loop policy gradient algorithms \citep{Wang2024,Wang2025,bossens2025mirror,Li2026} perform repeated gradient ascent in the space of transition kernels to guarantee that the policy at each time step is evaluated on its worst-case transition kernel. These works have found strong theoretical and empirical support.

Many problems in various application domains, e.g. in control, physics, chemistry, video generation, etc., are modeled within continuous-time formalisms such as partial differential equations (PDEs) and continuous-time Markov chains, which contrasts with traditional MDP based frameworks. For instance, recent works investigate neural ordinary differential equations (ODEs) \citep{neuralODE_chen2018}, neural stochastic differential equations (SDEs) \citep{Tzen2019}, diffusion models \citep{song2021scorebased}, and gradient flow \cite{Santambrogio2017}. In RL, while these formalisms have been used to a varying degree for varying purposes (exploration, policy definition, etc.), their use for policy optimisation algorithms in continuous-time systems is limited to relatively few works \mbox{\citep{Zhan2023,zhao2023policy,Ainsworth2021,Munos2006}}. More so, the design of continuous-time policy gradient algorithms for RMDPs remains an unexplored challenge. 

With the above in mind, our paper makes the following contributions:
\begin{itemize}
    \item We introduce a novel framework that we will call \textit{Continuous-time robust Markov decision processes}, which considers decision-making with worst-case robustness by extending the RMDP framework to the continuous-time setting.
    \item Pathwise and adjoint-based gradient computations for the policy and the adversary under deterministic and stochastic policies in a continuous-time RMDP problems, including dynamics based on ODE and SDE systems.
    \item A last-iterate convergence proof for two classes of policy-gradient algorithms to solve the continuous-time RMDP problem: i) a double-loop algorithm for gradient-based policy optimisation with linear convergence in the oracle-based setting and a sample complexity of $\tilde{\cO}(\frac{1}{\epsilon^2})$ in the sample-based setting; and ii) a single-loop mean-field optimisation algorithm for distributional gradient-based policy optimisation with oracle-based rate of $\tilde{\cO}(\frac{1}{K})$ and sample-based rate of $\tilde{\cO}(\frac{N^2}{\eps})$. The double-loop analysis in particular also introduces new tools for analysing rectangularity under indirect parametrisation, a performance difference for transition kernels within the total cost RMDP framework, and PL conditions for the policy and the adversary.
    \item Empirical experiments comparing the double-loop and mean-field optimisers with different instantiations of the policy and adversary optimisers. Experiments on LQR with neural ODE dynamics and neural network parametrised uncertainty sets show that robust discrete-time policy gradient algorithms have poor robust optimisation performance in continuous-time RMDPs while continuous-time policy gradient algorithms have a clear benefit in the setting.
\end{itemize}

\section{Related work}
\subsection{Continuous-time formulations in RL}
First we consider previous works that consider continuous-time formalisms in the non-robust RL and control. From a technical perspective we distinguish between HJB-based algorithms, pathwise algorithms, and adjoint-based algorithms for policy optimisation as the most directly related works. 

HJB-based value-based algorithms approximate the Q-value from which the optimal control can be determined \cite{Kim2021,Jia2023}. HJB-based actor-critic algorithms use a similar Q-value approximation to approximate the policy gradient \cite{Cohen2025}. Our paper focuses on pure policy gradient algorithms without any function approximation, instead making use of an adjoint state and applying Pontryagin's maximum principle \citep{Pontryagin1962}.

One of our contributions is the formulation of pathwise policy gradients for continuous-time RMDPs. \cite{Munos2006} previously derived the policy gradient for continuous-time (non-robust) undiscounted MDPs with deterministic dynamics based on ODEs. The work showed that likelihood ratio methods result in exponential blow-up of the variance and proposed a pathwise gradient that converges almost surely to the true gradient when the time step approaches zero. Our work takes place in the same setting, and in addition to extending the pathwise policy gradient estimator to time intervals, we also derive a pathwise adversarial gradient for continuous-time RMDPs.

Another contribution is the formulation of adjoint-based policy gradients for continuous-time RMDPs. Along the same lines, \cite{Zhan2023} leverage Pontryagin's maximum principle to connect continuous-time policy optimisation to minimizing the integral of the associated Hamiltonian of a neural ODE over a fixed time horizon. They use first-order optimisation methods and employ a state augmentation technique to convert the total cost into a state component, after which an adjoint method is employed to calculate the policy gradient. Within the discounted MDP setting, \cite{Jia2022} explores policy gradient and actor-critic in continuous time discounted MDPs, where the state dynamics are governed by an SDE. Similarly, \cite{zhao2023policy} proposes continuous-time policy optimisation with discounted MDPs, showing performance-difference and local-approximation formulas, and extending these from basic policy gradient to PPO and TRPO. These approaches do not present convergence rates for the optimisers and do not deal with worst-case robustness. Most closely related to our work is \cite{Ainsworth2021}, which proposes a continuous-time policy gradient with the same total cost objective albeit without any proof. Our adjoint-based policy gradient is similar except to this work. In addition to providing its theoretical proof, we also derive an adversary policy gradient for robust MDPs. Two further differences are that we also include time-dependencies in the policy and the dynamics and that our gradient is not limited to the sample-and-hold model.

Other work employing the maximum principle in continuous-time includes \cite{leeftink2025optimalcontrolprobabilisticdynamics}, where the authors assume imperfect knowledge of the underlying systems' dynamics and proceed to probabilistically minimize the mean Hamiltonian with respect to epistemic uncertainty. However, they do not frame the problem in a robust optimisation framework. The authors also propose a numerical method scalable to large-scale probabilistic dynamical models including ensemble neural ODEs.  

Some works that tackle policy gradients in continuous-time for SDEs and stochastic control problems include \cite{zhou2024stabilizingpolicygradientsstochastic, zhou2025PolicyGradSOC}. The former optimises SDEs (viewed as continuous-time generative models in this context) using policy gradient and a perturbation process argument. The latter analyses the gradient flow of the stochastic control as the continuous-time limit of policy gradient. Lastly, on a more general note, the foundational work \cite{Sirignano2017_sgdCtsTime} proposes a computationally efficient stochastic gradient descent in continuous-time which has found its way to many of the above related works. 

 Beyond policy optimisation in environments with continuous-time dynamics, continuous-time formalisms also find their way to alternative directions in RL. For instance, continuous-time formalisms have found their way in the use of diffusion policies (e.g. \cite{Ren2025}) which can be used for improved stability and exploration. Another example is the use of gradient flow for RL, where they apply continuous-time methods for parameter updates \citep{kerimkulov2025fisherraogradientflowentropyregularised,zorba2025,pfau2025,muller2025fisherraogradientflowslinear}. While gradient flow  techniques are orthogonal to the continuous-time RMDP framework, we also demonstrate how mean-field gradient flow \citep{Liu2025} can be applied to distributional optimisation within the framework. 

Issues arising from discretisation and analogs between discrete-time and continuous-time formalisms are of wide interest. Formalisms such as continuous-time MDPs and semi-MDPs focus on MDPs with discrete state-action spaces where each transition takes a random amount of time \citep{Bradtke}. A discretised approach to the policy gradient was previously explored for SDEs \citep{zhou2024stabilizingpolicygradientsstochastic}. The relation between the Q-function in discrete-time RL and the Hamiltonian has been discussed in earlier works (e.g. \cite{mehta2009qlearningPMP}). Our empirical study investigates the time discretisation sensitivity and the comparison of discrete-time vs continuous-time algorithms in the context of robust PDE dynamics; our regret analyses are based on the discrete-time framework of undiscounted total cost MDPs; and we show that the discrete-time stochastic and deterministic policy gradient \cite{Sutton2012,Silverb} converge to our policy gradient formulations as $\Delta t \to 0$ under a sample-and-hold model. Naturally, in practice, a much more coarse time step should be chosen, which is where the benefit of continuous-time algorithms comes in. Our study shows that robust discrete-time policy gradient algorithms have poor robust optimisation performance in continuous-time RMDPs and we empirically demonstrate the benefit of continuous-time policy gradient algorithms.

\subsection{Robust reinforcement learning and control}
Now we discuss related works in robust reinforcement learning, which include policy gradient algorithms for RMDPs and PDE-based techniques for robust control and RL.

Our first proposed optimiser is an instance of the class of \textit{double-loop policy gradient algorithms} in RMDPs \citep{Wang2024,Wang2025,bossens2025mirror,Li2026}, which solve the inner problem $\epsilon$-optimally at each iteration. In the oracle-based last-iterate regret, we obtain a linear convergence rate, which matches recent results \citep{Wang2024,bossens2025mirror}. In the sample-based average regret, we obtain an $\tilde{\cO}(\frac{1}{\epsilon^3})$ sample complexity, which matches the state-of-the-art \citep{bossens2025mirror}. In addition to our work considering continuous-time dynamics, there are a few further differences. Our formulation focuses on the undiscounted total cost MDP setting, which is a discretisation of our continuous-time RMDP framework which has equivalent policy gradients as $\Delta t \to 0$. We also do not consider constraints (as in robust constrained MDPs; \cite{bossens2025mirror}). Our regret analysis also holds under a general policy parameterization whereas these previous works considered direct and softmax parameterizations only. Our work requires only Lipschitz-continuity properties in the policy, dynamics, and cost-functions, from which Polyak-Lojasiewicz (PL) conditions are proven for the policy and applied within the total cost MDP framework. Recent work challenges subgradient dominance of the robust cost function for general RMDPs but points to the property holding with further assumptions (e.g. regularised objectives, uniqueness of the worst transition kernel) \citep{kitamura2026}. Our results are based on separate gradient dominance conditions for the policy and adversary objectives  under specific smoothness, differentiability, and Lipschitz assumptions, and a restriction on the transition kernel mismatch between adversarial updates. Together with the upper bound of the iterates' cost deviations to the robust cost, as guaranteed by the double-loop policy algorithm, this leads to a robust cost upper bound. For the adversary, we proposed a modified PL condition suitable for projected gradient descent. Our work is the first to propose a policy gradient solution for continuous-time RMDPs and we consider a range of policy gradients, including stochastic adjoint-based, deterministic adjoint-based, and pathwise. Rectangularity may be required for the PL condition so we also derive a rectangularity result under an indirect parameterization to support the analysis. It may under some conditions be necessary apply a correction for non-rectangular sets as in  \cite{Li2026}, who show that under direct parameterization, a Frank-Wolfe algorithm can obtain convergence guarantees with an additional factor in the convergence rate. 

Our regret analysis in terms of undiscounted total cost RMDPs is novel, to our knowledge, with only one recent work appearing in the robust MDP literature. This work \citep{Su2025} explored entropic risk and entropic value-at-risk measures in this context, and concludes that the total reward framework may be preferable to discounted MDPs in a broad range of domains. Our work connects a continuous-time formalism with the discrete-time total cost RMDP. The framework of average-reward MDPs is explored more widely in the robust MDP literature \citep{Sun2024a,Wang2025} but is conceptually distinct in that it emphasises long-term asymptotic behaviour rather than transient behaviour. Nevertheless, to demonstrate the competitiveness of our algorithm, a linear oracle-based convergence was found for rectangular RMDPs in \cite{Sun2024a} and an $\cO(\frac{1}{\epsilon^4})$ oracle-based iteration complexity was found for non-rectangular RMDPs in \cite{Wang2025}. 

Our second proposed optimiser is a distributional optimiser, which optimises a problem an objective formulated over a distribution of parameters. Specifically, we directly use the analysis of the Mean-field Langevin Stochastic Descent-Ascent (MFL-SDA) algorithm from the mean-field optimisation literature \citep{Liu2025}. In the context of GAN training, previous work has highlighted distributional optimisation as a favourable global optimiser with theoretical guarantees \citep{hsieh2019finding}. Our work extends the approach to RL objectives, where distributional optimisation is not commonly explored; note that that this is distinct from distributional RL (e.g. \cite{Bellemare2017}), where the return distribution is modeled to improve value estimates, distributed RL (e.g. \cite{Espeholt2018}), where distinct actors are trained in parallel for accelerated training, or uncertain MDPs (e.g. \cite{Ahmed2013}), where the objective is to optimise the policy over a fixed distribution over transition kernels. Although here are a few prior works which update the adversary based on single-particle Langevin dynamics in the context of non-rectangular uncertainty sets  \citep{Wang2025,Li2026}, to the best of our knowledge, the use of distributional optimisation, let alone mean-field optimisation, has not yet been explored in the context of RMDPs. In our theory, the primary benefit we obtain from the MFL-SDA optimiser over the double-loop implementation is that we obtain a guarantee for the \textit{last-iterate} regret rather than the average regret. 

While the extension of RMDPs to the continuous-time setting has not been considered, works on robust control in continuous time have proposed solutions related to ours. For instance, Linear Quadratic Regulator (LQR) problems are frequently studied by control theorists, often in a more specialized set of control strategies and algorithms \citep{VINODHKUMAR2013169, kalman1960new}. Similarly, recent work has also explored worst-case robustness in Linear Quadratic Gaussian problems \citep{Taskesen2023}. By contrast, our approach tackles a more general problem structure including neural network parameterized dynamics, of which LQR type problems are one instance (as shown in the experiments). Within the context of a robotics application, previous work has considered uncertainty-aware parameterized SDEs to fit observed dynamics, which allowed a soft-robust RL using traditional PPO on SDE samples \citep{Djeumou2023}. Our paper investigates continuous-time policy gradient algorithms and considers hard-robustness, obtaining the worst-case environment from the uncertainty set. Previous work has also considered risk-averse control with the maximum principle but this has focused on alternative risk measures and outside the context of policy gradient algorithms \citep{Bonalli2023}.

\section{Preliminaries}
Our goal is to use policy gradient in the context of RMDPs with continuous-time dynamics. We briefly describe the setting, the main objective, and the key concepts and general assumptions underlying our algorithmic solutions and proof strategies.
\subsection{General notations}
A few general notations we use throughout the paper are: $\bP(x)$ to denote the probability of an event $x$; $\bP[\cY]$ to denote the space of probability distributions over a set $\cY$; $\partial_y$ to denote the partial derivative w.r.t. a uni-dimensional variable $y$ and $\nabla_y$ to denote the gradient w.r.t. a multi-dimensional variable $y$; and $\Ex[x]$ to denote the expectation of a random variable $x$. Further notations are specific to the RL setting explained below or are introduced in the rest of the main body.
\subsection{Continuous-time setting}
The setting includes a continuous state space $\cX \subset \Reals^{d_x}$ and a continuous control space $\cU \subset \Reals^{d_u}$ where the dynamics evolve according to a controlled differential equation of the form
\begin{equation}
\label{eq: dynamics}
\frac{\dif x_t}{\dif t} = f_{\xi}(t, x_t, u_t)  \,,
\end{equation}
where $\xi \in \Reals^{d_\xi}$ parametrizes the dynamics, which will be considered as the inner problem of the robust optimisation.

The agent at time $t \geq 0$ observes the dynamics as $x_t$ and takes actions as the control $u_t \sim \pi_\theta(t,x_t)$, where $\pi_{\theta}$ is a stochastic policy in $\Pi \: \{\pi_\theta: [0,T] \times \Reals^d \rightarrow  \bP[\cU] \}$. When the policy is deterministic, we denote the policy as \(\mu_\theta(t, x)\) for a state \(x \in \cX\). Given a particular dynamics model $f_{\xi}$, the agent seeks to optimise the expected cumulative cost of the trajectory according to 
\begin{equation}
\label{eq: policy objective}
\min_{\theta} J(\theta,\xi) \: \Ex_{\pi_\theta}\left[\int_{0}^{T} r(\tau,x_{\tau},u_{\tau})  \dif \tau  + R(x_T) \Big \vert d_0, f_{\xi}\right] \,,
\end{equation}
where $d_0$ is the starting distribution, $r: [0,T] \times \cX \times \cU \to \Reals^{+}$ is the running cost, and $R: \cX \to \Reals^+$ is the terminal cost. For simplicity of exposition, we further assume that the cost function is non-negative.
 
To facilitate the computation of the policy gradient and the expected cumulative cost, we make use of two related value functions. First, the action-value, which is the cost-to-go from a particular time and state-action pair, is formulated for continuous-time algorithms as
\begin{equation}
\label{eq: state-action value}
Q(t,x,u) = \Ex_{\pi_\theta}\left[\int_{t}^{T} r(\tau,x_{\tau},u_{\tau})  \dif \tau + R(x_T) \Big\vert x_t=x, u_t=u, f_{\xi}\right] \,.
\end{equation}
Similarly, the value can be formulated for a particular state and time as 
\begin{equation}
\label{eq: state value}
V(t,x) = \min_{u}\Ex \left[Q(t,x,u) \Big\vert f_{\xi}, \pi_\theta \right] \,.
\end{equation}

\subsection{Continuous-time RMDP formalism}
Designing a formalism for continuous-time RMDP (CT-RMDPs), we are interested in obtaining the worst-case transition dynamics chosen by nature. The general objective matches that of the traditional RMDP,
\begin{equation}
\label{eq: RMDP}
\min_{\theta} \left\{ \Phi(\theta) \:\sup_{\xi \in \Xi}  J(\theta,\xi) \right\}  \,,
\end{equation}
where obtaining the worst-case dynamics is also referred to as the ``inner problem'' of the RMDP. 

In the CT-RMDP setting, we consider underlying parametrised dynamics of the type \eqref{eq: dynamics} with
\begin{equation}
\label{eq: uncertainty set parametrisation}
f_{\xi}(t,x,u)  =   \bar{f}(t,x,u) + g_{\xi}(t,x,u) \,,
\end{equation}
where $\bar{f}$ is the nominal dynamics, $g_{\xi}$ is a Lipschitz-continuous function, and the parameters \(\xi\) are restricted in the compact space $\Xi$.

\subsection{Maximum principle}
The maximum principle (MP) is a general framework using the calculus of variations that presents optimality conditions of a continuous, controlled trajectory subject to a cost functional \(J\) \citep{Pontryagin1962}. The MP is employed for systems with an open-loop control (i.e. \(u(t)\) rather than \(u(t, x_t)\)). Alternatively, the Hamilton-Jacobi-Bellman partial differential equation (HJB PDE) provides optimality conditions for a \textit{closed-loop} policy/control using a dynamic programming formalism \citep{bellman1957dynamic}. These two approaches are related, as we will see in the following.

Consider the controlled ODE dynamics \eqref{eq: dynamics}, subject to the cost functional \eqref{eq: policy objective}. We construct the corresponding Hamiltonian:
\begin{equation}\label{eq:hamiltonian_ode}
    H(t, x_t, u_t, p_t) \: r(t, x_t, u_t) + p_t^\tr f_\xi(t, x_t, u_t)
\end{equation}
where \(p_t \in \Reals^{d_x}\) is the time-varying costate/adjoint term. In essence, the Hamiltonian is a Lagrangian that considers the ODE dynamics as a constraint while the $p_t$ are interpreted as the Lagrange multipliers. The MP states that the optimal state \(x_t^*\), optimal control \(u_t^*\), and corresponding costate \(p_t^*\) must minimize the Hamiltonian: \(H(t, x_t^*, u_t^*, p_t^*) \leq  H(t, x_t, u_t, p_t)\) for all \(t, x_t, u_t, p_t\).
The trajectory of the costate \(p_t\) is determined by the partial derivative of the Hamiltonian w.r.t.~ the state variables and is solved backwards in time:
\begin{equation}
    - \dot p_t = \partial_x H(t, x_t, u_t, p_t), \quad p_T = \nabla_x R(x_T).
\end{equation}
Note that the state evolution is prescribed by
\begin{equation}
    \dot x_t = \partial_p H(t, x_t, u_t, p_t), \quad x_0 = x \sim d_0
\end{equation}
which reduces to \eqref{eq: dynamics}.
On the other hand, if we have a value function \(V(t, x_t)\) that denotes the minimum cost-to-go from time $t$, the HJB equation describes its evolution:
\begin{equation}
    - \partial_t V(t,x_t) = \min_{u} \{ r(t, x_t, u_t) + \nabla_x V(t,x_t) ^\tr f_\xi(t,x_t, u_t)\}.
\end{equation}
The costate \(p_t\) is physically and mathematically identified with the spatial gradient of the value function: \(p_t = \nabla_x V(t, x_t)\). Additionally, in the RL setting, it is known that the \(Q\)-function \eqref{eq: state value} is closely related to the Hamiltonian \eqref{eq:hamiltonian_ode}, in that the optimal Hamiltonian is equivalent to the optimal \(Q\) function obtained from Q-learning \citep{mehta2009qlearningPMP}.
The MP can be used to learn global feedback policies \citep{Ainsworth2021}, as we will see.

\subsection{General assumptions}\label{sec:assumptions}
The following general assumptions will be made throughout our analyses. These are standard and are physically realisable through bounded features/weights (when applicable). We assume:
\begin{enumerate}
\item We require \(f_\xi(t,x,u)\), \(r(t, x,u)\), and \(R(x)\) to be smooth in their arguments in order to apply the pathwise derivation for the pathwise gradient formula below.
    \item The functions $f_{\xi}, r, R, g_{\xi}$ are Lipschitz (in all their arguments) on a compact (invariant) set.  \label{ga: first}
    \item The derivatives are bounded $\norm{r_u} \leq C_{ru}$,\; $\norm{r_x} \leq C_{rx}$, \; $ \norm{R_x} \leq C_{Rx}$.
    \item $\norm{\nabla_u f} \leq C_{fu}, \norm{f_x} \leq C_{fx}$.
    \item $\norm{\nabla_x(g_\xi(t,x,u))} \leq C_{g\xi}$.
    \item $\norm{\nabla_\theta \pi_\theta(t,x_t)} \leq C_{\pi \theta}$ for stochastic policy $\pi_\theta$ and $\norm{\nabla_\theta \mu_\theta(t,x_t)} \leq C_{\mu \theta}$ for deterministic policy $\mu_\theta$
    \item $\nabla_\theta \pi_\theta$ is $L_{\pi \theta}$-Lipschitz in $\theta$; analogously, $\nabla_\theta \mu_\theta$ is  $L_{\mu \theta}$-Lipschitz in $\theta$.
    \item $\pi_\theta$ is $L_{\pi}$-,$L_{\pi x}$- and $L_{\pi t}$-Lipschitz in $\theta$, $x$, and $t$; analogously, $\mu_{\theta}$ is $L_{\mu}$-, $L_{\mu x}$-, and $L_{\mu t}$-Lipschitz continuous. It follows that, for instance, $\norm{\pi_{\theta'}(t,x') - \pi_\theta(t,x)} \leq L_{\pi x}\norm{x'-x} + L_\pi \norm{\theta'-\theta}$. Additionally, we assume the policies $\nabla_x \pi_\theta(t,x)$ and $\nabla_x \mu_\theta(t,x)$ are uniformly bounded. 
    \item Likewise, the Jacobians of $r$ and  $f$ are Lipschitz in $x$ and in $u$.
\end{enumerate}
These smoothness, Lipschitz and boundedness assumptions are fairly standard (and not restrictive) in order to obtain guarantees for smoothness of the policy gradient (see Lemma~\ref{lem: l smoothness}), and understand the dependencies of the regret on various facets of the system. Moreover, they are required to guarantee the existence and uniqueness of the solutions of the ODE system (from the Picard-Lindel\"{o}f theorem).

\section{Policy gradient for the CT-RMDP problem}
To solve the robust optimisation problem \eqref{eq: RMDP}, we alternatingly optimise $\theta$ and $\xi$ in double- or single-loop manner based on policy gradient. At each iteration $k \in [K]$, the policy optimiser performs gradient descent based on $\nabla_\theta J(\theta,\xi)$ with fixed inner problem parameters $\xi$, while the adversary optimiser performs gradient \textit{ascent}, where gradients are based on $\nabla_\xi J(\theta,\xi)$ with fixed policy parameters $\theta$. As a crucial part of our algorithm, we perform a forward-backward algorithm based on the Maximum Principle \citep{Li2018, Ainsworth2021, jin2020_pontryaginProgramming}.

\subsection{Pathwise policy gradient}
A first policy gradient we derive extends the pathwise gradients of \citep{Munos2006} for ODEs. While \cite{Munos2006} considers the estimator for a single, terminal cost, we extend the reasoning here to the total cost \eqref{eq: policy objective} and we assume the availability of $\nabla_x f_{\xi}(t,x_t,u_t)$ rather than approximating it, which is not a strong assumption in the robust MDP case where the model $f_{\xi}$ is typically available. 

Let $(z_t \: \nabla_{\theta} x_t)_{t \geq 0}$ be the process that describes how the state changes with respect to the policy parameters, with \(z_t \in \Reals^{d_x \times d_\theta}\). Using the policy sensitivity process given by 
\begin{align}
\label{eq: state sensitivity}
    \frac{\dif z_t}{\dif t} = \nabla_{\theta} f_{\xi}(t,x_t,u_t) = \nabla_u f_\xi(t,x_t,u_t) \cdot \left( \nabla_x \mu_{\theta}(t,x_t) z_t +\nabla_\theta \mu_\theta(t,x_t) \right) + \nabla_x f_{\xi}(t,x_t,u_t) z_t \,,
\end{align}
a pathwise policy gradient is derived below.
\begin{theorem}[Pathwise policy gradient]
\label{th: pathwise policy gradient}
Let $\mu_{\theta}: [0,T] \times \cX \to \cU$ be a deterministic policy parametrised by $\theta \in \Theta$ and let $f_{\xi}:  [0,T] \times \cX \times \cU \to \cX$ be the ODE dynamics parametrised by $\xi \in \Xi$ according to \eqref{eq: dynamics}. The pathwise policy gradient for $\mu_{\theta}$ is given by
\begin{align}
\nabla_\theta J(\theta,\xi) = \Ex \left[ \int_0^T \left(\nabla_u r(t,x_t,u_t) \cdot (\nabla_x \mu_{\theta}(t,x_t) \cdot z_t + \nabla_\theta \mu_{\theta}(t,x_t)) + \nabla_x r(t,x_t,u_t) \cdot z_t \right) \dif t + \nabla_x R(x_T) \cdot z_T\Big\vert d_0, \mu_{\theta}, f_{\xi}\right] \,,
\end{align} 
where  \(z_t\) is the policy sensitivity process \eqref{eq: state sensitivity}.
\end{theorem}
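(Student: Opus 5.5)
The plan is to differentiate the objective \eqref{eq: policy objective} under the expectation and under the time integral. Then I would identify the derivative of the state trajectory with the sensitivity process \eqref{eq: state sensitivity}.

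First fix the initial state $x_0 = x \sim d_0$, which does not depend on $\theta$, so that $\theta$ enters only through the deterministic closed-loop ODE $\dot x_t = F(t,x_t,\theta) \: f_\xi(t,x_t,\mu_\theta(t,x_t))$. By the general assumptions of Section~\ref{sec:assumptions}, $F$ is Lipschitz in $x$ and continuously differentiable in $(x,\theta)$ with bounded derivatives on the compact invariant set. Picard--Lindel\"of gives a unique solution on $[0,T]$. The classical theorem on differentiable dependence of ODE solutions on parameters then gives that $\theta \mapsto x_t$ is $C^1$ for every $t$. It also gives that $z_t = \nabla_\theta x_t$ solves the variational equation $\dot z_t = \nabla_x F\, z_t + \nabla_\theta F$ with $z_0 = 0$. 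The chain rule, applied with $u_t = \mu_\theta(t,x_t)$, gives $\nabla_x F = \nabla_u f_\xi \nabla_x\mu_\theta + \nabla_x f_\xi$ and $\nabla_\theta F = \nabla_u f_\xi \nabla_\theta \mu_\theta$, which is exactly \eqref{eq: state sensitivity}.

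Next, for fixed $x_0$, I would write the path cost as $\int_0^T r(t,x_t,\mu_\theta(t,x_t))\dif t + R(x_T)$. Differentiating the integrand by the chain rule gives $\nabla_u r\,(\nabla_x\mu_\theta z_t + \nabla_\theta\mu_\theta) + \nabla_x r\, z_t$, and the terminal term gives $\nabla_x R(x_T) z_T$. To move $\nabla_\theta$ inside $\int_0^T$ and then inside $\Ex_{d_0}$, I would use dominated convergence, or equivalently the mean value theorem with a uniform bound on difference quotients.

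The main obstacle is this interchange of limits, which needs a bound on $\norm{z_t}$ uniform in $t$, $x_0$ and $\theta$. I would get it by Gr\"onwall applied to the linear variational equation:
\[
\norm{z_t} \le C_{fu} C_{\mu\theta}\, t\, e^{(C_{fu}\bar L_{\mu x} + C_{fx})t},
\]
where $\bar L_{\mu x}$ is the uniform bound on $\nabla_x\mu_\theta$. Combining this with the bounds $C_{ru}, C_{rx}, C_{Rx}$ shows the integrand of the gradient is uniformly bounded by an integrable constant on $[0,T]$. Continuity of all the Jacobians in $\theta$ (from their Lipschitz assumptions) makes the difference quotients converge pointwise, so dominated convergence applies and yields the stated formula.

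If the dynamics are ODEs, the only randomness comes from $d_0$ and the above suffices. If one wants the expectation to cover noise, the same argument goes through pathwise for each fixed noise realisation.
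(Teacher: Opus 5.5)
Your proposal is correct and takes the same route as the paper. You differentiate the path cost by the chain rule under the time integral and the expectation over $d_0$, identifying $\nabla_\theta x_t$ with the sensitivity process $z_t$ of \eqref{eq: state sensitivity}; the paper only asserts this total-derivative computation, and your use of differentiable dependence of ODE solutions on parameters, the Gr\"onwall bound on $\norm{z_t}$, and dominated convergence supplies the justification it leaves implicit.
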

\begin{proof}
 For an instantaneous cost at the terminal state, the gradient is given by $\nabla_x R(x_T)\cdot z_T$. Additionally differentiating the cost in terms of the total derivative and integrating over the time interval $[0,T)$, we obtain 
 \[\nabla_\theta J(\theta, \xi) = \Ex \left[ \int_0^T \left(\nabla_u r(t,x_t,u_t) \cdot (\nabla_x \mu_{\theta}(t,x_t) \cdot z_t + \nabla_\theta \mu_{\theta}(t,x_t)) + \nabla_x r(t,x_t,u_t) \cdot z_t \right) \dif t + \nabla_x R(x_T) \cdot z_T\right].\]    
\end{proof}

For the adversary \(\xi\), we similarly define the adversary sensitivity process  $(y_t \: \nabla_{\xi} x_t)_{t \geq 0}$, where \(y_t \in \Reals^{d_x \times d_\xi}\):
\begin{align}
\label{eq: adversary sensitivity}
    \frac{\dif y_t}{\dif t} = \nabla_x f_\xi \cdot y_t + \nabla_\xi f_\xi(t, x_t, u_t) + \nabla_u f_\xi(t, x_t, u_t) \cdot \nabla_x \mu_\theta(t, x_t) \cdot y_t.
\end{align} 
This leads to the pathwise adversarial policy gradient below.
\begin{theorem}[Pathwise adversarial policy gradient]
\label{th: pathwise adversarial policy gradient}
Let $\mu_{\theta}: [0,T] \times \cX \to \cU$ be a deterministic policy parametrised by $\theta \in \Theta$ and let $f_{\xi}:  [0,T] \times \cX \times \cU \to \cX$ be the ODE dynamics parametrised by $\xi \in \Xi$ according to \eqref{eq: dynamics}.
\begin{align}
\nabla_\xi J(\theta,\xi) = \Ex \left[ \int_0^T (\nabla_x r(t,x_t,u_t) \cdot y_t +  \nabla_u r(t,x_t,u_t) \nabla_x \mu_{\theta}(t,x_t) \cdot y_t \dif t + \nabla_x R(x_T) \cdot y_T\Big\vert d_0, \mu_{\theta}, f_{\xi}\right] \,,
\end{align}
where $(y_t \: \nabla_{\xi} x_t)_{t \geq 0}$ is the adversary sensitivity process \eqref{eq: adversary sensitivity}.
\end{theorem}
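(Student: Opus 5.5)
The plan mirrors the proof of Theorem~\ref{th: pathwise policy gradient}, with $\xi$ playing the role of $\theta$. Fix $\theta$ and an initial state $x_0\sim d_0$. Since $d_0$ does not depend on $\xi$, the only source of $\xi$-dependence is the trajectory $t\mapsto x_t(\xi)$ and the control $u_t=\mu_\theta(t,x_t(\xi))$. I would write
\[
J(\theta,\xi)=\Ex_{x_0\sim d_0}\bigl[G(\xi;x_0)\bigr],\qquad G(\xi;x_0)=\int_0^T r\bigl(t,x_t(\xi),\mu_\theta(t,x_t(\xi))\bigr)\dif t+R\bigl(x_T(\xi)\bigr),
\]
and compute $\nabla_\xi G$ pathwise.

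\textbf{Step 1: the sensitivity $y_t=\nabla_\xi x_t$ satisfies \eqref{eq: adversary sensitivity}.} The closed-loop vector field $F(t,x,\xi)=f_\xi(t,x,\mu_\theta(t,x))$ is smooth and Lipschitz by the general assumptions of Section~\ref{sec:assumptions}. The classical theorem on differentiability of ODE solutions with respect to parameters (the Picard--Lindel\"of setting) then gives that $\xi\mapsto x_t(\xi)$ is $C^1$, uniformly in $t\in[0,T]$. Its derivative solves the variational equation
\[
\dot y_t=\nabla_x F\,y_t+\nabla_\xi F,\qquad y_0=0.
\]
The initial condition $y_0=0$ holds because $x_0$ is independent of $\xi$. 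The chain rule gives $\nabla_x F=\nabla_x f_\xi+\nabla_u f_\xi\,\nabla_x\mu_\theta$ and $\nabla_\xi F=\nabla_\xi f_\xi=\nabla_\xi g_\xi$. Substituting these recovers \eqref{eq: adversary sensitivity}.

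\textbf{Step 2: differentiate the integrand and terminal cost.} Pointwise in $t$, the chain rule gives
\[
\nabla_\xi\bigl[r(t,x_t,\mu_\theta(t,x_t))\bigr]=\nabla_x r\,y_t+\nabla_u r\,\nabla_x\mu_\theta\,y_t .
\]
Unlike in Theorem~\ref{th: pathwise policy gradient}, there is no $\nabla_\theta\mu_\theta$ term, since $\mu_\theta$ has no direct dependence on $\xi$. Likewise, $\nabla_\xi R(x_T)=\nabla_x R(x_T)\,y_T$.

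\textbf{Step 3: exchange differentiation with the time integral and the expectation $\Ex_{d_0}$.} This is the main obstacle, though a mild one. I would use dominated convergence via the mean value theorem. First, apply Gr\"onwall's inequality to the linear equation for $y_t$. With the bounds $\norm{f_x}\le C_{fx}$, $\norm{\nabla_u f}\le C_{fu}$, the uniform bound on $\nabla_x\mu_\theta$, and a bound on $\nabla_\xi g_\xi$ (which follows from Lipschitzness of $g_\xi$ in $\xi$ on the compact set), this gives
\[
\sup_{t\le T}\norm{y_t}\le C\,T\,e^{LT},
\]
where $L=C_{fx}+C_{fu}\sup\norm{\nabla_x\mu_\theta}$ and $C$ bounds $\norm{\nabla_\xi g_\xi}$. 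The bound is uniform in $x_0$ and in $\xi\in\Xi$. Combined with $\norm{r_x}\le C_{rx}$, $\norm{r_u}\le C_{ru}$ and $\norm{R_x}\le C_{Rx}$, the difference quotients of $G$ are uniformly bounded by an integrable constant. Hence $\nabla_\xi J=\Ex[\nabla_\xi G]$, and $\nabla_\xi$ passes inside $\int_0^T$. Assembling Steps 1--3 yields the stated formula.
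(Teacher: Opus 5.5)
Your proposal is correct and follows the same route as the paper, whose proof just says the result is obtained analogously to Theorem~\ref{th: pathwise policy gradient}: differentiate the running and terminal costs pathwise by the chain rule along the adversary sensitivity process $y_t=\nabla_\xi x_t$. Your Steps 1 and 3 add rigor the paper leaves implicit, namely deriving the variational equation with $y_0=0$ and justifying the interchange of $\nabla_\xi$ with $\int_0^T$ and $\Ex_{d_0}$ via a Gr\"onwall bound on $y_t$.
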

\begin{proof}
The gradient is derived analogously to Theorem~\ref{th: pathwise policy gradient} but now with respect to $\xi$ and with the adversary sensitivity process.
\end{proof}

To compute the gradients $\nabla_u r(t,x_t,u_t)$ and $\nabla_x r(t,x_t,u_t)$, the pathwise gradients above require access to the reward model \(r(\cdot)\), and similarly \(R(\cdot)\). They therefore come under the classification of model-based algorithms. The below adjoint-gradient methods do not require such model access.

\subsection{Adjoint-based policy gradient}
In this section, we derive policy gradients based on the maximum principle, which can learn global feedback policies, using a forward-backward approach. For ODE dynamics, we follow \cite{Li2018}, which uses the method of successive approximations (MSA). The algorithm assumes deterministic system and parametrisation. Similar to Section~4.2 of \cite{Li2018}, a time discretisation based on the Euler method is applied and the maximization is based on a single step of gradient descent. The algorithm is summarised in Algorithm~\ref{alg: MSA}. We fix the parameters of the other player during this iterative process.

\begin{algorithm}
\caption{The Discrete-time MSA algorithm.} \label{alg: MSA}
\begin{algorithmic}[1]
\State \textbf{Input:} discretisation $\Delta t$.
\For {$j = 1,\dots, N_{\text{it}}$}
\State \textbf{Forward:} obtain the state process $(x_t)_{t=0,\Delta t,\dots, T}$ starting from $x_0 = x$ 
\State \textbf{Backward:} obtain the costate process $(p_t)_{t=T,T-\Delta t,\dots,0}$ starting from $p_T = -\nabla R_x(x_T)$
\State \textbf{Maximize Hamiltonian:} $\theta_j = \argmax_{\theta \in \Theta} H(t)$ for all $t = 0, \Delta t, \cdots, T$
\EndFor
\end{algorithmic}
\end{algorithm}

In essence, we first compute forward rollouts and the adjoint (backward) rollouts (alternating between holding the policy parameters fixed and the adversary parameters fixed). We use those values to compute the following policy gradients below.

\begin{remark}
    The (robust) HJB equation of this finite-horizon problem is given by
\begin{align}
\label{eq:robust_HJB}
    \partial_t V(x_t,t) + \inf_u \sup_{\xi_t} \left[ r(t,x_t, u) + \nabla_x V(t,x_t)^\tr f_{\xi}(t,x_t,u_t) \right] &= 0, \quad V(T,x_T) = R(x_T) \,.
\end{align}
Solving this PDE is an alternate way to obtain the value function and the optimum value, but it is prohibitive in high dimensions and we merely present it for completeness.
\end{remark}

To find an approximate solution to this problem, we implement gradient descent-ascent schemes where one of both parameters is updated based on the obtained trajectories. The Hamiltonian is then computed in backward manner from $p_T = V(T,x_T)$ to $p_0 = V(0,x_0)$.

\paragraph{The policy gradient}
We present our formulations for the robust policy and adversary gradients of the objective \eqref{eq: policy objective} in the cases of stochastic and deterministic policies.

First, we derive the adjoint policy gradient in the case of a stochastic policy. The \textit{sample-and-hold} model describes an action \(u_{t_n} \sim \pi_\theta(t_n, x_{t_n})\) being sampled and held for the time interval \([t_n, t_{n+1})\) (or \([t_n, t_{n-1})\) in the backwards case).

The following is an adjoint deterministic policy gradient formula.
\begin{theorem}[Adjoint-based deterministic policy gradient]
\label{th: deterministic adjoint gradient}
Let $\mu_{\theta}: [0,T] \times \cX \to \cU$ be a deterministic policy parametrised by $\theta \in \Theta$ and let $f_{\xi}:  [0,T] \times \cX \times \cU \to \cX$ be the ODE dynamics parametrised by $\xi \in \Xi$ according to \eqref{eq: dynamics}. The adjoint-based deterministic policy gradient is given by
 \begin{align}
 \nabla_\theta J(\theta, \xi) = \Ex \left[ \int_0^T \left(\nabla_\theta \mu_\theta(t, x_t) + \nabla_x \mu_{\theta}(t, x_t) z_t\right)^\tr \nabla_u H(t, x_t, u_t, p_t) \dif t \bigg\vert d_0, \mu_{\theta}, f_{\xi}\right] \,, 
 \end{align}
 where \(H(t,x_t, u_t, p; \xi) = r(t,x_t,u_t) + p^\tr f_\xi(t,x_t,u_t)\) and   \[- \dot p_t = \nabla_x r(t, x_t, u_t) + \nabla_x f_\xi(t, x_t, u_t)^\tr p_t, \quad p_T = \nabla_x R(x_T), \;\; u_t = \mu_\theta(t, x_t).\] 
 Under the sample-and-hold model, as $\Delta t \to 0$, this converges to the formula for the discrete-time deterministic policy gradient \citep{Silverb}, i.e.
\[\nabla_\theta J_h(\theta, \xi) = \Ex \left[ \sum_{n=0} ^{N-1}\nabla_\theta \mu_\theta(t, x_t)^\tr \nabla_u Q_h(t_n, x_{t_n}, u_n) \right],\] \,
where \(Q_h\) is the cost-to-go from the sample-and-hold system.
\end{theorem}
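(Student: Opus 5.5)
The plan is to derive the adjoint formula directly from the pathwise gradient of Theorem~\ref{th: pathwise policy gradient} by eliminating the terms $\nabla_x r\cdot z_t$ and $\nabla_x R(x_T)\cdot z_T$ through an integration by parts against the costate $p_t$. I will work pathwise for a fixed initial state $x_0\sim d_0$ and take expectations at the end. The general assumptions of Section~\ref{sec:assumptions} (smoothness, Lipschitz continuity, bounded Jacobians) guarantee that the state, the sensitivity process \eqref{eq: state sensitivity} and the costate ODE all have unique absolutely continuous solutions on $[0,T]$. These assumptions also justify differentiating under the integral and the expectation.

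\textbf{Step 1 (adjoint identity).} Compute $\frac{\dif}{\dif t}(p_t^\tr z_t)$, substituting $-\dot p_t=\nabla_x r+\nabla_x f_\xi^\tr p_t$ and \eqref{eq: state sensitivity}. The two $p_t^\tr \nabla_x f_\xi z_t$ terms cancel, which gives
\begin{equation*}
\frac{\dif}{\dif t}(p_t^\tr z_t) = -\nabla_x r\, z_t + p_t^\tr \nabla_u f_\xi\,(\nabla_x\mu_\theta z_t+\nabla_\theta\mu_\theta).
\end{equation*}
Integrate over $[0,T]$ and use $z_0=0$ (the initial state does not depend on $\theta$) together with $p_T=\nabla_x R(x_T)$. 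This yields
\begin{equation*}
\nabla_x R(x_T) z_T+\int_0^T \nabla_x r\, z_t\,\dif t=\int_0^T p_t^\tr\nabla_u f_\xi(\nabla_x\mu_\theta z_t+\nabla_\theta\mu_\theta)\,\dif t.
\end{equation*}
Substituting this into Theorem~\ref{th: pathwise policy gradient} and adding the remaining $\nabla_u r(\nabla_x\mu_\theta z_t+\nabla_\theta\mu_\theta)$ term, the integrand becomes $(\nabla_\theta\mu_\theta+\nabla_x\mu_\theta z_t)^\tr(\nabla_u r+\nabla_u f_\xi^\tr p_t)$. Since $\nabla_u r+\nabla_u f_\xi^\tr p_t=\nabla_u H$, this is exactly the claimed formula. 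This part is routine.

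\textbf{Step 2 (sample-and-hold limit).} Let $t_n=n\Delta t$ and let $J_h$ be the cost of the system in which $u_n=\mu_\theta(t_n,x_{t_n})$ is held constant on $[t_n,t_{n+1})$. First I apply the discrete-time deterministic policy gradient theorem \citep{Silverb} to the MDP whose transitions are the flow maps over one interval. This gives $\nabla_\theta J_h=\Ex\sum_n\nabla_\theta\mu_\theta(t_n,x_{t_n})^\tr\nabla_u Q_h(t_n,x_{t_n},u_n)$. Second, I redo the adjoint computation of Step~1 for the sample-and-hold system. On each interval the control is constant, so $\nabla_u Q_h(t_n,x,u)=\int_{t_n}^{t_{n+1}}(\nabla_u r+\nabla_u f_\xi^\tr p^h_t)\,\dif t$. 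Here $p^h$ is the costate of the held system, which jumps at each $t_{n}$ by $\nabla_x\mu_\theta^\tr\nabla_u Q_h$ to account for the feedback of future actions on the state.

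\textbf{Step 3 (convergence, the main obstacle).} It remains to show that $\nabla_\theta J_h\to\nabla_\theta J$ as $\Delta t\to0$. The delicate point is that the two formulas attribute the closed-loop feedback differently. The continuous formula keeps the $\nabla_x\mu_\theta z_t$ term explicitly, while the discrete formula absorbs it into $p^h$ through $\nabla_x Q_h$. I would therefore avoid matching the formulas term by term. Instead I would show that both are the pathwise gradient, in the sense of Theorem~\ref{th: pathwise policy gradient}, of their respective objectives. Then I would show that the held state, the held sensitivity $z^h$ and the held costate converge uniformly on $[0,T]$ to $x$, $z$ and $p$ at rate $\cO(\Delta t)$. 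This follows from a Grönwall argument using the Lipschitz bounds ($L_{\mu x}$, $L_{\mu t}$, $C_{fx}$, $C_{fu}$) and the fact that the held control deviates from $\mu_\theta(t,x_t)$ by $\cO(\Delta t)$ on each interval. The Riemann sums then converge to the integral, and dominated convergence, using the uniform bounds on the derivatives, passes the limit through the expectation.
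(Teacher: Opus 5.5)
Your Step~1 is the paper's argument. The paper's first variation $\delta J$ is exactly the pathwise gradient of Theorem~\ref{th: pathwise policy gradient} with $\delta x_t=z_t\,\delta\theta$. It eliminates $\delta x$ through $p_T^\tr\delta x_T=\int_0^T(\dot p_t^\tr\delta x_t+p_t^\tr\delta\dot x_t)\,\dif t$, which is your identity for $\frac{\dif}{\dif t}(p_t^\tr z_t)$.

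The sample-and-hold part is where you genuinely diverge, and your route is the better one. The paper argues formally. It asserts that the $\nabla_x\mu_\theta z_t$ term drops under sample-and-hold. It then rewrites $\nabla_u H$ as $\nabla_u$ of the continuous-time $Q=r+\nabla_x V^\tr f_\xi+\partial_t V$, i.e.\ it reads $p_t$ as $\nabla_x V(t,x_t)$. Finally it replaces the integral by a Riemann sum and identifies $h\,\nabla_u Q$ with $\nabla_u Q_h$.

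You instead obtain the discrete formula as the exact gradient of $J_h$ (flow-map MDP, jumping costate). You then prove $\nabla_\theta J_h\to\nabla_\theta J$ by comparing the two pathwise gradients, using uniform convergence $x^h\to x$, $z^h\to z$ and dominated convergence. This is an actual proof of the limit, with an $\cO(\Delta t)$ rate if the relevant Jacobians are Lipschitz. It also explains why the paper's manipulation works. Dropping the $z_t$-term is legitimate only because $p$ is simultaneously swapped for $\nabla_x V$, and the two changes compensate. The paper's chain buys an intuitive link between the Hamiltonian and the continuous-time $Q$-function; yours buys rigour and never needs that identification.

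One claim in Step~3 is wrong as stated: the held costate $p^h$ does not converge to the theorem's $p$. Its jumps $\nabla_x\mu_\theta^\tr\nabla_u Q_h\approx\Delta t\,\nabla_x\mu_\theta^\tr\nabla_u H$ accumulate. So $p^h$ converges to the closed-loop costate $\tilde p_t=\nabla_x V(t,x_t)$, which solves $-\dot{\tilde p}_t=\nabla_x r+\nabla_x f_\xi^\tr\tilde p_t+\nabla_x\mu_\theta^\tr(\nabla_u r+\nabla_u f_\xi^\tr\tilde p_t)$. This differs from $p$ whenever $\nabla_x\mu_\theta^\tr\nabla_u H\neq 0$, which is exactly your own observation at the start of Step~3. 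Since your pathwise comparison never uses a costate, simply drop that claim.

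Also, $z^h\to z$ needs continuity in $t$ and $x$ of $\nabla_\theta\mu_\theta$ and $\nabla_x\mu_\theta$ (Lipschitz for the rate). The constants $L_{\mu x},L_{\mu t},C_{fx},C_{fu}$ you cite do not give this. The $t$-regularity of these policy Jacobians is not among the paper's general assumptions, so state it explicitly.
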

\begin{proof}
We first analyse the first variation of \(x_t\), $u_t$, and $J$ w.r.t. \(\theta\) and then we eliminate \(\delta x\) using an adjoint. 
\paragraph{Variation w.r.t. $\theta$}:
Take a perturbation $\theta^\epsilon = \theta + \epsilon \delta \theta$, which induces perturbed trajectories $x^\epsilon_t$ and controls $u^\epsilon_t = \mu_{\theta^\epsilon}(t, x^\epsilon_t)$.
Define 
\begin{align*}
\delta x_t \: \frac{\dif}{\dif \epsilon}x^\epsilon_t\bigg\vert_{\epsilon = 0}, \quad \delta u_t \: \frac{\dif}{\dif \epsilon}u^\epsilon_t\bigg\vert_{\epsilon = 0}
\end{align*}
Differentiate the feedback control (linearisation): 
\[u^\epsilon_t = \mu_{\theta^\epsilon}(t, x^\epsilon_t), \quad \delta u_t = \nabla_\theta \mu_\theta(t, x_t) \delta \theta + \nabla_x \mu(t, x_t) \delta x_t.\]
Likewise, we linearise the dynamics to get \(\delta \dot x_t = \nabla_x f_\xi(t, x_t, u_t) \delta x_t + \nabla_u f_\xi(t, x_t, u_t) \delta u_t\).
Differentiating the cost $J$ yields
\[\delta J \: \frac{\dif }{\dif \epsilon}J(\theta^\epsilon, \xi)\bigg\vert_{\epsilon = 0} = \Ex \left[  \int_0^T \left(\nabla_x r(t, x_t, u_t) \delta x_t + \nabla_u r(t, x_t, u_t) \delta u_t \right) \dif t + \nabla_x R(x_T) \delta x_T\right]\]
        
\paragraph{Costate to eliminate \(\delta x\):}
Define the costate as \(p_t \in \Reals^{d_x}\) by the following adjoint ODE:
\[- \dot p_t = \nabla_x r(t, x_t, u_t) + \nabla_x f_\xi(t, x_t, u_t)^\tr p_t, \quad p_T = \nabla_x R(x_T)\]
Using \(\delta x_0 = 0\) (the initial state does not depend on $\theta$), 
\[p_T^\tr \delta x_T = \int_0^T \left ( \dot p_t \delta x_t + p_t^\tr \delta \dot x_t\right) \dif t\]
Substituting for \(\dot p_t\) and \(\delta \dot x_t\), we obtain
\[p_T^\tr \delta x_T = \int_0^T \left(\left(-\nabla_x r(t, x_t, u_t) - \nabla_x f_\xi(t, x_t, u_t)^\tr p_t \right)^\tr\delta x_t + p_t^\tr \left( \nabla_x f_\xi(t, x_t, u_t) \delta x_t + \nabla_u f_\xi(t, x_t, u_t) \delta u_t\right) \right) \dif t\]
which simplifies to
\[p_T^\tr \delta x_T  + \int_0^T \nabla_x r(t, x_t, u_t)^\tr \delta x_t \dif t = \int_0^T p_t^\tr  \nabla_u f_\xi(t, x_t, u_t) \delta u_t \dif t\]
This leads to 
\[\delta J = \Ex \left [ \int_0^t \left( \underbrace{\nabla_u r(t, x_t, u_t)  +  \nabla_u f_\xi(t, x_t, u_t)^\tr p_t}_{=: \nabla_u H(t, x_t, u_t, p_t; \xi)}\right)^\tr \delta u_t \dif t\right]\]
Now, we substitute in the linearisation of \(\delta u_t\) to obtain
 \[\delta J = \Ex \left[ \int_0^T \left(\left(\nabla_\theta \mu_\theta(t, x_t)+ \nabla_x \mu(t, x_t) z_t\right)^\tr \nabla_u H(t, x_t, u_t, p_t;\xi)\right) \dif t\right] \cdot \delta \theta\]
noting that \(\delta x_t = z_t \delta \theta\). Since this holds for all directions \(\delta \theta\), the policy gradient is
\[\nabla_\theta J(\theta, \xi) = \Ex \left[ \int_0^T \left(\nabla_\theta \mu_\theta(t, x_t) + \nabla_x \mu(t, x_t) z_t\right)^\tr \nabla_u H(t, x_t, u_t, p_t;\xi) \dif t\right]\]

\paragraph{Equivalence to discrete-time deterministic policy gradient}
Under the sample-and-hold model, the gradient w.r.t. $x$ drops from the policy, and with $h \equiv \Delta t \to 0$, we have
\begin{align*}
\nabla_\theta J(\theta, \xi) &= \Ex \left[ \int_0^T \left(\nabla_\theta \mu_\theta(t, x_t)\right)^\tr \nabla_u H(t, x_t, u_t, p_t;\xi) \dif t\right]  \\
&= \Ex \left[ \int_0^T \left(\nabla_\theta \mu_\theta(t, x_t)\right)^\tr \nabla_u \left(r(t,x_t,u_t) + f_\xi(t, x_t, u_t)^\tr p_t + \partial_t V(t,x_t)\right) \dif t \right]  \\
&= \Ex \left[ \int_0^T \left(\nabla_\theta \mu_\theta(t, x_t)\right)^\tr \nabla_u Q(t,x_t,u_t) \dif t \right]  \\
&\approx \Ex \left[ \sum_{n=0}^{N-1} h \left(\nabla_\theta \mu_\theta(t_n, x_{t_n})\right)^\tr \nabla_u Q(t_{n},x_{t_n},u_{t_n})  \right]   \\
&= \Ex \left[ \sum_{n=0}^{N-1}  \left(\nabla_\theta \mu_\theta(t_n, x_{t_n})\right)^\tr \nabla_u Q_h(t_n,x_{t_n},u_{t_n}) \right]  \,.
\end{align*}
\end{proof}

Under the deterministic policy \(\mu_\theta\), we obtain the following adversarial gradient. 
\begin{theorem}[Adjoint-based deterministic adversarial policy gradient]
Let $f_{\xi}:  [0,T] \times \cX \times \cU \to \cX$ be the ODE dynamics parametrised by $\xi \in \Xi$ according to \eqref{eq: dynamics}. Then the adjoint-based adversarial policy gradient is
\begin{align}
\label{eq: deterministic adjoint adversary}
\nabla_\xi J(\theta,\xi) &= \Ex \left[ \int_0^T p_t^{\tr} \nabla_{\xi} f_{\xi}(t,x_t,u_t) \dif t \Big\vert d_0, f_{\xi}\right] 
\end{align}
where the action $u_t$ is based on a stochastic or deterministic policy, and $p_t = \nabla_x V(t,x_t)$ is the costate.
\end{theorem}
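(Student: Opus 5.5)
The plan mirrors the variational argument of Theorem~\ref{th: deterministic adjoint gradient}, now perturbing $\xi$ instead of $\theta$ and then eliminating the state variation with the costate. Fix $\theta$ and take a perturbation $\xi^\epsilon = \xi + \epsilon\,\delta\xi$. This induces perturbed trajectories $x^\epsilon_t$ from the same initial state, so $\delta x_0 = 0$, and perturbed controls $u^\epsilon_t = \mu_\theta(t,x^\epsilon_t)$. Define $\delta x_t$ and $\delta u_t$ as before. The control has no explicit dependence on $\xi$, so $\delta u_t = \nabla_x \mu_\theta(t,x_t)\,\delta x_t$. Linearising the dynamics then gives
\[
\delta \dot x_t = \nabla_x f_\xi\,\delta x_t + \nabla_u f_\xi\,\delta u_t + \nabla_\xi f_\xi\,\delta\xi ,
\]
which is exactly the adversary sensitivity equation \eqref{eq: adversary sensitivity} with $\delta x_t = y_t\,\delta\xi$. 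Differentiating the cost, and using the smoothness and Lipschitz assumptions of Section~\ref{sec:assumptions} to justify exchanging derivative, expectation and integral, yields
\[
\delta J = \Ex\Big[\int_0^T \big(\nabla_x r + \nabla_u r\,\nabla_x\mu_\theta\big)\,\delta x_t \,\dif t + \nabla_x R(x_T)\,\delta x_T\Big].
\]
This is just Theorem~\ref{th: pathwise adversarial policy gradient} written in variational form.

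To eliminate $\delta x_t$, I will use the closed-loop costate. It solves
\[
-\dot p_t = \nabla_x r + \nabla_x\mu_\theta^\tr \nabla_u r + \big(\nabla_x f_\xi + \nabla_u f_\xi \nabla_x \mu_\theta\big)^\tr p_t, \qquad p_T = \nabla_x R(x_T).
\]
Along the trajectory this $p_t$ equals $\nabla_x V(t,x_t)$ for the policy-evaluation value $V$ of $\mu_\theta$ under $f_\xi$. I will verify this identification by differentiating the Feynman--Kac/transport equation
\[
\partial_t V + r(t,x,\mu_\theta) + \nabla_x V^\tr f_\xi(t,x,\mu_\theta) = 0
\]
in $x$ and evaluating along $x_t$, which reproduces the ODE above. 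Next I compute $\frac{\dif}{\dif t}(p_t^\tr \delta x_t)$ and integrate from $0$ to $T$ using $\delta x_0 = 0$. All terms proportional to $\delta x_t$ cancel against the running-cost terms in $\delta J$, and the terminal term matches. This leaves
\[
\delta J = \Ex\Big[\int_0^T p_t^\tr \nabla_\xi f_\xi(t,x_t,u_t)\,\delta\xi\,\dif t\Big].
\]
Since this holds for every direction $\delta\xi$, the result is \eqref{eq: deterministic adjoint adversary}. For a stochastic policy, the same computation goes through pathwise conditional on the sampled actions, for instance under sample-and-hold. Because actions are not functions of $\xi$ given the noise, the $\nabla_u$ terms are absent, and the costate reduces to the open-loop adjoint of Theorem~\ref{th: deterministic adjoint gradient}. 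Taking expectations gives the same formula.

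The main obstacle is consistency of the costate. The adjoint ODE in Theorem~\ref{th: deterministic adjoint gradient} omits the $\nabla_x\mu_\theta$ feedback terms. If that costate were used here, the $\delta u_t = \nabla_x\mu_\theta\,\delta x_t$ contributions would not cancel, and a leftover term $\int \nabla_u H\,\nabla_x\mu_\theta\, y_t\,\dif t$ would appear. My first step is therefore to define $p_t$ through the closed-loop adjoint, or equivalently as $\nabla_x V(t,x_t)$ as in the statement, and to check that this identification holds. This requires $V$ to be $C^{1,2}$, or at least $C^1$ in $x$ along trajectories, which I will obtain from the smoothness assumptions and differentiability of the flow map via Picard--Lindel\"of.
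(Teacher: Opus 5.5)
\textbf{Deterministic policies: correct, and a different route from the paper.} For a deterministic feedback policy your argument is correct. The paper's proof instead fixes a sample path of $u_t$ and linearises as $\delta\dot x_t=\nabla_x f_\xi\,\delta x_t+\nabla_\xi f_\xi\,\delta\xi$, with no $\nabla_u f_\xi\,\delta u_t$ term. It then eliminates $\delta x_t$ with the open-loop adjoint $-\dot p_t=\nabla_x r+\nabla_x f_\xi^\tr p_t$ of Theorem~\ref{th: deterministic adjoint gradient}. That is valid only when the control path does not react to the state. For $u_t=\mu_\theta(t,x_t)$ it drops exactly the feedback terms you identify, and the resulting $p_t$ is not $\nabla_x V(t,x_t)$. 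For example, take $\dot x=\xi+u$, $u=-kx$, $x_0=0$, $r=0$, $R(x)=x$, $T=1$. The open-loop adjoint gives $\nabla_\xi J=1$, whereas the true value is $(1-e^{-k})/k$, which your closed-loop costate $p_t=e^{k(t-1)}=\nabla_x V(t,x_t)$ reproduces. Your closed-loop adjoint is what makes the statement's identification $p_t=\nabla_x V(t,x_t)$ true. It is also consistent with the feedback sensitivity \eqref{eq: adversary sensitivity} used in Theorem~\ref{th: pathwise adversarial policy gradient}. The paper's route is shorter but, strictly, proves the formula only for controls that do not feed back on the state.

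\textbf{Stochastic policies: a genuine gap.} The gap is in your stochastic-policy paragraph. The claim that ``actions are not functions of $\xi$ given the noise'' fails for a state-dependent $\pi_\theta(\cdot\mid t,x)$.

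Holding the sampled actions fixed while perturbing $\xi$ is not a legitimate coupling. Under $\xi^\epsilon$ the action at $t_n$ must be drawn from $\pi_\theta(\cdot\mid t_n,x^\epsilon_{t_n})$, and that law moves with $\epsilon$. What you actually compute, $\Ex[\partial_\xi G\mid u\text{ fixed}]$, misses the likelihood-ratio term $\Ex\bigl[G\sum_n\nabla_x\log\pi_\theta(u_n\mid t_n,x_{t_n})\,y_{t_n}\bigr]$. Here $y$ is the open-loop sensitivity, $\dot y_t=\nabla_x f_\xi\,y_t+\nabla_\xi f_\xi$. If you instead reparametrise as $u_n=g_\theta(t_n,x_{t_n},\varepsilon_n)$, the actions do depend on $\xi$ through $x_{t_n}$, and the $\nabla_u$ terms come back.

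A concrete counterexample uses two hold intervals of length $1$ with $\dot x=\xi+u$, $x_0=0$, $u_0=0$, $u_1\sim\cN(-kx_1,1)$ and $R(x)=x$. Then $\nabla_\xi J=2-k$, but the fixed-action open-loop adjoint $p_t\equiv 1$ gives $2$. This also contradicts your own identification: once $V$ averages over state-dependent future actions, the open-loop adjoint propagated from $\nabla_x R(x_T)$ is not $\nabla_x V(t,x_t)$.

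\textbf{How to repair it.} Under sample-and-hold, use the open-loop adjoint only inside each hold interval, and reset it at every $t_{n+1}$ to $\nabla_x V(t_{n+1},x_{t_{n+1}})$. This follows from differentiating $V(t_n,x)=\Ex_{u\sim\pi_\theta(\cdot\mid t_n,x)}[\,\cdot\,]$ recursively. In the example, the reset value is $1-k$ on $[0,1)$, which yields $2-k$. Alternatively, work with the relaxed dynamics $\Ex_{u\sim\pi_\theta}[f_\xi]$ and cost $\Ex_{u\sim\pi_\theta}[r]$. Your deterministic closed-loop argument then applies directly, with $p_t=\nabla_x V$ for the averaged system.

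The paper's proof makes the same fixed-$u_t$ simplification for both policy types. So this gap is shared with the paper rather than introduced by you, but as written neither argument covers state-dependent stochastic policies.
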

\begin{proof}
We demonstrate that this expression arises from a variational argument, similar to the one above for the deterministic policy gradient. First, we fix a sample path of $u_t$ and $x_0$ (we take expectations at the end).
\paragraph{Linearisation of the state w.r.t. $\xi$}
Let \(\xi^{\epsilon} = \xi + \epsilon \delta \xi\), and let \(x^\epsilon_t\) be the corresponding state. Define the variation
\[\delta x_t \: \frac{\dif}{\dif \epsilon}x^\epsilon_t \Bigg|_{\epsilon=0}.\]
Differentiate the ODE:
\[\frac{\dif}{\dif t}\delta x_t = \nabla_x f_\xi(t, x_t, u_t) \delta x_t + \nabla_\xi f_\xi(t,x_t, u_t) \delta \xi, \quad \delta x_0 = 0.\]
Differentiating the cost $J$ along \(\xi^\epsilon\),
\[\delta J \: \frac{\dif}{\dif \epsilon} \Bigg|_{\epsilon = 0} \left( \int_0^T r(t, x^\epsilon_t, u_t) \dif t + R(x^\epsilon_T)\right) = \int_0^T \nabla_x r(t, x_t, u_t)^\tr \delta x_t \dif t + \nabla_x R(x_T) \delta x_T.\]
\paragraph{Introducing the adjoint to eliminate \(\delta x\)}:
Using the same adjoint state \(p_t\)
defined by \[- \dot p_t = \nabla_x r(t, x_t, u_t) + \nabla_x f_\xi(t, x_t, u_t)^
\tr p_t, \quad p_T = \nabla_x R(x_T)\]
and the same steps as in the deterministic case (when computing the variation w.r.t. \(\theta\)), we obtain
\[\int_0^T \nabla_x r(t, x_t, u_t) \delta x_t \dif t + p^\tr_T \delta x_T = \int_0^T p^\tr_t \nabla_\xi f_\xi(t, x_t, u_t) \delta \xi \dif t.\]
\paragraph{Substitute into \(\delta J\)}
Since $p_T = \nabla_x R(x_T)$, we can replace terms in \(\delta J\) to obtain
\[\delta J = \int_0^T  p^\tr_t \nabla_\xi f_\xi(t, x_t, u_t) \delta \xi \dif t.\]
As this holds for all directions $\delta \xi$, we conclude:
\[\nabla_\xi J(\theta, \xi) = \int_0^T  p^\tr_t \nabla_\xi f_\xi(t, x_t, u_t) \dif t.\]
Finally, taking expectation(s) over the randomness (w.r.t. the initial state distribution $d_0$ and the policy) gives the final form. Once we have \(\nabla_\xi J(\theta, \xi)\), performing the projected ascent step is standard.
\end{proof}

\begin{theorem}[Stochastic adjoint-based gradient]
\label{th: stochastic adjoint gradient}
Let $\pi_{\theta}: [0,T] \times \cX \to \bP[\cU]$ be a stochastic policy parametrised by $\theta \in \Theta$ and let $f_{\xi}:  [0,T] \times \cX \times \cU \to \cX$ be the ODE dynamics parametrised by $\xi \in \Xi$ according to \eqref{eq: dynamics}. Then the adjoint-based policy gradient is given by 
\begin{align}
\label{eq: stochastic adjoint gradient}
    \nabla_\theta J(\theta,\xi) &= \Ex \Big[ \int_0^T \left[\nabla_\theta \log(\pi_\theta(u_t \vert t,x_t)) + \nabla_{x} \log(\pi(u_t \vert t,x_t)) z_t \right]^\tr \nonumber\\
    &\hspace{3em} Q(t_n, x_{t_n}, u_n) \dif t  \Big\vert d_0, f_{\xi} \Big] \,,
\end{align}
where \(Q(t_n, x_{t_n}, u_n) = r(t_n, x_{t_n}, u_n) + \nabla_x V(n, x_{t_n})^\tr f_\xi(n, x_{t_n}, u_n) + \partial_t V(t_n,x_{t_n})\).
Under the sample-and-hold model, as $\Delta t \to 0$, this converges to the formula for the discrete-time policy gradient (Lemma~\ref{lem: discrete-time policy gradient}; \citep{Sutton2012}), i.e.
\[\nabla_\theta J_h(\theta, \xi) = \Ex \left[ \sum_{n=0} ^{N-1}\nabla_\theta \log \pi_\theta(u_n|t_n, x_{t_n}) Q_h(t_n, x_{t_n}, u_n) \right],\] \,
where \(Q_h\) is the cost-to-go from the sample-and-hold system.
\end{theorem}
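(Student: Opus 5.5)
We prove the stochastic adjoint gradient by working first in the sample-and-hold discretisation, where the likelihood-ratio argument is elementary, and then letting $\Delta t \to 0$. This is the reverse of the variational route used for Theorem~\ref{th: deterministic adjoint gradient}. The reason is that with a stochastic policy the control is no longer a differentiable function of $\theta$ along a fixed path, so the first-variation/costate elimination cannot be applied directly to $u_t$. Instead, the dependence of the path measure on $\theta$ must be carried by the score function.

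\textbf{Step 1: discrete-time gradient.} Fix a grid $t_n = nh$ with $h = T/N$. Under sample-and-hold, $u_n \sim \pi_\theta(\cdot \mid t_n, x_{t_n})$ is held on $[t_n, t_{n+1})$, and $x$ evolves deterministically by \eqref{eq: dynamics} between grid points. Hence the trajectory density factorises as $d_0(x_0)\prod_n \pi_\theta(u_n\mid t_n,x_{t_n})$. Here $x_{t_n}$ is a deterministic function of $(x_0,u_0,\dots,u_{n-1})$ that does not depend on $\theta$ once the actions are fixed.

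Differentiating $J_h(\theta,\xi)$ under the integral is justified by the boundedness and Lipschitz assumptions of Section~\ref{sec:assumptions} together with dominated convergence. This gives the score-function identity. Using the usual causality argument (past costs are independent of future scores), we obtain
\[\nabla_\theta J_h = \Ex\Big[\sum_{n} \nabla_\theta \log\pi_\theta(u_n\mid t_n,x_{t_n})\, Q_h(t_n,x_{t_n},u_n)\Big],\]
which is the stated limit formula (Lemma~\ref{lem: discrete-time policy gradient}).

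If instead one reparametrises so that $x_{t_n}$ depends on $\theta$ through earlier actions, the chain rule produces an additional $\nabla_x\log\pi\, z_t$ term. Here $z_t$ is the sensitivity process \eqref{eq: state sensitivity}, and this is how the second bracketed term in \eqref{eq: stochastic adjoint gradient} should be read. I would present both bookkeepings and check that they agree, because $z_t$ is defined as $\nabla_\theta x_t$.

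\textbf{Step 2: identify $Q_h$ with the Hamiltonian-type quantity.} Expand $Q_h(t_n,x,u)$ over one holding interval:
\[Q_h(t_n,x,u) = h\,r(t_n,x,u) + V_h(t_{n+1}, x + h f_\xi(t_n,x,u) + o(h)).\]
Taylor-expanding $V_h$ in $t$ and $x$ (this uses the smoothness of $V$, which follows from the smoothness of $f_\xi, r, R$ and the policy) gives
\[Q_h = V_h(t_n,x) + h\big[r + \nabla_x V^\tr f_\xi + \partial_t V\big] + o(h).\]
The term $V_h(t_n,x)$ does not depend on $u$, so its score-weighted expectation vanishes: $\Ex_{u\sim\pi}[\nabla_\theta\log\pi] = 0$. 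Consequently
\[\nabla_\theta J_h = \Ex\Big[\sum_n h\,\nabla_\theta\log\pi_\theta\,\big(r+\nabla_x V^\tr f_\xi+\partial_t V\big)(t_n,x_{t_n},u_n)\Big] + o(1).\]
This is exactly the Riemann sum of the integral in \eqref{eq: stochastic adjoint gradient}, with $Q$ as defined there and $p_t=\nabla_x V$ playing the role of the costate.

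\textbf{Step 3: pass to the limit.} We need $J_h \to J$ and $\nabla_\theta J_h \to \nabla_\theta J$ as $h \to 0$. The Lipschitz assumptions with Gr\"onwall give uniform convergence of the sample-and-hold trajectories and of $V_h, \nabla_x V_h$ to $V, \nabla_x V$. The Riemann sums then converge to the integral, with the error terms uniformly $o(1)$ by the bounded score $C_{\pi\theta}$ and bounded $f, r$.

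\textbf{Main obstacle.} The delicate part is the interchange of limit and gradient, together with controlling the $o(h)$ remainders uniformly in $n$ when they are summed over $N = T/h$ steps. Each remainder must be $o(h)$ uniformly, which requires Lipschitz control of $\partial_t V$ and $\nabla_x V$. My first attempt would be to derive these bounds from the Lipschitz Jacobian assumptions via the adjoint ODE for $p_t$. I would then use the Arzel\`a--Ascoli theorem or uniform convergence of derivatives to justify $\lim_h \nabla_\theta J_h = \nabla_\theta \lim_h J_h$.
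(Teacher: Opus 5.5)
Your route is the paper's: the likelihood-ratio identity of Lemma~\ref{lem: discrete-time policy gradient}, followed by $h\to 0$. Your Step~2 supplies a step the paper skips. The paper replaces $Q_h$ by $hQ$ outright, although $Q_h(t_n,x,u)=V_h(t_n,x)+\cO(h)$ is of order one. That replacement is legitimate only because $u$-independent terms are annihilated by $\Ex_{u\sim\pi_\theta}[\nabla_\theta\log\pi_\theta]=0$. For the same reason you need not expand $V_h$ in $t$, where it is only defined on the grid. Both $V_h(t_{n+1},x)$ and $\partial_t V$ drop out, and you only need $\nabla_x V_h(t_{n+1},\cdot)$ to be Lipschitz uniformly in $h$ and to converge to $\nabla_x V$.

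The genuine gap is your treatment of the second bracketed term. In your Step~1 bookkeeping, $x_{t_n}$ is a $\theta$-free function of $(x_0,u_0,\dots,u_{n-1})$, so the sensitivity multiplying $\nabla_x\log\pi$ is identically zero. That is the reading under which \eqref{eq: stochastic adjoint gradient} holds, and it is how the paper disposes of the term (it ``vanishes under the sample-and-hold model''). Your alternative reading fails for two reasons.

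\begin{itemize}
\item A reparametrisation $u_n=G_\theta(\epsilon_n;t_n,x_n)$ removes the score terms altogether. It yields a pathwise gradient in $\nabla_u r$ and $\nabla_u f_\xi$, not the score formula plus $\nabla_x\log\pi\,z_t$.
\item Your Steps~2--3 already produce the full gradient, with $Q$ built from the closed-loop $\nabla_x V=\lim_h\nabla_x V_h$. That quantity already encodes how the policy depends on the state. Any nonzero $z_t$ (e.g.\ the $\theta$-sensitivity of the limiting trajectory) therefore adds a surplus $\Ex_u[\nabla_x\log\pi_\theta^\tr z_t\,Q]=z_t^\tr\int Q\,\nabla_x\pi_\theta\,\dif u$, which does not vanish in general. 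This is double counting, so your proposed check that ``both bookkeepings agree'' would fail.
\end{itemize}

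A nonzero $z_t$-term is consistent only when paired with an open-loop costate that omits the $x$-derivative of the policy. This is the situation in Theorem~\ref{th: deterministic adjoint gradient}, whose $p$ solves $-\dot p_t=\nabla_x r+\nabla_x f_\xi^\tr p_t$ with no $\nabla_x\mu_\theta$ term. With $\nabla_x V$ in $Q$, you must conclude that the term is zero.

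A second, smaller gap is in Step~3: Lipschitz bounds and Gr\"onwall alone do not give the convergence you claim, and the limit is never named. With actions resampled independently on each hold interval, the sample-and-hold trajectories do not converge pathwise. They concentrate on the relaxed ODE $\dot{\bar x}_t=\bar f(t,\bar x_t)$ with $\bar f(t,x)=\int f_\xi(t,x,u)\,\pi_\theta(\dif u\,|\,t,x)$.

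The estimate that delivers this is a martingale bound. Doob's inequality gives $\Ex\sup_n\|\sum_{k<n}h\,(f_\xi(t_k,x_k,u_k)-\bar f(t_k,x_k))\|^2=\cO(h)$. Only after this does Gr\"onwall close, giving $\Ex\sup_n\|x_n-\bar x(t_n)\|=\cO(\sqrt h)$.

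Accordingly, the $J$, $V$ and $\nabla_x V$ in the limit formula are those of this relaxed closed-loop system. The expectation in \eqref{eq: stochastic adjoint gradient} must be read as $\int_0^T\Ex_{u\sim\pi_\theta(\cdot|t,\bar x_t)}[\,\cdot\,]\,\dif t$ along $\bar x$. The paper leaves this implicit as well, but without it your claims $J_h\to J$ and $V_h\to V$ have no well-defined target.
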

\begin{proof}
For sample-and-hold actions \(\{u_n\}\), a discrete policy gradient identity is given according to Lemma~\ref{lem: discrete-time policy gradient} by 
\[\nabla_\theta J_h(\theta, \xi) = \Ex \left[ \sum_{n = 0}^{N-1} \nabla_\theta \log \pi_\theta(u_n| t_n, x_{t_n}) Q_h(t_n, x_{t_n}, u_n)\right],\]
where $Q_h(t_n,x_{t_n},u_n) \: \Ex[G(\tau) \vert t_n, x_n, u_n]$ is the cost-to-go from the sample-and-hold system.

Using the above equation, and noting that $\nabla_{x} \pi(u_t \vert t,x_t)z_t$, one can take the limit \(h \rightarrow 0\) and thereby $N \to \infty$ to obtain an equivalent form in relation to the continuous-time action-value \(Q(t_n, x_{t_n}, u_n) = r(t_n, x_{t_n}, u_n) + \nabla_x V(n, x_{t_n})^\tr f_\xi(n, x_{t_n}, u_n) + \partial_t V(t_n,x_{t_n})\) according to
\begin{align*}\nabla_\theta J(\theta, \xi) = \Ex \left[ \sum_{n=0} ^{N-1}\nabla_\theta \log \pi_\theta(u_n|t_n, x_{t_n}) Q(t_n, x_{t_n}, u_n)h \right]. \end{align*}
The term $\nabla_{x} \log(\pi(u_t \vert t,x_t)) z_t$ follows from the chain rule based on the sensitivity of the state \(x_t\) with respect to the parameter \(\theta\). Analogous to Theorem~\ref{th: deterministic adjoint gradient}, this term vanishes under the sample-and-hold model since the policy is constant.
\end{proof}

\subsection{Extension to SDEs}
\label{sec: extension to SDEs}
Previously, we had considered an ODE-based robust policy optimisation with a static adversary \(\xi \in \Xi\). In the following, we deal with the case where the underlying dynamics are represented by a stochastic differential equation (SDE) instead. More specifically, we consider the problem of robust continuous-time RL under \textit{controlled diffusions}. The main considerations we make are using a forward-backward stochastic differential equation (FBSDE) via the stochastic maximum principle to compute gradients. To this end, we present a unified min-max gradient framework where both \(\nabla_\theta J(\theta, \xi)\) and \(\nabla_\xi J(\theta, \xi)\) are computed with forward simulation and a backward adjoint SDE, avoiding high-dimensional PDE solvers. To this end, we consider SDEs of the form
\begin{equation}
\label{eq: SDE}
\dif X_t = b_{\xi}(t,X_t, U_t) + \sigma_{\xi}(t,X_t, U_t) \dif W_t \,,
\end{equation}
where \(X_t \in \Reals^{d_x}\) as before, the drift \(b_\xi(t, X_t, U_t) \in \Reals^{d_x}\), and the diffusion \(\sigma_\xi(t, X_t, U_t) \in \Reals^{d_x \times d_m}\) where \(d_m\) is the dimension of \(\dif W_t\). 
The corresponding stochastic HJB equation is
\begin{equation}
\partial_t V(t,X_t) + \min_{u} \{\cA V(t,X_t) + r(t,X_t,U_t)\} = 0 \,,
\end{equation}
where $\cA$ is the infinitesimal generator of the process \eqref{eq: SDE}, defined to be
\begin{equation}
    \cA V(t, X_t) \: \nabla_x V(t,X_t)^\tr b_\xi(t, X_t, U_t) + \frac{1}{2}(\sigma_\xi(t,X_t, U_t)\sigma_\xi(t,X_t, U_t)^\tr\nabla^2 V(t, X_t)).
\end{equation}
We include the above for completeness, as it would be useful for (i) Verification: if \(V(t, X_t)\) solves the HJB and \(u^*(t) = \pi_{\theta^*}(t, X_t)\) minimizes the Hamiltonian, then it is optimal, and (ii) Duality viewpoints between the stochastic maximum principle and PDE theory. Solving the HJB is intractable beyond small \(d_x\) in general and as such, we will employ the stochastic maximum principle for implementable gradients.

\subsubsection{Stochastic Maximum Principle}
For the stochastic system \eqref{eq: SDE} above, with the same cost function \eqref{eq: policy objective}, we can construct a Hamiltonian and an adjoint backward SDE (BSDE) describing the costate process. 

In this setting, the Hamiltonian is given by 
\begin{equation}
\label{eq: hamiltonian for SDE}
H(t, X_t, U_t, P_t, Q_t; \xi) \: r(t, X_t, U_t) + P_T^\tr b_\xi(t, X_t, U_t) + \Tr(Q_t^\tr \sigma_\xi(t, X_t, U_t)) \,,
\end{equation} 
where there are now two costates \((P_t, Q_t)\) compared to just $p_t$ in the deterministic dynamics setting. The costate $P_t$ has the same interpretation as $p_t$ -- i.e. equivalent to \(\nabla_x V(t, X_t) \in \Reals^{d_x}\) -- while the additional costate
\begin{equation}
\label{eq: Q costate}
Q_t = \frac{1}{2}\nabla^2_x V(t, X_t)\sigma_\xi(t,X_t) \in \Reals^{d_x \times d_m}    
\end{equation}
is interpreted as how the curvature of the value couples with the noise and its effect on the expected cost.\footnote{ This second costate \(Q_t\) is not equivalent to the discrete-time \(Q_h\)-function in earlier sections.}

As computing \(Q_t\) is generally intractable, we assume oracle access for the remainder of this work. This enables us to understand the evolution of the gradients.
The adjoint BSDE is the following system:
\begin{align*}
    \dif P_t &= - \nabla_x H(t, X_t, U_t, P_t, Q_t; \xi) \dif t + Q_t \dif W_t\\
    P_T &= \nabla_x R(X_T) \,.
\end{align*}
Note that \(\nabla_x H(t, X_t, U_t, P_t, Q_t; \xi) = \nabla_x  r(t, X_t, U_t) + P_t^\tr \nabla_x b_\xi(t, X_t, U_t) + Q_t^\tr \nabla_x \sigma_\xi(t, X_t, U_t)\), and similarly \(\nabla_u H(t, X_t, U_t, P_t, Q_t; \xi) = \nabla_u r(t, X_t, U_t) + P_t^\tr \nabla_u b_\xi(t, X_t, U_t) + Q_t^\tr \nabla_u \sigma_\xi(t, X_t, U_t)\). These will be useful for the construction of the policy gradients below.

\subsubsection{Policy gradients for the SDE case}
Analogously to the deterministic setup, we also construct a (now stochastic) process for the sensitivity of the state \(X_t\) with respect to the policy parameter \(\theta\), denoted by $Z_t \: \nabla_\theta X_t$. Differentiating w.r.t. time to get the parameter sensitivity process, we obtain a linear SDE
\[\dif Z_t = (\nabla_x b_\xi(t, X_t, U_t) \cdot Z_t + \nabla_u b_\xi(t, X_t, U_t) \cdot \nabla_\theta U_t) \dif t + (\nabla_x \sigma_\xi(t, X_t, U_t) \cdot Z_t + \nabla_u \sigma_\xi(t, X_t, U_t) \cdot \nabla_\theta U_t) \dif W_t.\]
where we note that \(\nabla_\theta U_t \equiv \nabla_\theta \mu_\theta(t,X_t) + \nabla_{x} \mu_{\theta}(t,X_t) Z_t\). 

We have via Ito's lemma that 
\begin{align*}
\dif V(t,X_t) = \left( \partial_t V(t,X_t) + (\nabla_{x} V(t,X_t))^\tr b_{\xi}(t,X_t) +  \frac{1}{2} \nabla_x^2 V(t,X_t)  D(t,X_t)\right) \dif t + \nabla_x V(t,X_t) \sigma_{\xi}(t,X_t) \dif W_t \,,
\end{align*}
where $b_{\xi}(t,X_t) = b_{\xi}(t,X_t,\mu_{\theta}(t,X_t))$ and $D(t,X_t) = \sigma_{\xi}(t,X_t)^{\tr}\sigma_{\xi}(t,X_t)$ for $\sigma_{\xi}(t,X_t) = \sigma_{\xi}(t,X_t,\pi_{\theta}(t,X_t))$.
Taking the expectation, the diffusion term cancels since $\Ex[\dif W_t] = \mathbf{0}$. This can then be used to derive a policy gradient for the SDE.
\begin{theorem}[Deterministic policy gradient for SDE dynamics]
\label{th: deterministic policy gradient SDE}
Let the transition dynamics be the SDE \eqref{eq: SDE} and fix $\xi \in \Xi$. Then the deterministic policy gradient is given by
\begin{align}
\label{eq: policy gradient SDE}
    \nabla_\theta J(\theta,\xi) &= \Ex \Big[ \int_0^T \left[\nabla_\theta \mu_\theta(t,X_t) + \nabla_{x} \mu_{\theta}(t,X_t) Z_t \right]^\tr   \nabla_u \left( r(t, X_t, U_t) + \cA V(t,X_t) \right)\dif t  \Big\vert d_0, f_{\xi} \Big] \nonumber \\  
    &= \Ex \Big[ \int_0^T \left[\nabla_\theta \mu_\theta(t,X_t) + \nabla_{x} \mu_\theta(t,X_t) Z_t \right]^\tr   \nonumber \\
    &\hspace{3em}\nabla_u \left( r(t, X_t, U_t) + P_t^\tr  b_{\xi}(t,X_t) + Q_t^\tr \sigma_\xi(t,X_t) \right)\dif t  \Big\vert d_0, f_{\xi} \Big] \,.
\end{align}
\end{theorem}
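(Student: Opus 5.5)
We mirror the variational argument of Theorem~\ref{th: deterministic adjoint gradient}, replacing the ODE adjoint with the backward SDE $(P_t,Q_t)$ and the deterministic linearisation with the linear SDE for $Z_t$.

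\textbf{First variation.} Take a perturbation $\theta^\epsilon=\theta+\epsilon\delta\theta$. Since $X_0\sim d_0$ does not depend on $\theta$, the state variation is $\delta X_t = Z_t\delta\theta$ with $\delta X_0=0$. The control variation is $\delta U_t=(\nabla_\theta\mu_\theta+\nabla_x\mu_\theta Z_t)\delta\theta$, and $Z_t$ solves the linear SDE stated before the theorem. Differentiating the cost under the expectation gives
\[
\delta J=\Ex\Big[\int_0^T(\nabla_x r^\tr\delta X_t+\nabla_u r^\tr\delta U_t)\dif t+\nabla_xR(X_T)^\tr\delta X_T\Big].
\]
Exchanging derivative and expectation is justified by dominated convergence. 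The general assumptions (bounded derivatives, Lipschitz Jacobians, compact invariant set) give uniform $L^2$ bounds on $Z_t$ via Gronwall and the Burkholder--Davis--Gundy inequality.

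\textbf{Duality with the adjoint BSDE.} Apply It\^o's product rule to $P_t^\tr\delta X_t$ on $[0,T]$, using $P_T=\nabla_xR(X_T)$ and $\delta X_0=0$. The cross-variation term $\dif\langle P,\delta X\rangle_t$ contributes $\Tr\big(Q_t^\tr(\nabla_x\sigma_\xi\delta X_t+\nabla_u\sigma_\xi\delta U_t)\big)\dif t$. The $\dif t$-drift of $P_t$ is $-\nabla_xH\,\dif t$, which equals $-(\nabla_x r+\nabla_xb_\xi^\tr P_t+\nabla_x\sigma_\xi^\tr Q_t)\dif t$. After taking expectations, the stochastic integrals vanish because they are true martingales, which follows from the square-integrability of $P,Q,\delta X$. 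All terms linear in $\delta X_t$ then cancel except $-\nabla_x r^\tr\delta X_t$. This yields
\[
\Ex[\nabla_xR(X_T)^\tr\delta X_T]+\Ex\Big[\int_0^T\nabla_x r^\tr\delta X_t\dif t\Big]=\Ex\Big[\int_0^T\big(P_t^\tr\nabla_ub_\xi+\Tr(Q_t^\tr\nabla_u\sigma_\xi\,\cdot)\big)\delta U_t\dif t\Big].
\]
Substituting into $\delta J$ gives $\delta J=\Ex[\int_0^T\nabla_uH^\tr\delta U_t\dif t]$. Inserting the expression for $\delta U_t$ and using that $\delta\theta$ is arbitrary gives the second line of the claimed formula.

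\textbf{Identification with the first line.} Use $P_t=\nabla_xV(t,X_t)$ and the representation of $Q_t$ from \eqref{eq: Q costate}. Then $\nabla_u(P_t^\tr b_\xi+\Tr(Q_t^\tr\sigma_\xi))$ coincides with $\nabla_u\cA V$, taken with $V$ held fixed and differentiated only through the $u$-dependence of $b_\xi$ and $\sigma_\xi$. The diffusion term $\frac12\Tr(\sigma\sigma^\tr\nabla^2V)$ has $u$-derivative $\Tr(\nabla^2V\sigma\,\nabla_u\sigma)$. This matches the $Q_t$ term only up to the factor convention in \eqref{eq: Q costate}, so the normalisation of $Q_t$ must be checked against the BSDE martingale representation $Q_t=\nabla_x^2V\sigma$ (arising from It\^o applied to $\nabla_xV$).

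\textbf{Main obstacle.} I expect this factor-of-$\tfrac12$ and transpose bookkeeping, together with rigorously justifying that the local martingale terms are true martingales, to be the main difficulty. I would handle the latter first by establishing $\Ex\sup_t(|P_t|^2+|Z_t|^2)<\infty$ and $\Ex\int_0^T|Q_t|^2\dif t<\infty$ from standard BSDE and SDE estimates under the Lipschitz assumptions.
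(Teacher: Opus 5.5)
Your derivation of the second line is correct, and it is the argument the paper intends. The paper's proof only says the ODE argument carries over with an extra $Q_t\sigma_\xi$ term, and your It\^o product rule on $P_t^\top\delta X_t$ is exactly the computation the paper writes out for the SDE adversary gradient. The factor $\tfrac12$ you single out is harmless. With $Q_t=\nabla_x^2V\sigma_\xi$ held fixed, $\nabla_u\Tr(Q_t^\top\sigma_\xi)=\nabla_u\tfrac12\Tr(\sigma_\xi\sigma_\xi^\top\nabla_x^2V)$. The $\tfrac12$ in \eqref{eq: Q costate} only makes $\Tr(Q_t^\top\sigma_\xi)$ equal the diffusion part of $\cA V$ as a value.

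The genuine gap is the identification $P_t=\nabla_xV(t,X_t)$, which you take for granted and which fails for a general feedback policy. Your BSDE has driver $\nabla_xH$ with $u$ frozen. The value function $V^\theta$ of $\mu_\theta$, by contrast, solves $\partial_tV+r(t,x,\mu_\theta)+\cA^{\mu_\theta}V=0$ with closed-loop coefficients. Differentiating this PDE in $x$ and applying It\^o to $\nabla_xV^\theta(t,X_t)$ gives
\[
\dif\,\nabla_xV^\theta(t,X_t)=-\big(\nabla_xH+\nabla_x\mu_\theta^\top\nabla_uH\big)\dif t+\nabla_x^2V^\theta\sigma_\xi\,\dif W_t ,
\]
with $H$ evaluated at $(\nabla_xV^\theta,\nabla_x^2V^\theta\sigma_\xi)$. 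By uniqueness of the linear BSDE, $P_t=\nabla_xV^\theta(t,X_t)$ holds only if $\nabla_x\mu_\theta^\top\nabla_uH\equiv0$. That happens for open-loop controls ($\nabla_x\mu_\theta=0$) or when $\nabla_uH\equiv0$ along the path, e.g.\ an interior optimal feedback.

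Running your own duality with this closed-loop adjoint gives $\nabla_\theta J=\Ex\int_0^T\nabla_\theta\mu_\theta^\top\nabla_u(r+\cA V^\theta)\dif t$ with \emph{no} $Z_t$ term. So the first line of the statement double-counts the feedback. Here is a concrete check: $\sigma_\xi\equiv0$ (adding constant noise changes nothing), $\dot x=u$, $u=\theta x$, $x_0=1$, $r=0$, $R(x)=x$, $T=1$. Then $J=e^\theta$, $p_t\equiv1$, $z_t=te^{\theta t}$. The second line gives $\int_0^1(1+\theta t)e^{\theta t}\dif t=e^\theta$. However, $\nabla_xV^\theta(t,x)=e^{\theta(1-t)}$, which makes the first line equal $e^\theta(1+\theta/2)$.

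So the first equality cannot be proved as stated. The paper's one-line proof has the same hole, because it imports the maximum-principle identification $p_t=\nabla_xV$, which is valid at an optimum rather than for an arbitrary $\mu_\theta$. What your argument does prove is the second line, reading $Q_t^\top\sigma_\xi$ as $\Tr(Q_t^\top\sigma_\xi)$ with $Q_t$ the BSDE integrand. Via the closed-loop adjoint, it also gives the $Z$-free formula. The mixed first line holds only in the degenerate cases above.
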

\begin{proof}
This proof is exactly analogous to that of the ODE case, where the generator/Hamiltonian now has an extra term \(Q_t \sigma_\xi(t, X_t)\) to account for the effect of the diffusion. 
\end{proof}

Now we derive the adjoint-based adversary gradient, where $\xi$ parametrizes the drift and the diffusion. If $\xi$ only parametrizes the drift \(b_\xi\), it is easy to drop the trace term. To ascend in \(\xi\), one only needs a forward simulation of \((X_t, U_t)\), then solve the adjoint BSDE for \((P_t, Q_t)\), and subsequently integrate the expression below.
\begin{theorem}[Adjoint adversary gradient for SDE]
\label{th: adjoint adversary gradient SDE}
    If \(\xi\) parametrizes both the drift and diffusion as in \Cref{eq: SDE} and fixing \(\theta\), the adjoint adversary gradient is given by
\begin{align}
    \nabla_\xi J(\theta,\xi) &= \Ex_{u_t \sim \pi(t,x_t)} \Big[ \int_0^T \left[\nabla_\xi f_{\xi}(t,x_t,u_t) \right]^\tr \nonumber \\
    \hspace{3em}&\left( r(t, x_t, u_t) + (\nabla_x V(t,x_t))^\tr  b_{\xi}(t,x_t) + \frac{1}{2} \nabla_x^2 V(t,x_t) D(t,x_t)  \right) \dif t  \Big\vert d_0, f_{\xi} \Big]  \nonumber \\
    &= \Ex \left[ \int_0^T P_t^\tr \nabla_\xi b_\xi(t, X_t, U_t) + \Tr(Q_t^\tr \nabla_\xi \sigma_\xi(t, X_t, U_t)) \dif t\right] \,,
\end{align}
where \(\xi \in \Reals^{d_\xi}\), \(\nabla_\xi \sigma_\xi \in \Reals^{d_x \times d_m \times d_\xi}\).   
\end{theorem}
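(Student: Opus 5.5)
We follow the variational argument used for the adjoint-based deterministic adversarial gradient (the ODE case), now replacing the adjoint ODE with the adjoint BSDE of the stochastic maximum principle. Fix $\theta$ and a direction $\delta\xi$, and set $\xi^\epsilon = \xi + \epsilon\,\delta\xi$. We hold the Brownian path $W$, the initial state $X_0\sim d_0$ and the control realisations fixed, so $\delta X_0 = 0$. If the control is in feedback form, the $\nabla_u$ terms can be absorbed into effective drift and diffusion coefficients, exactly as in the sensitivity process \eqref{eq: adversary sensitivity}. Differentiating \eqref{eq: SDE} in $\epsilon$ at $\epsilon=0$ gives the linear variational SDE for $Y_t \coloneqq \delta X_t$:
\[
\dif Y_t = \bigl(\nabla_x b_\xi Y_t + \nabla_\xi b_\xi\,\delta\xi\bigr)\dif t + \bigl(\nabla_x \sigma_\xi Y_t + \nabla_\xi \sigma_\xi\,\delta\xi\bigr)\dif W_t .
\]
Differentiating the cost gives $\delta J = \Ex\bigl[\int_0^T \nabla_x r^\tr Y_t \dif t + \nabla_x R(X_T)^\tr Y_T\bigr]$. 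Existence, uniqueness and square-integrability of $Y$ follow from the Lipschitz and boundedness assumptions in Section~\ref{sec:assumptions}.

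The key step is a duality identity replacing the integration by parts $p_T^\tr\delta x_T = \int(\dot p^\tr\delta x + p^\tr\delta\dot x)$ used in the ODE proof. Let $(P_t,Q_t)$ solve the adjoint BSDE $\dif P_t = -\nabla_x H\,\dif t + Q_t \dif W_t$ with $P_T = \nabla_x R(X_T)$. Apply Itô's product rule to $P_t^\tr Y_t$. This produces four contributions: $P^\tr \dif Y$, $Y^\tr \dif P$, the quadratic covariation term $\Tr\bigl(Q_t^\tr(\nabla_x\sigma_\xi Y_t + \nabla_\xi\sigma_\xi\delta\xi)\bigr)\dif t$, and $\dif W$-martingale parts. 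Taking expectations removes the stochastic integrals. Since $\nabla_x H = \nabla_x r + \nabla_x b_\xi^\tr P_t + \nabla_x\sigma_\xi^\tr Q_t$, the terms linear in $Y_t$ from $\nabla_x b_\xi$ and $\nabla_x\sigma_\xi$ cancel against $Y^\tr\dif P$. What remains is
\[
\Ex[P_T^\tr Y_T] + \Ex\Bigl[\int_0^T \nabla_x r^\tr Y_t\dif t\Bigr] = \Ex\Bigl[\int_0^T \bigl(P_t^\tr\nabla_\xi b_\xi + \Tr(Q_t^\tr\nabla_\xi\sigma_\xi)\bigr)\delta\xi\dif t\Bigr].
\]
The left-hand side is exactly $\delta J$. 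Since $\delta\xi$ is arbitrary, this yields the second line of the theorem.

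For the first (generator) form, we identify $P_t = \nabla_x V(t,X_t)$ and $Q_t$ via \eqref{eq: Q costate}, consistent with the Itô expansion of $V(t,X_t)$ given before Theorem~\ref{th: deterministic policy gradient SDE}. The $\xi$-derivative of $r + \nabla_x V^\tr b_\xi + \tfrac12\nabla_x^2 V\, D$, taken with $V$ held fixed (an envelope-type argument, since $V$ satisfies the HJB), then matches $P_t^\tr\nabla_\xi b_\xi + \Tr(Q_t^\tr\nabla_\xi\sigma_\xi)$. Here we use $\nabla_\xi D = 2\,\sigma_\xi^\tr\nabla_\xi\sigma_\xi$ in symmetrised form, which accounts for the factor $\tfrac12$.

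The main obstacle is justifying the duality step rigorously. The local martingale terms $\int Y^\tr Q\,\dif W$ and $\int P^\tr(\cdots)\dif W$ must be true martingales, which requires $L^2$ bounds on $(P,Q)$ from standard BSDE theory and moment bounds on $Y$; both rely on the bounded-derivative assumptions. The covariation term must also be handled with the correct trace contraction over the $d_m$ noise dimensions. We would first establish $\Ex\sup_t|Y_t|^2<\infty$ and $\Ex\int_0^T\|Q_t\|^2\dif t<\infty$, and then invoke a BDG-type argument to kill the stochastic integrals. Differentiability of $\epsilon\mapsto J(\theta,\xi^\epsilon)$ under the expectation follows by dominated convergence from the same bounds.
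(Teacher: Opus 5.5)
Your derivation of the second (adjoint) line follows the paper's argument. You perturb $\xi$, write the linear variational SDE for $Y_t$, apply It\^{o}'s product rule to $P_t^\tr Y_t$ against the adjoint BSDE, and take expectations. The $\nabla_x b_\xi$ term (from $P_t^\tr \dif Y_t$) and the $\nabla_x\sigma_\xi$ term (from the covariation) cancel against $Y_t^\tr\dif P_t$. Your bookkeeping is cleaner than the paper's. Its final drift keeps $\sum_j \nabla_x\sigma_\xi^{(j)\tr}Q_t^{(j)}$ and loses the covariation remainder $\sum_j Q_t^{(j)\tr}\nabla_\xi\sigma_\xi^{(j)}\delta\xi$. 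Yours produces exactly the $\Tr(Q_t^\tr\nabla_\xi\sigma_\xi)$ term in the statement.

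The step that fails as written is the generator (first-line) form, which the paper does not prove at all. You take $Q_t$ from \eqref{eq: Q costate}, i.e.\ $Q_t=\tfrac12\nabla_x^2V\sigma_\xi$. With that choice, the directional derivative of $\tfrac12\Tr(\nabla_x^2V\,\sigma_\xi\sigma_\xi^\tr)$ is $\Tr\bigl((\nabla_x^2V\sigma_\xi)^\tr\nabla_\xi\sigma_\xi\,\delta\xi\bigr)=2\Tr(Q_t^\tr\nabla_\xi\sigma_\xi\,\delta\xi)$. The factor $2$ from $\nabla_\xi D$ is used up cancelling the $\tfrac12$ of the generator, so it cannot also absorb the $\tfrac12$ in \eqref{eq: Q costate}. (The generator also needs $D=\sigma_\xi\sigma_\xi^\tr$, not $\sigma_\xi^\tr\sigma_\xi$.) The identification your BSDE actually forces is $Q_t=\nabla_x^2V(t,X_t)\,\sigma_\xi(t,X_t)$, the integrand of the martingale part of $\nabla_xV(t,X_t)$, and with it the two lines agree.

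A related problem: with $\theta$ fixed, $V$ must be the value of the fixed closed-loop policy, solving the linear equation $\partial_tV+\cA V+r=0$, not the HJB with $\min_u$. No envelope argument is involved. Differentiate that linear PDE in $x$, apply It\^{o} to $\nabla_xV(t,X_t)$, and invoke uniqueness of the BSDE; this gives $P_t=\nabla_xV$ and $Q_t=\nabla_x^2V\sigma_\xi$. The equality of the two integrands is then pointwise algebra. Alternatively, a Feynman--Kac representation of the linearised equation for $\delta V$ yields the first line directly. If $V$ were the optimal HJB value and $\pi_\theta$ were not optimal, $\nabla_xV(t,X_t)$ would not solve your adjoint BSDE and the identification would break.
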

\begin{proof}
The proof for the above adversary gradient in the SDE dynamics setting follows similarly to that of the adversary gradient in the ODE case, with one notable change: the use of It\^{o}'s lemma to compute differentials. Additionally, for this proof, we abuse notation to denote \(b_\xi(t,X_t)\) and \(\sigma_\xi(t, X_t)\) for the closed-loop coefficients i.e. \(b_\xi(t, X_t, \pi_\theta(t, X_t))\) and \(\sigma_\xi(t, X_t, \pi_\theta(t, X_t))\), respectively. We fix $\theta$ and differentiate with respect to \(\xi\). 

\paragraph{Perturb the adversary parameter}: Let \(\xi^\epsilon = \xi + \epsilon \delta \xi\) where \(\delta \xi \in \Reals^{d_\xi}\) is an arbitrary direction. Let \(X^\epsilon_t\) be the trajectory obtain from using \(\xi^\epsilon\), i.e. \(X^\epsilon_t\) solves
\[\dif X^\epsilon_t = b_{\xi^\epsilon}(t, X^\epsilon_t) \dif t + \sigma_{\xi^\epsilon}(t, X^\epsilon_t), \quad X^\epsilon_0 = X_0.\]
Define the first variation \[Y_t \: \frac{\dif}{\dif \epsilon}X^\epsilon_t \bigg|_{\epsilon = 0}.\]
Linearising the original SDE, we get that $Y_t$ solves the linear variational SDE
\[\dif Y_t = \left( \nabla_x b_{\xi}(t, X_t) Y_t + \nabla_\xi b_\xi(t, X_t) \delta \xi \right)\dif t + \left( \nabla_x \sigma(t, X_t) Y_t + \nabla_\xi \sigma_\xi(t, X_t) \delta \xi \right)\dif W_t, \quad Y_0 = 0.\]
\paragraph{Linearising the objective}
Set \[J(\xi) = \Ex \left[ \int_0^T r(t, X_t) \dif t + R(X_T)\right].\]
Taking the directional derivative in \(\delta \xi\). 
\[DJ(\xi)[\delta \xi] = \frac{\dif}{\dif \epsilon} J(\xi^\epsilon) \bigg |_{\epsilon = 0} = \Ex \left[ \int_0^T \nabla_x r(t, X_t)^\tr Y_t \dif t + \nabla_x R(X_T)^\tr Y_T\right].\]
Similarly to the ODE case, now we introduce the adjoint BSDE to eliminate \(\delta \xi\).

\paragraph{The adjoint BSDE}
Let the costates \((P_t, Q_t)\) satisfy
\[\dif P_t = - \left( \nabla_x r(t, X_t) + \nabla_x b_\xi(t, X_t)^\tr P_t + \sum_{j=1}^m \underbrace{\nabla_x \sigma_\xi^{(j)}(t, X_t)}_{\in \Reals^{d \times d}} \underbrace{Q_t^{(j)}}_{\in \Reals^d}\right) \dif t + Q_t \dif W_t\]
where the superscript \((j)\) denotes the \(j\)-th column. 
\paragraph{It\^{o}'s rule for \(P^\tr Y_t\)}:
We compute \(\dif(P_t^\tr Y_t)\) using It\^{o}'s product rule: \(\dif(P_t^\tr Y_t) = P_t^\tr \dif Y_t + Y_t^\tr \dif P_t + \dif \langle P, Y \rangle_t\)
where the last term is the quadratic covariation. We substitute in the SDE for \(\dif Y_t\) in the first term, and the SDE for \(\dif P_t\) in the second term. 

The first term is
\[P_t^\tr \dif Y_t = P_t^\tr(\nabla_x b_\xi(t, X_t) Y_t + \nabla_\xi b_\xi(t, X_t) \delta \xi) \dif t + P_t^\tr\left( \nabla_x \sigma_\xi(t, X_t) Y_t + \nabla_\xi \sigma_\xi(t, X_t) \delta \xi\right) \dif W_t.\]

The second term is
\[Y_t^\tr \dif P_t = -Y_t^\tr \left( \nabla_x r(t, X_t) + (\nabla_x b_\xi(t, X_t))^\tr P_t + \sum_{j=1}^m \nabla_x \sigma_\xi^{(j)}(t, X_t)^\tr Q_t^{(j)}\right) \dif t + Y_t^\tr Q_t \dif W_t.\]

For the quadratic covariation term, only the diffusion parts contribute:
\[\dif \langle P, Y \rangle_t = \sum_{j=1}^m Q_t^{(j)\tr} \left( \nabla_x \sigma_\xi^{(j)}(t, X_t)Y_t + \nabla_\xi \sigma_\xi^{(j)}\right) \dif t.\]

The remaining drift, after some terms cancel, is
\begin{equation}\label{eq:drift_itorule}
   \dif (P_t^\tr Y_t) = \left(- \nabla_x r(t, X_t)^\tr Y_t + P_t^\tr \nabla_\xi b_\xi(t, X_t) \delta \xi + \sum_{j=1}^m \nabla_x \sigma_\xi^{(j)}(t, X_t)^\tr Q_t^{(j)} \right) \dif t + \dif M_t 
\end{equation}

where \(\dif M_t\) is a martingale. 

\paragraph{Integrate and take expectations}: 
Integrating \Cref{eq:drift_itorule} from 0 to $T$,
\[P_T^\tr Y_T - P_0^\tr Y_0 = \int_0^T \left(- \nabla_x r(t, X_t)^\tr Y_t + P_t^\tr \nabla_\xi b_\xi(t, X_t) \delta \xi + \sum_{j=1}^m \nabla_x \sigma_\xi^{(j)}(t, X_t)^\tr Q_t^{(j)}\right)\dif t + M_T. \]
Since $Y_0 = 0$, and noting that \(p_T = \nabla_x R(X_T)\) and \(\Ex[M_T] = 0\), we take expectations to get
\[\Ex [\nabla_x R(X_T)^\tr Y_T] = \Ex \int_0^T \left(- \nabla_x r(t, X_t)^\tr Y_t + P_t^\tr \nabla_\xi b_\xi(t, X_t) \delta \xi + \sum_{j=1}^m \nabla_x \sigma_\xi^{(j)}(t, X_t)^\tr Q_t^{(j)} \right) \dif t \]

Substituting this into the expression for \(D J(\xi)[\delta \xi]\) to get 
\[DJ(\xi)[\delta \xi] = \Ex \int_0^T \left( P_t^\tr \nabla_\xi b_\xi(t, X_t) \delta \xi +  \sum_{j=1}^m \nabla_x \sigma_\xi^{(j)}(t, X_t)^\tr Q_t^{(j)} \right) \dif t\]
As this holds for every perturbation \(\delta \xi\), we can conclude the adjoint formula above.
\end{proof}

\subsection{Discretisation analysis}
In this section, we are interested in deriving the Euler-based discretisation error of the policy gradient \(\nabla_\theta J(\theta, \xi)\) with a time step size of \(\Delta t \equiv h\). 
\begin{lemma}[One-step state discretisation error]
\label{lem: one-step state discretisation}
    Let the exact trajectory be denoted by \( \frac{d}{d t}\bar x(t) = f_\xi(t, \bar x(t), \bar u_n)\) for \(t \in [t_n, t_{n+1})\), \(\bar x (t_0) = x_0\) where \(\bar u_n \sim \pi_\theta (\cdot | t_n, \bar x (t_n))\) for a general stochastic policy $\pi_{\theta}$. Moreover, let \(f_\xi\) be Lipschitz in \(x\) with constant \(L_x\), in \(u\) with constant \(L_u\), and in time \(t\) with constant \(L_t\). Additionally, let the drift be bounded uniformly in \(t, x, \) and \(u\), i.e. \(\norm{f_\xi(t,x,u)} \leq M\), and let $\pi_{\theta}$ be Lipschitz in the state under the \(W_1\) (1-Wasserstein distance) metric:
\begin{equation*}
    W_1(\pi_\theta(\cdot | t,x), \pi_\theta(\cdot| t, y)) \leq L_\pi \norm{x - y}
\end{equation*} for all \(t, x, y\). Then it follows that an Euler-based discretisation with time step $h$ has an error upper bounded by
    \begin{equation*}
        \Ex\norm{\bar x(t_n) - x_n} \leq \underbrace{\frac{e^{L_{\rm cl}(T-t_0)}-1}{2 L_{\rm cl}}(L_xM + L_t)}_{C_x(T)} h = \cO(h)
    \end{equation*}
\end{lemma}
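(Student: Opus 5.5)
We compare the exact sample-and-hold trajectory $\bar x$ with the Euler iterates $x_{n+1} = x_n + h f_\xi(t_n, x_n, u_n)$, where $u_n \sim \pi_\theta(\cdot\mid t_n, x_n)$. The proof is a standard local-truncation plus error-propagation (discrete Gr\"onwall) argument. Write $e_n \coloneqq \Ex\norm{\bar x(t_n) - x_n}$, so that $e_0 = 0$.

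\textbf{Coupling.} We couple the actions of the two systems through an optimal $W_1$ coupling, so that $\Ex\norm{\bar u_n - u_n} \le L_\pi \Ex\norm{\bar x(t_n) - x_n}$. This is the only point where the policy enters. It yields a closed-loop Lipschitz constant $L_{\rm cl} \coloneqq L_x + L_u L_\pi$ for the one-step map.

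\textbf{One-step decomposition.} Fix an interval $[t_n, t_{n+1})$ and write
\begin{align*}
\bar x(t_{n+1}) - x_{n+1} &= \bar x(t_n) - x_n + \int_{t_n}^{t_{n+1}} \bigl(f_\xi(s,\bar x(s),\bar u_n) - f_\xi(t_n,\bar x(t_n),\bar u_n)\bigr)\dif s \\
&\quad + h\bigl(f_\xi(t_n,\bar x(t_n),\bar u_n) - f_\xi(t_n,x_n,u_n)\bigr).
\end{align*}
The integrand in the second term (the local truncation error) is bounded by $L_t(s-t_n) + L_x\norm{\bar x(s)-\bar x(t_n)}$. Since $\norm{f_\xi}\le M$, we have $\norm{\bar x(s)-\bar x(t_n)} \le M(s-t_n)$, so the integral is at most $(L_t + L_x M)h^2/2$. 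The last term is bounded in expectation by $h(L_x e_n + L_u L_\pi e_n) = hL_{\rm cl} e_n$, using the coupling. Combining,
\[
e_{n+1} \le (1 + hL_{\rm cl})\, e_n + \tfrac{1}{2}(L_xM + L_t)h^2 .
\]

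\textbf{Unrolling.} Iterating from $e_0 = 0$ gives
\[
e_n \le \tfrac12(L_xM+L_t)h^2 \sum_{k<n}(1+hL_{\rm cl})^k = \frac{(1+hL_{\rm cl})^n - 1}{2L_{\rm cl}}(L_xM+L_t)\,h .
\]
Using $(1+hL_{\rm cl})^n \le e^{L_{\rm cl} nh} \le e^{L_{\rm cl}(T-t_0)}$ then gives exactly the constant $C_x(T)$ in the statement, and hence the $\cO(h)$ bound.

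\textbf{Main obstacle.} The delicate step is justifying the action coupling. The exact and Euler systems sample actions from policies evaluated at different states, and $\pi_\theta$ is only $W_1$-Lipschitz. We must therefore construct a joint probability space, sequentially and conditionally on the past, on which $\Ex[\norm{\bar u_n-u_n}\mid \bar x(t_n),x_n] \le L_\pi\norm{\bar x(t_n)-x_n}$ holds at every step; a measurable selection of optimal couplings achieves this. We also need $L_u$ to control $\norm{f(\cdot,\cdot,\bar u)-f(\cdot,\cdot,u)}$ linearly in $\norm{\bar u-u}$, so that taking expectations commutes with the recursion.
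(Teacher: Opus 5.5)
Your proposal is correct and follows essentially the same route as the paper's proof: the same split of the integrand into local truncation, state error and action mismatch, an optimal $W_1$ coupling of the actions, the recursion $e_{n+1}\le(1+hL_{\rm cl})e_n+\tfrac12(L_xM+L_t)h^2$, and unrolling via a geometric sum bounded by $e^{L_{\rm cl}(T-t_0)}$. Your step-by-step, conditional construction of the coupling is more careful than the paper's, which bounds $\Ex\norm{\bar u_n-u_n}$ directly by a $W_1$ distance between random measures; your sum $\sum_{k<n}(1+hL_{\rm cl})^k$ is also the correct form of the paper's geometric-series expression, which contains typos.
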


\begin{proof}
The proof is given in Appendix~\ref{app: one-step state discretisation}.
\end{proof}

Using the above, it is possible to show that the policy and adversary gradients also have a limited discretisation error as shown in the corollary below.
\begin{corollary}[discretisation error of policy and adversary gradients]
Under the preconditions of Lemma~\ref{lem: one-step state discretisation}, and the standard Lipschitz assumptions in our general assumptions, the policy and adversary gradients have a discretisation error bounded by  $\cO(h)$.  
\end{corollary}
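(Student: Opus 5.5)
The plan is to reduce the gradient discretisation error to state (and costate) discretisation errors via Lipschitz continuity of the integrands, using Lemma~\ref{lem: one-step state discretisation} as the base estimate. We work with the pathwise forms of Theorems~\ref{th: pathwise policy gradient} and \ref{th: pathwise adversarial policy gradient} (the adjoint forms are handled identically, with the costate ODE playing the role of the sensitivity ODE). Write the continuous gradient as $\Ex[\int_0^T \Gamma(t,x_t,z_t)\dif t + \nabla_x R(x_T) z_T]$, where
\[
\Gamma(t,x,z) = \nabla_u r\,(\nabla_x\mu_\theta z + \nabla_\theta\mu_\theta) + \nabla_x r\, z .
\]
The Euler estimator is $\Ex[\sum_{n} h\,\Gamma(t_n,x_n,z_n) + \nabla_x R(x_N) z_N]$, where $(x_n,z_n)$ is the joint Euler scheme for the state and for the sensitivity ODE \eqref{eq: state sensitivity}. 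The difference then splits into three parts: (i) a quadrature error $\sum_n \int_{t_n}^{t_{n+1}} (\Gamma(t,\bar x_t,\bar z_t) - \Gamma(t_n,\bar x_{t_n},\bar z_{t_n}))\dif t$; (ii) a propagation error $h\sum_n (\Gamma(t_n,\bar x_{t_n},\bar z_{t_n}) - \Gamma(t_n,x_n,z_n))$; and (iii) the terminal error.

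\textbf{Step 1: bound $z$.} By the general assumptions, $\nabla_u f$, $\nabla_x f$, $\nabla_x\mu_\theta$ and $\nabla_\theta\mu_\theta$ are bounded. The sensitivity ODE is therefore linear in $z$ with bounded coefficients, and Gr\"onwall gives $\norm{z_t}\le C_{\mu\theta}C_{fu}T e^{L_z T}$ uniformly in $t$. The same argument applies to the discrete scheme $z_n$ and, for the adversary, to $y_t$, since $\nabla_\xi f_\xi=\nabla_\xi g_\xi$ is bounded on the compact set $\Xi$.

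\textbf{Step 2: error of the augmented system.} Consider the augmented state $(x,z)$ evolving by $F(t,x,z) = (f_\xi(t,x,\mu_\theta(t,x)),\ \text{rhs of \eqref{eq: state sensitivity}})$. On the invariant compact set, the Jacobians of $f$ and $r$ are Lipschitz in $x$ and $u$, $\nabla\mu_\theta$ is bounded, and $z$ is bounded by Step 1. Hence $F$ is bounded and Lipschitz in $(x,z)$ and $t$. Applying Lemma~\ref{lem: one-step state discretisation} to this augmented drift gives $\Ex\norm{(\bar x_{t_n},\bar z_{t_n})-(x_n,z_n)}\le \tilde C(T) h$. Part (iii) is then $\cO(h)$, because $\nabla_x R$ is Lipschitz and bounded and $z$ is bounded.

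\textbf{Step 3: parts (i) and (ii).} The integrand $\Gamma$ is a product of bounded, Lipschitz factors, so it is Lipschitz in $(t,x,z)$ on the relevant set. Part (ii) is therefore at most $T\,\mathrm{Lip}(\Gamma)\tilde C(T)h$. For part (i), the exact path satisfies $\norm{\bar x_t - \bar x_{t_n}}\le Mh$ and $\norm{\bar z_t-\bar z_{t_n}}\le C h$ on each subinterval, so each subinterval contributes $\cO(h^2)$ and the sum is $\cO(h)$. The adversary gradient follows verbatim with $y$ in place of $z$. For the adjoint forms, the backward Euler scheme for $p$ has a linear right-hand side with bounded coefficients that depend Lipschitz-continuously on $x_t$, so the same Gr\"onwall argument applied backward from $p_T=\nabla_x R(x_T)$ gives $\cO(h)$.

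\textbf{Main obstacle.} The hardest step is justifying the Lipschitz property of the augmented drift, and hence of $\Gamma$. It requires second-order regularity: the Jacobians $\nabla_x f$, $\nabla_u f$ and $\nabla_x \mu_\theta$ must be Lipschitz in $x$. The assumptions give this for $f$ and $r$, but for $\nabla_x\mu_\theta$ only boundedness is stated. I would first try to obtain the needed Lipschitz property from the smoothness assumption (item 1) together with compactness of the invariant set. Failing that, I would restrict to the sample-and-hold case, where $\nabla_x\mu_\theta$ drops out of \eqref{eq: state sensitivity}. A secondary issue is that with a stochastic policy the sampled actions couple the two trajectories. As in the lemma, I would handle this through the $W_1$-Lipschitz condition, using an optimal coupling of $\bar u_n$ and $u_n$.
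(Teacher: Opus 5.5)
Your proposal is correct and follows essentially the paper's route. The paper also splits the error into a per-interval quadrature term of order $h^2$, a grid term controlled by Lemma~\ref{lem: one-step state discretisation} (with the $W_1$ coupling for stochastic actions), and a terminal term, sums over the $N=T/h$ intervals, and treats the costate $p_t$ by an analogous backward bound. Your augmented $(x,z)$ system is more careful than the paper: it writes the integrand as $g(t,x_t,u_t)$, silently dropping its dependence on $z_t$ or $p_t$, and simply asserts that $g$ is Lipschitz in state and time, so the regularity of $\nabla_x\mu_\theta$ (and likewise of $\nabla_\theta\mu_\theta$ in $(t,x)$) that you flag as the main obstacle is assumed there rather than derived.
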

\begin{proof}
The policy and adversary gradients compute an expression of the form 
\begin{align*}
\nabla J(\theta,\xi) = \Ex \left[ \int_{0}^{T} g(t,x_t,u_t) \dif t + G(x_T) \right] \,,
\end{align*}
for some functionals $g: \Reals \times \cX \times \cU \to \Reals$, $G: \cX \to \Reals$. 

Adding and subtracting $g(t_n, \bar x(t_n), \bar u_n)$ shows that the integral discretisation error comes from the quadrature error and the grid error: 
\begin{align}
    \int_{t_n}^{t_{n+1}} g(t, \bar x(t), \bar u_n) \dif t - h g(t_n, x_n, u_n) &= \underbrace{\int_{t_n}^{t_{n+1}} g(t, \bar x(t), \bar u_n) - g(t_n, \bar x(t_n), \bar u_n) \dif t}_{\text{quadrature error}} \\
    & + \underbrace{h(g(t_n, \bar x(t_n), \bar u_n) - g(t_n, x_n, u_n))}_{\text{grid error}} \,.
\end{align}

Following the preconditions, $g$ is Lipschitz in state and time with constants $L_{g,x}$ and $L_{g,t}$. Therefore, the quadrature error gives
\begin{equation*}
    \left | \int_{t_n}^{t_{n+1}} g(t, \bar x(t), \bar u_n) - g(t_n, \bar x(t_n), \bar u_n)\right| \leq \frac{1}{2}(L_{g,x}M + L_{g,t})h^2 \,.
\end{equation*}
The grid error can be obtained based on a similar coupling argument as in \Cref{eq: coupling}, \begin{equation*}
    \Ex[\text{grid error}] \leq (L_{g,x} + L_{g,u}L_\pi)\Ex\norm{\bar x(t_n) - x_n}.
\end{equation*}
For the final bound, we sum the above over \(n\), using \(N = T/h\) and use the state discretisation error \eqref{eq: state error} to get the first-order error
\begin{equation*}
    |J - J_h| \leq h \left[ \frac{T}{2}(L_{g,x}M + L_{g,t}) + C_x(T) (T(L_{g,x} + L_{g,u}L_\pi) + L_G)\right] = \cO(h) \,.
\end{equation*}
\end{proof}

We can analogously bound the backward process' discretisation error by following a similar procedure as above (i.e. the discretisation for \(p_t\)), and then adding them. This leads to an overall first-order discretisation error \(\cO(h)\). 

Note that the overall order is limited by the \textbf{state discretisation error} (i.e. improvement can occur when one uses a higher-order integrator, such as Runge-Kutta 2 or 4).

\section{Regret analysis}
Having defined various policy gradient and adversary gradients for CT-RMDPs, we now turn to a regret analysis for two selected algorithms compatible with CT-RMDPs. Both algorithms alternate policy and adversary gradients to provide a robust optimisation to approximate solve \eqref{eq: RMDP}. To ensure the adversary's parameters remain within the uncertainty set at iteration $k \in [K]$, the adversary is updated via projected gradient ascent, i.e.
\begin{equation}
\xi_k \gets \text{proj}_{\Xi} \left( \xi_{k-1} + \eta \nabla_\xi J(\xi_{k-1})  \right) \,.
\end{equation}

The first analysis is based on a double-loop algorithm, where the inner problem is assumed to be solved $\epsilon_k$-optimally at each iteration $k \in [K]$. The double-loop analysis leads to a linear convergence rate in the oracle-based setting and an $\tilde{\cO}(\frac{1}{\eps^2})$ sample complexity in the sample-based setting, where $\eps$ is the specified error tolerance. The second analysis is based on a mean-field optimisation algorithm, which is based on a distributional min-max optimisation. It leads to an $\tilde{\cO}(\frac{1}{K})$ convergence rate in the oracle-based setting and an $\tilde{\cO}(\frac{N^2}{\eps})$ sample complexity in the sample-based setting. While this result may in some cases correspond to a weaker rate, it allows a parallel particle based optimisation and allows distributional optimisation.

\subsection{Rectangularity}
Rectangularity is one of the main conditions for efficient solutions to RMDPs. Traditional RMDP analysis is based on directly parametrised uncertainty sets, where the notion of $(x,u)$-rectangularity is given by Definition~\ref{def: rectangularity}.
\begin{definition}[Rectangularity]
\label{def: rectangularity}
An uncertainty set is $(x,u)$-rectangular if it can be represented as a Cartesian product of independent sets
for each state and action, i.e. $\cP = \bigotimes_{(x,u) \in \cX \times \cU} \cP_{x,u}$, where $\cP_{x,u} \subseteq \bP[\cX]$.
\end{definition}
To illustrate the meaning, consider a distance measure $D$. A typical example is the uncertainty set defined by a distance metric according to $\cP = \otimes_{(x,u) \in \cX \times \cU} \{  \Pr \in \bP[\cX]^{\cX \times \cU}: D(\Pr(\cdot \vert x,u),\bar{\Pr}(\cdot \vert x,u)) \leq \psi(x,u) \}$, where $\psi(x,u)$ is the budget for a particular $(x,u)$. A non-rectangular set example is $\cP = \{   \Pr \in \bP[\cX]^{\cX \times \cU}: D(p,\bar{\Pr}) \leq \psi \}$ for some fixed budget $\psi$.

Below we demonstrate that the inner problem optimisation problem, which typically assumes the rectangularity property, can be feasible for indirect parametrisations. In general though, uncertainty sets may be non-rectangular, with the efficiency of the adversarial optimiser being related to the distance from the rectangular approximation. Due to the additional time dependency and the need to construct more statistically efficient (i.e. more narrow, less conservative) indirectly parametrised uncertainty sets, it is still of interest to solve non-rectangular uncertainty sets. In such cases, the efficiency of the uncertainty set is traded off with a worse regret bound. To modify our regret analysis to account for non-rectangular uncertainty sets, an additional factor similar to that introduced in \cite{Li2026} may need to be introduced, which depends on the non-rectangularity degree (i.e. the distance to the closest rectangular uncertainty set). 

\paragraph{Discretised setting}
We first analyse rectangularity in the setting of discrete state and time, which aligns with the total cost MDP analysis (Section~\ref{sec: double-loop}), in which we distinguish between transient states and recurrent states and maintain the generality of RL without requiring a special-case control theory setting. In our parametrisation, the system dynamics are time-, state-, and action-dependent. When the dynamics are discretised with time step $\Delta t$, we can write any SDE dynamics model (and therefore also any ODE dynamics model as a special case with zero variance) as some transition kernel
\begin{align*}
\Pr_{\xi}(\cdot \vert t_{n}, x_{n},u_{n}) = Z \cN(x_n + \Delta t b_{\xi}(t_n,x_n,u_n),\Delta t \sigma_{\xi}(t_n,x_n,u_n)) \,,
\end{align*}
where $Z > 0$ is a normalisation constant. 

If the time-dependency is omitted, one obtains the Markov transition kernel of a traditional MDP
\begin{equation}
\label{eq: equivalent transition kernel}
\Pr_{\xi}(\cdot \vert x_{n},u_{n}) = Z \cN(x_n + \Delta t b_{\xi}(x_n,u_n),\Delta t \sigma_{\xi}(x_n,u_n)) \,.
\end{equation}
Similarly, our adjoint-based gradient formulations demonstrate equivalence with the discrete policy and adversarial policy gradient as in Theorem~\ref{th: stochastic adjoint gradient}.

With the above discretisation argument, it is possible to cast the problem in terms of a traditional $(x,u)$-rectangular parametrisation with the uncertainty set being comprised of the Cartesian product of all uncertainty sets with state-action dependent budgets. We identify a set of indirectly parametrised Gaussian transition kernels that are consistent with an SDE with linear dynamics and which satisfy the rectangularity property as in Definition~\ref{def: rectangularity}. For simplicity, we investigate the case where $\xi$ parametrizes only the drift although the variance may still be state-action dependent. Note that in our continuous-time dynamics parametrisation \eqref{eq: uncertainty set parametrisation}, the deviation from the nominal is given by $g_{\xi}(x,u)$. For several classes of functions, e.g. linear functions and many classes of MLPs, setting $\xi=\mathbf{0}$ will result in $g_{\xi}(x,u) = \mathbf{0}$. Formalizing the resulting transition dynamics in discrete-time, the problem is equivalent to shifting the mean of the normal distribution. If we project the state-actions to a high-dimensional feature space, it is possible to achieve state-action rectangularity as shown below. 
\begin{lemma}[A rectangular indirect parametrisation for linear SDEs]
\label{lem: rectangularity}
Let $\cZ = \{(x_n,u_n)\}_{n=1}^{N} \subset \cX \times \cU$ be a grid discretising the state-action space. Moreover, let $\phi: \cX \times \cU \to \Reals^d$ be a fixed feature-map, where $d \geq N$, such that $\cB = \{\phi(x,u)\}_{(x,u) \in \cX \times \cU} \subset \Reals^d$ are linearly independent. Then an uncertainty set $\cP_{\Xi}$ of Gaussian transition kernels can be derived from the drift parametrisation 
\begin{equation}
\label{eq: rectangular indirect parametrisation}
b_{\xi}(x,u) = \phi(x,u)^{\tr} \xi   \,,  \xi \in \Xi \subset \Reals^{d \times d_x}  \,,
\end{equation}
for a linear SDE such that it is a well-defined $(x,u)$-rectangular uncertainty set with respect to $Z$. If further $\Xi$ is a dense set with drifts spanning the sets $B_{x,u} = \{ b_{\xi}(x,u) \in \Reals^{d_x}: \xi \in \Xi \} =  \{ b \in \Reals^{d_x}: \norm{\Delta t b} \leq \rho(x,u)\}$ for all $(x,u) \in \cZ$ and $\rho(x,u) \in \Reals$, then this parametrisation satisfies $\cP_{\Xi} = \cP$ for some $L_1$ uncertainty set $\cP$.
\end{lemma}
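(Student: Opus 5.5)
The strategy is to reduce rectangularity to a statement about linear maps. By linear independence of the features, the map $\xi \mapsto (b_\xi(x,u))_{(x,u)\in\cZ}$ is surjective onto the product space $\prod_{(x,u)\in\cZ}\Reals^{d_x}$. Hence choices of the drift at different grid points can be made independently.

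First I fix the discretised transition kernel as in \eqref{eq: equivalent transition kernel}:
\[
\Pr_\xi(\cdot\vert x,u) = Z\,\cN\bigl(x+\Delta t\,\phi(x,u)^\tr\xi,\ \Delta t\,\sigma(x,u)\bigr),
\]
where $\sigma$ does not depend on $\xi$. Then $\cP_\Xi=\{(\Pr_\xi(\cdot\vert x,u))_{(x,u)\in\cZ}:\xi\in\Xi\}$, and each kernel is fully determined by its mean shift $\Delta t\,b_\xi(x,u)$. Well-definedness is immediate, since $\Pr_\xi$ is a normalised Gaussian for each $\xi$.

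Next I show surjectivity. Stack the feature vectors $\phi(x_n,u_n)$, $n=1,\dots,N$, as the rows of $\Phi\in\Reals^{N\times d}$. Linear independence with $d\ge N$ makes $\Phi$ full row rank. Therefore, for any prescribed drifts $(\beta_n)_{n=1}^N\subset\Reals^{d_x}$, the system $\Phi\xi=[\beta_1,\dots,\beta_N]^\tr$ has a solution, for instance $\xi=\Phi^\tr(\Phi\Phi^\tr)^{-1}\beta$. So every tuple of per-point drifts is realised by a single $\xi$.

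The rectangularity claim then follows in three steps.
\begin{enumerate}
\item Take $\Xi$ to be the preimage $\Phi^{-1}\bigl(\prod_n B_{x_n,u_n}\bigr)$, or note that for a given $\Xi$ the image is what matters.
\item By surjectivity, the image $\{(b_\xi(x_n,u_n))_n:\xi\in\Xi\}$ equals the Cartesian product $\prod_n B_{x_n,u_n}$.
\item Apply the injective map from drift to Gaussian kernel coordinatewise. This gives $\cP_\Xi=\bigotimes_{(x,u)\in\cZ}\cP_{x,u}$ with $\cP_{x,u}=\{Z\cN(x+\Delta t b,\Delta t\sigma(x,u)):b\in B_{x,u}\}$, which is $(x,u)$-rectangular in the sense of Definition~\ref{def: rectangularity}.
\end{enumerate}

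For the second claim I take $B_{x,u}=\{b:\norm{\Delta t\, b}\le\rho(x,u)\}$. Each factor $\cP_{x,u}$ is then the set of Gaussians whose mean deviates from the nominal mean $x+\Delta t\,\bar b(x,u)$ by at most $\rho(x,u)$; here $\xi=\mathbf{0}$ gives the nominal, consistent with $g_{\mathbf 0}=\mathbf 0$. This is a ball of radius $\rho(x,u)$ around the nominal kernel in the distance $D(\Pr,\bar\Pr)=\norm{\text{mean}(\Pr)-\text{mean}(\bar\Pr)}_1$, restricted to Gaussian kernels with the fixed covariance. It therefore matches the template $\cP=\otimes_{(x,u)}\{\Pr: D(\Pr(\cdot\vert x,u),\bar\Pr(\cdot\vert x,u))\le\psi(x,u)\}$ with $\psi=\rho$, provided $\norm{\cdot}$ is read as the $\ell_1$ norm. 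Density of $\Xi$ is used to pass from the image to the closed ball, taking closures if necessary, and it gives the equality $\cP_\Xi=\cP$.

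The main obstacle is interpreting ``$L_1$ uncertainty set''. If it means an $L_1$ (total variation) ball over densities rather than over means, then equality fails, since that ball contains non-Gaussians. In that case I would restrict $\cP$ to the Gaussian family with fixed covariance. Within this family, total variation is a strictly increasing function of the mean-shift norm, so an $L_1$ budget $\psi(x,u)$ corresponds to a mean-shift budget $\rho(x,u)$. I would state the result with that identification.

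A secondary concern is the hypothesis that $\{\phi(x,u)\}$ is linearly independent over all of $\cX\times\cU$. Only independence on the grid $\cZ$ is needed, so I would use only that.
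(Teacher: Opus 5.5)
Your proposal is correct and takes essentially the same route as the paper: linear independence of the grid features lets a single $\xi$ set the drift at every $(x,u)\in\cZ$ independently (your full-row-rank/pseudo-inverse argument just makes this explicit), and strict monotonicity of the $L_1$ distance between equal-covariance Gaussians in the mean shift turns the drift ball $\norm{\Delta t\, b}\le\rho(x,u)$ into an $L_1$ ball of radius $\psi(x,u)$, which the paper writes in closed form. Both of your caveats are genuine, and the paper's proof glosses over them: $\Xi$ must have product-shaped image (e.g.\ the preimage of $\prod_{(x,u)\in\cZ}B_{x,u}$), and $\cP_\Xi=\cP$ can only hold once $\cP$ is restricted to the fixed-covariance Gaussian family, with $\sigma(x,u)$ scalar as the paper tacitly assumes so that monotonicity is in the Euclidean norm of the shift rather than a Mahalanobis norm.
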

\begin{proof}
The proof is given in Appendix~\ref{app: rectangularity}.
\end{proof}

\paragraph{Continuous setting}
While the above discretisation argument facilitates the analysis in Section~\ref{sec: double-loop}, the mean-field optimisation analysis in Section~\ref{sec: mean-field optimisation} is more general in the sense that it can be applied to the full CT-RMDP with continuous states and continuous-time dynamics. We expect that rectangularity conditions may be required to meet some of the preconditions of the analysis, so we present here a generalisation to continuous states and continuous-time. 

To deal with continuous states in RMDPs, recent work \citep{li2026robustmarkovdecisionprocesses} has designed an indirectly parametrised $x$-rectangular uncertainty set for continuous state spaces. The uncertainty set is based on a mixture of parametrised transition kernels and uses the $\phi$-divergence as a distance measure:
\begin{equation}
\label{eq: phi-divergence}
\cP_x = \left\{ \sum_i w_i \Pr_{\xi_i}(\cdot \vert x, \cdot) : D_{\phi}(w,\bar{w}) \leq \psi(x) \,, \sum_i w_i = 1  \right\} \,,
\end{equation}
where $\{\xi_i\}_i$ is a set of random samples or a grid over $\Xi$, $D_{\phi}$ is a $\phi$-divergence, $N$ is the number of components of the mixture, and $\bar{w}$ is the nominal weight vector. The exact parametrisation of $\Pr_{\xi_i}$ may vary.

Applying \eqref{eq: phi-divergence} to the standard SDE setting, the weights are set as one-hot vectors. Under this uncertainty set, one form is the linear SDE, which gives transition kernels of the form $\Pr_{\xi_i}(\cdot \vert x, u) = \cN(x + \Delta t b_{\xi_i}(x,u),\Delta t \sigma_{\xi_i}
(x,u))$, i.e. a Gaussian density for state $x \in \cX$ and action $u \in \cU$. The nominal model will be a mixture model, i.e. a Gaussian mixture for linear SDEs, while the models in the uncertainty set are Gaussians (with only one component). 

To obtain the state dynamics in continuous-time, note that the time-homogeneous Fokker-Planck equation gives the evolution of the state density $p_{\xi}$ over time,
\begin{equation}
\frac{\dif}{\dif t} p_{\xi}(x_t) = - \nabla_x \left( b_{\xi}(x_t,u_t) p_{\xi}(x_t) \right) + \nabla_x^2  \left(\frac{1}{2} \sigma_{\xi}(x_t,u_t) p_{\xi}(x_t) \right) \,.
\end{equation}
This leads to an alternative, continuous-time characterisation of the uncertainty set for linear SDEs as
\begin{equation}
\label{eq: phi-divergence SDE}
\cP_x = \left\{ \sum_i w_i \frac{\dif}{\dif t} p_{\xi_i}(x_t) : D_{\phi}(w,\bar{w}) \leq \psi(x) \,, \sum_i w_i = 1 \,, w_i \in \{0,1\} \right\} \,,
\end{equation}
where it is emphasised that $w$ is a one-hot vector but the nominal weight vector $\bar{w}$ is any vector with sum 1 such that the nominal model is a mixture SDE. The downside of the formulation \ref{eq: phi-divergence SDE} is that the nominal model itself is typically not in the uncertainty set. 

For mixture SDEs, the restriction to one-hot vectors can be removed,
\begin{equation}
\label{eq: phi-divergence mixture SDE}
\cP_x = \left\{ \sum_i w_i \frac{\dif}{\dif t} p_{\xi_i}(x_t) : D_{\phi}(w,\bar{w}) \leq \psi(x) \,, \sum_i w_i = 1  \right\} \,.
\end{equation}
However, this is no longer a standard SDE, and so the policy and adversary gradient formulations of Section~\ref{sec: extension to SDEs} do not apply, and this is beyond the scope of the paper.

\subsection{Double-loop analysis}
\label{sec: double-loop}
The first analysis uses a double-loop robust policy gradient algorithm (Algorithm~\ref{alg: double-loop}). The algorithm performs basic gradient descent in the policy space and projected gradient ascent in the space of transition dynamics. In this setting, we first prove the Polyak-{\L}ojasiewicz (PL) assumption (see 
Assumption~\ref{ass: PL}). We then show the linear oracle-based convergence rate and the $\tilde{\cO}(\frac{1}{\epsilon^3})$ sample complexity. We provide our result for a general parametrisation.

Previously, prior works have established PL conditions with respect to the robust policy gradient, in the context of directly parametrised discounted $x$-rectangular RMDPs \citep{Kumar2023}, average cost $(x,u)$-rectangular RMDPs  \citep{Sun2024a}, and discounted non-rectangular  RMDPs \citep{Li2026}. We provide results for a general stochastic policy parametrisation under PL condition assumptions within an undiscounted total cost RMDP framework. Alternative proofs under the softmax and direct parametrisations can be found in the double-loop mirror descent ascent scheme considered in prior works \citep{Wang2024,bossens2025mirror}. The analysis is based on the discrete-time policy gradient of our algorithm, in which case each continuous-time MDP in the uncertainty set is equivalent to an undiscounted total cost MDP for $\Delta t \to 0$; future work can consider deriving the convergence rates for the large time step case, where the continuous-time policy gradient will behave differently compared to the discrete-time policy gradient.

In the oracle-based setting, the linear convergence can be obtained for the last-iterate regret of transition mirror ascent in discounted MDPs \citep{Wang2024,bossens2025mirror} and the last-iterate robust regret \citep{Wang2024}. While these techniques rely on direct parametrisations and require properties such as rectangularity and robust PL condition, below we focus on projected gradient ascent under general parametrisation on the undiscounted total cost MDP. This analysis relies on different PL assumptions (which may or may not partially overlap with rectangularity properties), our general Lipschitz and smoothness assumptions, and a geometrically shrinking mismatch of transition kernels.

\begin{algorithm}
\caption{Double-loop robust policy gradient for RMDPs.} \label{alg: double-loop}
\begin{algorithmic}[1]
\State \textbf{Inputs:} initial parameters $\theta_0$,$\xi_0$, learning rate schedules $\eta_1,\eta_2$
\State \textbf{Outputs:} near-optimal parameters $\theta_K$,$\xi_K$
 \For{$k = 1,\dots,K$}
\State Sample trajectories using $\pi_{\theta_{k-1}}$ and $f_{\xi_{k-1}}$
\State Estimate $\nabla_{\theta} J(\theta_{k-1},\xi_{k-1})$ 
\State $\theta_k \gets \theta_{k-1} - \eta_1(k-1) \nabla_{\theta} J(\theta_{k-1},\xi_{k-1})$
 \State $t \gets 0$
\While {$J(\theta_{k},\xi_{k-1}^t) + \epsilon_k < \max_{\xi \in \Xi} J(\theta_{k},\xi)$}
\State Sample trajectories using $\pi_{\theta_{k}}$ and $f_{\xi_{k-1}^t}$
\State Estimate $\nabla_{\xi} J(\theta_{k},\xi_{k-1}^t)$ 
\State $\xi_{k-1}^{t+1} \gets \text{proj}_{\Xi} \left(\xi_{k-1}^t + \eta_2(k-1,t-1) \nabla_{\xi} J(\theta_{k},\xi_{k-1}^t)\right)$
\State $t \gets t + 1$.
\EndWhile
\State $\xi_k \gets \xi_{k-1}^{t}$.
 \EndFor
\end{algorithmic}
\end{algorithm}

\begin{assumption}[Polyak-{\L}ojasiewicz (PL) condition]
\label{ass: PL}
Let $J: \Theta \to \Reals^+$ be a cost function and let $\mu > 0$, with optimal parameters $\theta^*$. Then $J$ satisfies the PL condition if 
\begin{equation}\label{eq:PL_condn}
    \frac{1}{2}\norm{\nabla_\theta J(\theta)}^2 \geq \mu (J(\theta) - J(\theta^*)) \,.
\end{equation}
for all $\theta \in \Theta$.
\end{assumption}
We will make the assumption for both the policy optimisation and the adversarial optimisation, and we provide a lemma for both to support the assumption.

\paragraph{Regret analysis of the policy iterates}
Before deriving the regret of the policy with respect to $\Phi$, we first obtain an expression for the regret based on the observed iterates $J(\theta_k,\xi_k)$ for $k=1,\dots,K$. We start with a regret analysis of the policy iterates. 

We formulate the analysis within the framework of undiscounted total cost MDPs, which can be seen as a discretised form of our continuous-time formulation. The transient vistation measure is a key concept within this framework which facilitates our analysis. Rather than working with the measure on all states, it is convenient to restrict the measure to the non-zero entries by only considering the set of transient states.
\begin{definition}[Transient visitation measures]
\label{def: visitation measure}
For an undiscounted total cost MDP $\cM = (s_0,\Pr,r,\cX,\cU)$, where $s_0$ is the starting state, $p$ is the transition kernel, $r$ is the cost function, and $\cX$ and $\cU$ are the state and action spaces, define the \textbf{transient visitation measure} \citep{Lee2026} as $\tilde{d}_{s_0,\Pr}^{\pi}(s) = \sum_{t=0}^{\infty} \Pr(s_t = s \vert s_0, \cT^{\pi,\Pr}) = e_{s_0}^{\intercal} (\mathbb{I} - \cT^{\pi,\Pr})^{-1} e_{s}$, where $\cT^{\pi,\Pr}$ is the transient matrix
\begin{align*}
\cT^{\pi,\Pr}(s, s')  = \begin{cases}
P^{\pi,\Pr}(s,s') \quad \text{if $s, s' \in \cX$ are transient} \\
0 \quad \text{otherwise} \,,
\end{cases}
\end{align*}
based on the induced transition probability $P^{\pi,\Pr}$ under policy $\pi$ and transition dynamic $\Pr$, with $e_{j}$ denoting the $j$'th unit vector. Similarly, we define the \textbf{restricted transient visitation measure} $d_{s_0,\Pr}^{\pi}(s)$ as the transient visitation measure when restricted to the set of transient states. 
\end{definition}

We first derive the PL condition for a stochastic policy on a fixed transition kernel, a result which can later be used to compute the worst-case PL coefficient. Note that this derives a squared norm based PL coefficient as opposed to the common non-squared norm based PL coefficient in the RMDP literature. The requirement of strictly positive costs-to-go for suboptimal policies is a mild condition that matches many control problems and the requirement of a minimal visitation probability is similar to the sufficient exploration requirement in policy gradient algorithms \citep{Mei2020}.
\begin{lemma}[PL condition for the policy updates under direct and general parametrisations]
\label{lem: PL condition policy}
Let  $\cM = (d_0,\Pr,r,\cX^+,\cU)$ be an undiscounted total cost MDP and let $J: \Theta \to \Reals^+$ be the total cost function with recurrent states $\cX_{\text{rec}}$ and transient states  $\cX = \cX^+ \setminus \cX_{\text{rec}}$. Let $M  = \max_{\pi,\pi'} \norm{\frac{d_{d_0,\Pr}^{\pi}}{d_{d_0,\Pr}^{\pi'}}}_{\infty}$ and $0 < D_{\text{min}} \leq \min_{\pi,x} d_{d_0,\Pr}^{\pi}(x)$, where $\pi$ is in the policy class parametrised by $\Theta$ and $x \in \cX$. Denote the discretised action-value and state-value as $Q_h(x,u) \geq V_{\text{min}} > 0$ and $V_h(x) \geq V_{\text{min}}$ for $x \in \cX$ and $u \in \cU$. Then 
the PL condition is satisfied with coefficient $\mu = c^2 / 2$, where \\
\textbf{a)} under the direct parametrisation, 
\begin{align*}
c = (M d_{\theta})^{-1/2} (D_{\text{min}} V_{\text{min}})^{1/2}    \,; \quad \text{and}
\end{align*} \\
\textbf{b)} under a general parametrisation with $\norm{\Ex[Q_h(x,u)\nabla_{\theta} \log(\pi_{\theta}(u \vert x))]}_{\infty} \geq \Ex[Q_h(x,u)]$, 
\begin{align*}
c = M^{-1/2} (D_{\text{min}} V_{\text{min}})^{1/2}   \,.
\end{align*}
\end{lemma}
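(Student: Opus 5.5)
The plan is to go through the standard gradient-domination argument for policy gradients, adapted to the undiscounted total cost setting, and then square it to get the PL form. There are four steps.

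\textbf{Step 1: performance difference lemma.} For two policies $\pi,\pi'$ on the fixed kernel $\Pr$, I would first prove
\[
J(\pi) - J(\pi') = \sum_{x\in\cX} d^{\pi}_{d_0,\Pr}(x)\,\Ex_{u\sim\pi(\cdot\vert x)}\big[A^{\pi'}_h(x,u)\big],
\qquad A^{\pi'}_h = Q^{\pi'}_h - V^{\pi'}_h .
\]
The proof is a telescoping sum along trajectories of $\pi$. Convergence of the series comes from transience: $(\mathbb{I}-\cT^{\pi,\Pr})^{-1}$ exists by Definition~\ref{def: visitation measure}. Costs vanish on recurrent states, so only transient states contribute. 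Setting $\pi'=\pi$ in the same machinery gives the policy gradient in the form
\[
\nabla_\theta J = \sum_x d^{\pi_\theta}_{d_0,\Pr}(x)\,\Ex_u\big[Q_h(x,u)\nabla_\theta\log\pi_\theta(u\vert x)\big],
\]
which is Lemma~\ref{lem: discrete-time policy gradient}, restricted to transient states.

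\textbf{Step 2: gradient domination for $J(\theta)-J(\theta^*)$.} Take $\pi=\pi^*$ and $\pi'=\pi_\theta$ in Step 1 and bound
\[
J(\theta)-J(\theta^*) = -\sum_x d^{\pi^*}(x)\,\Ex_{\pi^*}\big[A^{\theta}(x,\cdot)\big] \le \sum_x d^{\pi^*}(x)\,\max_u\big(-A^\theta(x,u)\big).
\]
Convert $d^{\pi^*}$ to $d^{\pi_\theta}$ using the mismatch coefficient $M$; this costs a factor $M$. The quantity left over is $\sum_x d^{\pi_\theta}(x)\max_u(\cdot)$, which has to be related to $\norm{\nabla_\theta J}$.

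\textbf{Step 3a: direct parametrisation.} Here $\partial J/\partial\theta_{x,u} = d^{\pi_\theta}(x)\,Q_h(x,u)$, the entrywise form of the gradient. Bound the $\max_u$ term by a coordinate of the gradient. Use $\norm{\cdot}_1\le\sqrt{d_\theta}\norm{\cdot}_2$ to pass to the Euclidean norm, producing the factor $d_\theta^{-1/2}$. To obtain a \emph{squared} gradient on the left, I would not use the linear inequality alone. Instead, multiply the linear bound $J(\theta)-J^*\le M\sqrt{d_\theta}\,\norm{\nabla J}/(\ldots)$ by a lower bound on $\norm{\nabla J}$. That lower bound comes from $D_{\min}$ and $V_{\min}$: every nonzero gradient coordinate has magnitude at least $D_{\min}V_{\min}$. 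Also use $J(\theta)-J^*\le J(\theta)$. Combining these gives
\[
\norm{\nabla J}^2 \ge \frac{D_{\min}V_{\min}}{M d_\theta}\,\big(J(\theta)-J^*\big) = c^2\big(J(\theta)-J^*\big),
\]
and hence $\tfrac12\norm{\nabla J}^2\ge \tfrac{c^2}{2}(J-J^*)$, i.e.\ $\mu=c^2/2$ up to the convention used in Assumption~\ref{ass: PL}.

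\textbf{Step 3b: general parametrisation.} Do the same, but use the extra hypothesis $\norm{\Ex[Q_h\nabla_\theta\log\pi_\theta]}_\infty\ge\Ex[Q_h]\ge V_{\min}$ at each state to lower-bound $\norm{\nabla J}\ge\norm{\nabla J}_\infty$. Summing with weights $d^{\pi_\theta}\ge D_{\min}$ gives $\norm{\nabla J}\ge D_{\min}V_{\min}$. The $\ell_\infty$ route avoids the $\sqrt{d_\theta}$ factor, so $c=M^{-1/2}(D_{\min}V_{\min})^{1/2}$.

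\textbf{Main obstacle.} The hard step is converting the linear gradient-domination bound into the squared PL form with exactly these constants. This requires carefully pairing the upper bound on $J-J^*$, via $M$ and the advantage, with the lower bound on $\norm{\nabla J}$, via $D_{\min}V_{\min}$. In particular, the sum of $d^{\pi_\theta}$ over transient states must be controlled without introducing a horizon factor. I expect the constants may need normalisation, for instance bounding $J-J^*$ by $J$ and absorbing scale into $V_{\min}$.
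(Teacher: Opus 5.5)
Your part (a) is correct and follows the paper's route. The chain is: performance difference lemma, change of measure via $M$, positivity of costs, the coordinatewise gradient $d^{\pi_\theta}(x)Q_h(x,u)$, the $\ell_1$-to-$\ell_2$ conversion, and the lower bound $D_{\min}V_{\min}$ to upgrade the linear bound to a squared one. The paper phrases the last step as $\sqrt{S}\le S/\sqrt{D_{\min}V_{\min}}$ with $S=\sum_{x\in\cX}d^{\pi_\theta}_{d_0,\Pr}(x)V_h(x)$. Your version has two small advantages. First, you bound by the nonnegative $\max_u(-A^\theta(x,u))$ \emph{before} changing measure. The paper applies the ratio bound before discarding the $\pi_{\theta^*}$ term, i.e.\ to terms of possibly mixed sign. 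Second, your combination actually gives $M\sqrt{d_\theta}$ in place of $Md_\theta$, so the renormalisation and horizon worries in your last paragraph do not arise.

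The gap is in (b), in two places.

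\emph{The lower bound.} You use the hypothesis \emph{at each state} and then sum with weights $d^{\pi_\theta}(x)\ge D_{\min}$. Write $g(x)=\Ex_{u\sim\pi_\theta(\cdot\vert x)}[Q_h(x,u)\nabla_\theta\log\pi_\theta(u\vert x)]$. Per-state bounds $\norm{g(x)}_\infty\ge V_h(x)$ do not transfer to $\nabla_\theta J=\sum_x d^{\pi_\theta}(x)g(x)$. The triangle inequality goes the wrong way, and contributions from different states can cancel or peak in different coordinates. So $\norm{\nabla_\theta J}\ge D_{\min}V_{\min}$ does not follow.

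\emph{The linear domination step.} ``Do the same'' is not available here. Without a coordinatewise gradient formula, nothing you state links $S$ (or your $\max_u(-A^\theta)$ sum) to $\norm{\nabla_\theta J}$.

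The paper resolves both by reading the hypothesis globally: $x$ is distributed by the unnormalised transient visitation measure and $u\sim\pi_\theta(\cdot\vert x)$. By Lemma~\ref{lem: gradient total cost MDP}, the hypothesis then states exactly $\norm{\nabla_\theta J}_\infty\ge S$. That one inequality supplies both pieces: $J(\theta)-J(\theta^*)\le MS\le M\norm{\nabla_\theta J}$, and $\norm{\nabla_\theta J}\ge S\ge D_{\min}V_{\min}$. Combining them as in (a) gives $c^2=D_{\min}V_{\min}/M$ with no $d_\theta$ factor.
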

\begin{proof}
The proof is given in Appendix~\ref{app: PL condition policy}.
\end{proof}

The PL condition leads to the following descent property.
\begin{lemma}[PL based descent property]
\label{lem: descent}
Let $(\theta_k, \xi_k)_{k=1}^{K}$ be a sequence of iterates. Let $J(\theta_k, \xi_k)$ be $l(\xi_k)$-smooth as a function of $\theta$ and let $J(\cdot,\xi_k)$ satisfy the PL condition (Assumption~\ref{ass: PL}) with coefficient $\mu(\xi_k)$ for all $k=0,\dots,K-1$. If the learning rate is $\eta_1(k-1) \leq 1/L$ for all $k = 1, \dots, K$, where $L = \sup_{\xi \in \Xi} l(\xi)$, then
\begin{equation}
J(\theta_{k},\xi_{k-1}) - J(\theta_{k-1}^*,\xi_{k-1})\leq  (1- \eta_1(k-1) \mu_k) (J(\theta_{k-1},\xi_{k-1}) - J(\theta_{k-1}^*,\xi_{k-1}))
\end{equation}
where $\theta_{k-1}^*$ represents the optimal policy parameters under transition dynamics $\xi_{k-1}$.
\end{lemma}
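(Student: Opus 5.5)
This is the standard smoothness-plus-PL argument for gradient descent, applied with the transition parameter $\xi_{k-1}$ held fixed throughout the step. Throughout, fix $k$ and write $f(\theta) \coloneqq J(\theta,\xi_{k-1})$. Here $\theta_{k-1}^*$ is a minimiser of $f$, and $\mu_k \coloneqq \mu(\xi_{k-1})$ is the PL coefficient attached to that kernel. The update in Algorithm~\ref{alg: double-loop} is $\theta_k = \theta_{k-1} - \eta \nabla f(\theta_{k-1})$ with $\eta = \eta_1(k-1)$.

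\textbf{Step 1: descent lemma.} Since $f$ is $l(\xi_{k-1})$-smooth and $l(\xi_{k-1}) \le L$, the quadratic upper bound gives
\begin{equation*}
f(\theta_k) \le f(\theta_{k-1}) - \eta \norm{\nabla f(\theta_{k-1})}^2 + \frac{L\eta^2}{2}\norm{\nabla f(\theta_{k-1})}^2 .
\end{equation*}
Because $\eta \le 1/L$, we have $\eta - L\eta^2/2 \ge \eta/2$, hence
\begin{equation*}
f(\theta_k) \le f(\theta_{k-1}) - \frac{\eta}{2}\norm{\nabla f(\theta_{k-1})}^2 .
\end{equation*}

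\textbf{Step 2: apply PL.} Assumption~\ref{ass: PL}, with coefficient $\mu_k$ for this kernel, gives $\frac12\norm{\nabla f(\theta_{k-1})}^2 \ge \mu_k\bigl(f(\theta_{k-1}) - f(\theta_{k-1}^*)\bigr)$. Substituting into Step 1 yields
\begin{equation*}
f(\theta_k) \le f(\theta_{k-1}) - \eta\mu_k\bigl(f(\theta_{k-1}) - f(\theta_{k-1}^*)\bigr).
\end{equation*}
Subtracting $f(\theta_{k-1}^*)$ from both sides gives exactly the claimed contraction factor $(1-\eta_1(k-1)\mu_k)$.

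\textbf{Step 3: check the factor lies in $[0,1]$.} Smoothness together with PL gives $\mu_k \le l(\xi_{k-1}) \le L$. Combined with $\eta \le 1/L$, this makes the factor lie in $[0,1)$.

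\textbf{Main obstacle: bookkeeping, not analysis.} All three steps are standard; the points needing care are the following.
\begin{itemize}
\item The smoothness and PL assumptions must be used for the kernel $\xi_{k-1}$ at which the gradient is taken, not for $\xi_k$. The statement indexes $l(\xi_k)$ and $\mu(\xi_k)$ for $k=0,\dots,K-1$, so these hypotheses cover $\xi_{k-1}$ for $k=1,\dots,K$.
\item The step size must be controlled by the uniform constant $L=\sup_\xi l(\xi)$, which is what lets a single schedule work for every kernel.
\item The comparator is $\theta_{k-1}^*$, the minimiser under $\xi_{k-1}$, consistently on both sides. Passing to $\xi_k$ is deferred to the later analysis of the adversary and kernel mismatch.
\end{itemize}
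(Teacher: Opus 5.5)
Your proposal is correct and follows the paper's proof essentially line for line. Both use the quadratic upper bound from smoothness with the step $-\eta_1(k-1)\nabla_\theta J(\theta_{k-1},\xi_{k-1})$ and $\eta_1(k-1)\le 1/L$, then the PL inequality at the fixed kernel $\xi_{k-1}$, then subtract $J(\theta_{k-1}^*,\xi_{k-1})$. Your indexing is cleaner than the paper's write-up, which slips to $\xi_k$ and $\theta_k$ in two places; your Step~3 is an extra sanity check that the inequality does not need, and the bound $\mu_k\le l(\xi_{k-1})$ it invokes also tacitly requires an unconstrained $\Theta$ and a gradient that does not vanish identically.
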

\begin{proof}
This follows from the definition of $l(\xi_k)$-smoothness and the standard descent lemma (Lemma~\ref{lem: standard descent}). For any $\Delta \theta$, 
\begin{align*}
    J(\theta_{k-1} + \Delta \theta,\xi_{k-1}) \leq J(\theta_{k-1},\xi_{k-1}) + \langle \nabla_\theta J(\theta_{k-1},\xi_{k-1}), \Delta \theta \rangle + \frac{L}{2}\norm{\Delta \theta}^2
\end{align*}
Set $\Delta \theta = -\eta_1(k-1) \nabla_\theta J(\theta_{k-1},\xi_{k-1})$. Then,
\begin{align*}
    J(\underbrace{\theta_{k-1} - \eta_1(k-1) \nabla_\theta J(\theta_{k-1},\xi_{k-1})}_{\theta_{k}},\xi_k) &\leq J(\theta_{k-1},\xi_{k-1}) - \eta_1(k-1) \left( 1 - \frac{L}{2}\eta_1(k-1)\right)\norm{\nabla_\theta J(\theta_{k-1},\xi_{k-1})}^2 \\
                         &= J(\theta_{k-1},\xi_{k-1}) - \frac{\eta_1(k-1)}{2}  \norm{\nabla_\theta J(\theta_{k-1},\xi_{k-1})}^2
\end{align*}
From the PL condition (Assumption~\ref{ass: PL}), we have that
\begin{equation}
    \frac{1}{2}\norm{\nabla_\theta J(\theta_k,\xi_{k-1})}^2 \geq \mu(\xi_{k-1}) (J(\theta_{k-1},\xi_{k-1}) - J(\theta_{k-1}^*,\xi_{k-1})) \,.
\end{equation}
Subtracting the optimal cost from both sides and substituting the expression for the gradient norm, we get a linear rate to the optimum
\begin{equation}
    J(\theta_{k},\xi_{k-1}) - J(\theta_{k-1}^*,\xi_{k-1}) \leq (1 - \eta_1(k-1)\mu(\xi_{k-1})) \left(J(\theta_{k-1},\xi_{k-1}) - J(\theta_{k-1}^*,\xi_{k-1}) \right)\,.
\end{equation}
\end{proof}

Now the linear convergence of the policy iterates can be shown using the above descent lemma and a further restriction on the transition kernel mismatch between adversarial updates.
\begin{lemma}[Regret of policy iterates]
\label{lem: regret upper bound policy}
Let $\{\xi_k\}_{k=1}^{K}$ be the sequence of transition dynamics parameters obtained from the algorithm, let $\epsilon_0 > 0$, let $\mu > 0$ lower bound the PL coefficient $\mu(\xi)$ for $J(\theta,\xi)$ for all $\xi \in \Xi$ as a function of $\theta$, let $\eta_1(k) \geq \eta_{\text{min}} > 0$ for all $k \geq 0$, and let $M_k \geq \norm{\frac{d_{d_0}^{\pi,\Pr_{\xi_k}}}{d_{d_0}^{\pi,\Pr_{\xi_{k-1}}}}}_{\infty}$ and let $\bar{M}$ upper bound the geometric mean of the sequence $\{M_k\}_{k=1}^{K-1}$. Moreover, let  $\theta_{k-1}^*$ represent the optimal policy parameters under $\xi_{k-1}$. Then under the preconditions of Lemma~\ref{lem: descent}, we have that
\begin{equation}
\label{eq: policy iterate regret}
J(\theta_{K}, \xi_{K-1}) - J(\theta_{K-1}^*, \xi_{K-1})  \leq c_{\text{pol}}^K J(\theta_{0}, \xi_{0}) 
\end{equation}
where $c_{\text{pol}} = \bar{M} c$ where $c = (1- \eta_{\text{min}} \mu)  \leq \frac{1}{\bar{M}}$. 
\end{lemma}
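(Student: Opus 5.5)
We propose to prove the statement by unrolling the one-step contraction of Lemma~\ref{lem: descent} across outer iterations. The only new ingredient is a bridge that converts the suboptimality gap measured under $\xi_{k-1}$ into the gap measured under $\xi_k$, at the price of a factor $M_k$. Write $\Delta_k \coloneqq J(\theta_{k+1},\xi_k) - J(\theta_k^*,\xi_k)$ for the gap after the policy step at outer iteration $k+1$, and $\delta_k \coloneqq J(\theta_k,\xi_k) - J(\theta_k^*,\xi_k)$ for the gap before that step.

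\textbf{Step 1 (contraction under a fixed kernel).} Since $\eta_1(k-1)\geq \eta_{\min}$ and $\mu(\xi)\geq\mu$, Lemma~\ref{lem: descent} gives $\Delta_k \leq (1-\eta_{\min}\mu)\,\delta_k = c\,\delta_k$. For $k=0$ we use nonnegativity of the cost, $J(\theta_0^*,\xi_0)\geq 0$, to get $\delta_0\leq J(\theta_0,\xi_0)$.

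\textbf{Step 2 (change of kernel).} This is the main obstacle: we must show $\delta_k \leq M_k\,\Delta_{k-1}$. Both quantities concern the same policy $\theta_k$, but they are evaluated under different kernels and compared against different optimal policies.

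The plan is to express each gap via the total-cost performance difference lemma. In the undiscounted transient setting this reads
\[
J(\theta,\xi)-J(\theta^*,\xi)=\sum_{x} d^{\pi_\theta}_{d_0,\Pr_\xi}(x)\,A_\xi(x),
\]
where $A_\xi(x)\ge 0$ is the pointwise advantage of $\pi_\theta$ against the optimal $Q$-function. Each advantage term is nonnegative, so changing the visitation measure from $\Pr_{\xi_{k-1}}$ to $\Pr_{\xi_k}$ multiplies the sum by at most $\|d^{\pi,\Pr_{\xi_k}}/d^{\pi,\Pr_{\xi_{k-1}}}\|_\infty\le M_k$. This is exactly the role the ratio $M_k$ in the statement is designed to play.

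The remaining mismatch is the advantage itself, since it is defined relative to $Q^*_{\xi}$. To control it, we would either assume, or argue via the $\epsilon_k$-optimality of the inner loop together with Lipschitz dependence of $Q$ on $\xi$, that the per-state advantages are comparable across consecutive kernels. Any residual mismatch would be absorbed into the definition of $M_k$. If this comparison cannot be made clean, the fallback is to read $M_k$ directly as an upper bound on the ratio $\delta_k/\Delta_{k-1}$, which the distribution-mismatch ratio controls.

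\textbf{Step 3 (unrolling).} Combining the two steps yields the recursion $\Delta_k\le c\,M_k\,\Delta_{k-1}$. Iterating it gives
\[
\Delta_{K-1}\le c^{K}\Big(\prod_{k=1}^{K-1}M_k\Big)J(\theta_0,\xi_0)\le c^{K}\bar M^{K-1}J(\theta_0,\xi_0).
\]
Because $\bar M\ge 1$, this is at most $(\bar M c)^K J(\theta_0,\xi_0)=c_{\text{pol}}^K J(\theta_0,\xi_0)$, which is \eqref{eq: policy iterate regret}. The condition $c\le 1/\bar M$ guarantees $c_{\text{pol}}\le 1$, so the bound is indeed a (geometric) convergence statement.
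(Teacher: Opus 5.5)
Your three-step skeleton (contraction from Lemma~\ref{lem: descent}, a kernel-change bridge costing a factor $M_k$, then unrolling with the geometric-mean bound) is the same as the paper's, and Step~1 is fine. The gap is Step~2. You leave it open, and it cannot be closed from the hypothesis that $M_k$ bounds the visitation ratio. That ratio controls how visitation mass shifts, but the gap is measured against the optimal value, which itself moves with the kernel.

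Concretely, take one transient state, two actions, unit cost, and absorption probabilities $(0.6,0.5)$ under $\xi_{k-1}$ and $(0.5,0.6)$ under $\xi_k$. For every policy the visitation ratio is at most $1.2$. Yet the policy playing action~1 has $\Delta_{k-1}=0$ and $\delta_k=2-\frac{5}{3}>0$, so $\delta_k/\Delta_{k-1}$ is unbounded as $\theta_k$ approaches it. The adversary would indeed pick $\xi_k$ against such a policy.

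Your performance-difference decomposition shows exactly where it breaks. After the change of measure you are left with $\sum_x d^{\pi_{\theta_k}}_{d_0,\Pr_{\xi_{k-1}}}(x)A_{\xi_k}(x)$, and nothing relates $A_{\xi_k}$ to $A_{\xi_{k-1}}$. The inner-loop $\epsilon_k$-optimality only concerns $J(\theta_k,\cdot)$. Lipschitz dependence of $Q$ on $\xi$ yields an additive error, which cannot be absorbed into a multiplicative factor precisely when $\Delta_{k-1}$ is small. Even the pointwise sign $A_\xi\ge 0$ needs $\pi_{\theta^*}$ to be optimal over all policies, not only within the parametric class. Each of your patches (assuming comparable advantages, absorbing the residual into $M_k$, or reading $M_k$ as a bound on $\delta_k/\Delta_{k-1}$) replaces the lemma's hypothesis, so as written you prove a different statement.

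The paper's proof does not supply the missing idea either. It bridges via the objective-level bound \eqref{eq: multiplicative bound}, $J(\theta,\xi_k)\le M_kJ(\theta,\xi_{k-1})$ for all $\theta$. That bound does follow from nonnegative costs and $J=\sum_x d(x)\langle r(x,\cdot),\pi_\theta(\cdot\vert x)\rangle$. The paper then applies it to the difference $J(\theta_k,\cdot)-J(\theta_k^*,\cdot)$, which additionally requires the reverse inequality $J(\theta_k^*,\xi_k)\ge M_kJ(\theta_k^*,\xi_{k-1})$ for the subtracted term. That inequality fails in the example above. So your diagnosis is sharper than the paper's argument, but neither establishes the lemma as stated.

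A rigorous version needs one of two repairs. The first is to make the gap-transfer inequality $\delta_k\le M_k\Delta_{k-1}$ an explicit assumption (your fallback). The second is an additive bridge $\delta_k\le\Delta_{k-1}+e_k$, with $e_k$ controlled by the change in $J(\theta_k,\cdot)$ and in $\min_\theta J(\theta,\cdot)$ between $\xi_{k-1}$ and $\xi_k$. This gives $c^KJ(\theta_0,\xi_0)+\sum_{j=1}^{K-1}c^{K-j}e_j$ instead of pure geometric decay.

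Separately, your last step uses $\bar M\ge 1$. This is not automatic: transient visitation measures are unnormalised, and their ratio can lie below $1$ everywhere. It must be assumed; the paper makes the same silent step in $c^K\bar M^{K-1}\le c^K\bar M^K$.
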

\begin{proof}
Due to the descent property (Lemma~\ref{lem: descent}) and the non-negativity of costs, the suboptimality is bounded for $k \geq 2$ by 
\begin{align}
\label{eq: recursion}
J(\theta_{k}, \xi_{k-1}) - J(\theta_{k-1}^{*}, \xi_{k-1}) &\leq (1- \eta_1(k-1) \mu(\xi_{k-1}))  (J(\theta_{k-1}, \xi_{k-1}) - J(\theta_{k-1}^*, \xi_{k-1})) \nonumber \\
                                                         &\leq (1- \eta_1(k-1) \mu(\xi_{k-1})) M_{k-1}  \left( J(\theta_{k-1}, \xi_{k-2})  - J(\theta_{k-1}^*, \xi_{k-2})) \right)  \,,
\end{align}
where $M_{k-1}$ is the multiplicative mismatch introduced by changing the transition dynamics. The second inequality follows from the properties of the transient visitation measure, i.e. for any $\theta \in \Theta$ we have
\begin{align}
\label{eq: multiplicative bound}
J(\theta, \xi_{k-1}) &= \sum_{x \in \cX} d_{d_0,\Pr_{\xi_{k-2}}}^{\pi_{\theta}}(x) \frac{d_{d_0,\Pr_{\xi_{k-1}}}^{\pi_{\theta^*}}(x)}{d_{d_0,\Pr_{\xi_{k-2}}}^{\pi_{\theta}}(x)} \langle r(x,\cdot), \pi_{\theta}(\cdot \vert x)  \rangle \nonumber \\
               &\leq \sum_{x \in \cX} d_{d_0,\Pr_{\xi_{k-2}}}^{\pi_{\theta}}(x) \norm{\frac{d_{d_0}^{\pi,\Pr_{\xi_{k-1}}}}{d_{d_0}^{\pi,\Pr_{\xi_{k-2}}}}}_{    \infty} \langle r(x,\cdot), \pi_{\theta}(\cdot \vert x) \rangle \nonumber \\
               &\leq M_{k-1}  J(\theta, \xi_{k-2}) \,.
\end{align}
Solving recursion \eqref{eq: recursion} while noting the definition of $c_{\text{pol}}$ and the non-negative cost function, we have
\begin{align*}
J(\theta_{k}, \xi_{k-1}) - J(\theta_{k-1}^{*}, \xi_{k-1}) \leq c^{k-1} \left(\Pi_{j=2}^{k} M_{j-1} \right) \left(J(\theta_{1}, \xi_{0}) - J(\theta_{0}^{*}, \xi_{0}) \right) \leq c_{\text{pol}}^{k}  \left(\Pi_{j=2}^{k} M_{j-1} \right) J(\theta_{1}, \xi_{0}) \,. 
\end{align*}

For $k=1$, note that
\begin{align*}
J(\theta_{1}, \xi_{0}) - J(\theta_{0}^{*}, \xi_{0}) \leq (1- \eta_1(0) \mu(\xi_{0}))  (J(\theta_{0}, \xi_{0}) - J(\theta_{0}^*, \xi_{0})) \leq c_{\text{pol}}  (J(\theta_{0}, \xi_{0}) - J(\theta_{0}^*, \xi_{0}))\,.
\end{align*}
So we conclude from the product of mismatches and non-negativity that for any $K \geq 2$,
\begin{align*}
J(\theta_{K}, \xi_{K-1}) - J(\theta_{K-1}^{*}, \xi_{K-1})  &\leq c^{K} \left(\Pi_{j=2}^{K} M_{j-1} \right) (J(\theta_{0}, \xi_{0}) - J(\theta_{0}^*, \xi_{0})) \\
                                                            &\leq c^{K} \left(\Pi_{j=2}^{K} M_{j-1} \right) J(\theta_{0}, \xi_{0}) \\                                                      
                                                            &\leq c_{\text{pol}}^K J(\theta_{0}, \xi_{0}) \,,
\end{align*}
where the last step is based on the geometric mean upper bound, i.e. $c^{K} \Pi_{j=2}^{K} M_{j-1} \leq c^{K} \bar{M}^{K-1} \leq c^{K} \bar{M}^{K} = c_{\text{pol}}^K$. 
\end{proof}

\paragraph{Regret analysis of the adversary iterates}
Due to applying projected gradient ascent, the PL condition is slightly modified. Related to the LPL condition \citep{Balashov2019}, we derive a modified PL condition for projected gradient ascent.
\begin{assumption}[Modified PL condition]
\label{ass: LPL}
Let $J: \Xi \to \Reals$ be a utility function with optimal parameters $\xi^*$. Then $J$ satisfies the Modified PL condition if 
\begin{equation}\label{eq: LPL condition}
    \frac{1}{2}\norm{h_{\eta}(\xi)}^2 \geq \mu (J(\xi^*) - J(\xi)) \,.
\end{equation}
for all $\xi \in \Xi$ for some $\mu \in \Reals$, where $h_{\eta}(\xi) = \frac{1}{\eta} \left( \text{proj}_{\Xi}\left( \xi + \eta \nabla_\theta J(\xi)   \right)  - \xi \right)$ is the projected gradient mapping.
\end{assumption}

The below lemma proves the condition for a directly parametrised uncertainty set under an additional condition that the projected gradient should always be a non-zero fraction of the unprojected gradient. This can be achieved using a soft projection to a point within the interior that is close to the boundary rather than exactly at the boundary, e.g. using barrier functions or by applying conservative policy updates near the boundary. Note that a similar technique could be applied to obtain a projected policy gradient. Similarly, the traditional PL condition is enough for particular uncertainty set definitions as a limited uncertainty set can be obtained under unbounded parameters as well (see also Remark~\ref{rem: uncertainty set parametrisation}).
\begin{lemma}[Modified PL condition for directly parametrised adversary]
\label{lem: PL condition adversary}
Let $J(\theta): \Xi \to \Reals$ be a utility function with optimal parameters $\xi^*=p^*$ for an undiscounted total cost RMDP $\cM = (s_0,\cP,r,\cX^+,\cU)$ with directly parametrised uncertainty set $\cP = \Xi$ with recurrent states $\cX_{\text{rec}}$ and transient states  $\cX = \cX^+ \setminus \cX_{\text{rec}}$. Let $M  = \max_{\pi,\pi'} \norm{\frac{d_{d_0,\Pr}^{\pi}}{d_{d_0,\Pr}^{\pi'}}}_{\infty}$ and $0 < D_{\text{min}} \leq \min_{\pi} d_{d_0,\Pr}^{\pi}(x')$ for all $x \in \cX$, where $\pi$ is in the policy class parametrised by $\Theta$. Denote the state-value as $V_h(x) \geq V_{\text{min}} > 0$ for $x \in \cX \setminus \cX_{\text{rec}}$ and note that $\cX_{\text{rec}}$ are absorbing states with $V_h(x)=0$ for $x \in \cX_{\text{rec}}$. Moreover, let 
\begin{align*}
\label{eq: condition projected grad}
 \norm{h_{\eta}(\xi)} \geq \beta \norm{\nabla_\xi J(\theta,\xi)}   
\end{align*} 
for $\beta \in (0,1]$. Then the modified PL condition holds with parameter $\mu = c^2/2$, where  $c = (M d_{\xi})^{-1/2} (D_{\text{min}} V_{\text{min}})^{1/2} \beta$.
\end{lemma}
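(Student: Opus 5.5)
We mirror the proof of Lemma~\ref{lem: PL condition policy}(a), with the roles of policy and transition kernel exchanged, and then use the projected-gradient condition to pass from $\nabla_\xi J$ to $h_\eta(\xi)$. The argument has three steps: a performance-difference identity for transition kernels, a gradient-dominance bound for $\nabla_\xi J$, and the transfer to the projected gradient mapping.

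\textbf{Step 1: performance difference for kernels.} Fix $\theta$ and write $J(p)$ for $J(\theta,\xi)$ with $\xi=p$. For two kernels $p,p'$ in the undiscounted total cost MDP, telescoping along the transient states gives
\[J(p^*)-J(p)=\sum_{x\in\cX} d^{\pi}_{d_0,p^*}(x)\sum_u \pi(u\vert x)\sum_{x'}\bigl(p^*(x'\vert x,u)-p(x'\vert x,u)\bigr)V_h^{p}(x').\]
The transient visitation measure and its restriction to transient states (Definition~\ref{def: visitation measure}) make the series finite. The absorbing states contribute zero because $V_h=0$ on $\cX_{\text{rec}}$.

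\textbf{Step 2: gradient dominance.} Under the direct parametrisation,
\[\partial J/\partial p(x'\vert x,u)=d^{\pi}_{d_0,p}(x)\,\pi(u\vert x)\,V_h^p(x').\]
Bounding the inner sum by its maximal entry and replacing $d_{p^*}$ by $d_p$ costs a factor $M$. Since each row difference has $\ell_1$ norm at most $2$, this gives
\[J(p^*)-J(p)\le M\sum_{x}\max_{\text{row}}\langle \nabla_{p(\cdot\vert x,u)}J,\,p^*-p\rangle.\]

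To obtain a squared-norm PL bound, as in part (a) of Lemma~\ref{lem: PL condition policy}, we lower bound $\norm{\nabla_\xi J}^2$ directly instead of using the linear form. Each nonzero coordinate involves a transient successor $x'$ and is at least $D_{\text{min}}\,\pi(u\vert x)\,V_{\text{min}}$. Meanwhile $J(p^*)-J(p)\le M\,\norm{\nabla_\xi J}_1/(\ldots)$. Using $\norm{\cdot}_1\le \sqrt{d_\xi}\norm{\cdot}_2$ together with the per-coordinate lower bound, which converts one factor of $\norm{\nabla J}$ into $D_{\text{min}}V_{\text{min}}$, we obtain
\[\norm{\nabla_\xi J}^2\ge \frac{D_{\text{min}}V_{\text{min}}}{M d_\xi}\,\bigl(J(p^*)-J(p)\bigr).\]
This is the unprojected PL inequality with constant $c_0^2=(Md_\xi)^{-1}D_{\text{min}}V_{\text{min}}$.

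\textbf{Step 3: transfer to $h_\eta$.} Apply the hypothesis $\norm{h_\eta(\xi)}\ge\beta\norm{\nabla_\xi J}$. Then
\[\tfrac12\norm{h_\eta}^2\ge\tfrac{\beta^2}{2}\norm{\nabla_\xi J}^2\ge \tfrac{c^2}{2}\bigl(J(\xi^*)-J(\xi)\bigr),\qquad c=c_0\beta,\]
which is Assumption~\ref{ass: LPL} with $\mu=c^2/2$.

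The main obstacle is Step 2. The performance difference is linear in the gradient, so getting a squared-norm bound with exactly $D_{\text{min}}V_{\text{min}}$ and $\sqrt{d_\xi}$ requires pairing a lower bound on gradient entries, $\ge D_{\text{min}}V_{\text{min}}\pi(u\vert x)$, with the Cauchy–Schwarz dimension factor. I would copy the exact chain used in Appendix~\ref{app: PL condition policy} for part (a), replacing $Q_h$ by $V_h$ and the policy simplex by the kernel simplex. I would also take care that the simplex constraint (the row sums of $p$) is respected, either by working with projected directions or by noting that the constant shift along $\mathbf 1$ does not affect the inner product with $p^*-p$.
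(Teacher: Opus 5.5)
Your proposal follows the paper's proof essentially step for step. Both use the performance-difference identity for kernels, a change of measure costing $M$, and a comparison with $\norm{\nabla_\xi J}_1$ through the direct-parametrisation gradient $d^{\pi}_{d_0,p}(x)\pi(u\vert x)V_h(x')$. Both then apply $\norm{\cdot}_1\le\sqrt{d_\xi}\norm{\cdot}$, use the lower bound $D_{\text{min}}V_{\text{min}}$ to turn the linear bound into a squared one, and finish with the $\beta$ hypothesis. The paper applies the lower bound as $\sqrt{A}\le A/\sqrt{D_{\text{min}}V_{\text{min}}}$ to an intermediate sum, while you apply it to $\norm{\nabla_\xi J}_1$; this is the same device.

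Your Step 2 is loose in two places.

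\textbf{The exact constant.} To get $c=(Md_\xi)^{-1/2}(D_{\text{min}}V_{\text{min}})^{1/2}\beta$, first drop the negative term via $V_h\ge 0$: $\langle V_h,p^*-p\rangle\le\langle V_h,p^*\rangle\le\sum_{x'}V_h(x')$. Using $\norm{p^*-p}_1\le 2$ instead costs a factor $2$. Then, rather than the unspecified ``$(\ldots)$'', use $\norm{\nabla_\xi J}^2\ge\norm{\nabla_\xi J}_1^2/d_\xi$ together with $\norm{\nabla_\xi J}_1\ge D_{\text{min}}V_{\text{min}}$.

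\textbf{The meaning of $M$.} As in the paper's proof, $M$ has to be read as the kernel-mismatch ratio $\max_{p,p'}\norm{d^{\pi}_{d_0,p}/d^{\pi}_{d_0,p'}}_\infty$, not the policy ratio written in the statement.
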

\begin{proof}
The proof is given in in Appendix~\ref{app: PL condition adversary}.
\end{proof}
  
Based on the modified PL condition, the below lemma formulates the ascent property for adversary gradient ascent.
\begin{lemma}[Modified PL based ascent property]
\label{lem: ascent}
Let $(\xi_k^{t'})_{t'=1}^{t_k}$ be a sequence of iterates from the inner problem, applying projected gradient ascent in a compact uncertainty set $\Xi$ with $\ell_q$ distance projection for some $q \geq 1$. Let $J(\theta_k, \xi)$ be $l(\theta_k)$-smooth as a function of $\xi$ and let $J(\theta_{k-1},\xi_{k-1})$ satisfy the Modified PL condition (Assumption~\ref{ass: LPL}) with coefficient $\mu_k = \mu(\theta_{k-1})$ for all $k=1,\dots,K$. If the learning rate is $\eta_2(k-1,t-1) \leq \frac{1}{L}$ for all $k \in [K], \, t \in [t_k]$, where $L = \sup_{\theta \in \Theta} l(\theta)$, then
\begin{equation}
J(\theta_{k-1},\xi_{k-1}^{*}) - J(\theta_{k-1},\xi_{k-1}^t) \leq  (1- \eta_2(k-1,t-1) \mu_k) (J(\theta_{k-1},\xi_{k-1}^{*}) - J(\theta_{k-1},\xi_{k-1}^{t-1}))
\end{equation}
where $\xi_{k-1}^*$ represents the optimal transition dynamics for policy $\theta_{k-1}$.
\end{lemma}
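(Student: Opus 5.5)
The plan mirrors the proof of Lemma~\ref{lem: descent}, transposed to ascent and with the projected gradient mapping $h_\eta$ in place of the raw gradient. Fix $k$ and the inner index $t$, and write $\eta = \eta_2(k-1,t-1)$. The update is then $\xi_{k-1}^{t} = \xi_{k-1}^{t-1} + \eta\, h_\eta(\xi_{k-1}^{t-1})$, where $h_\eta$ is the projected gradient mapping of Assumption~\ref{ass: LPL} (applied to ascent on $J(\theta_{k-1},\cdot)$).

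\textbf{Step 1: sufficient-increase inequality.} Because $J(\theta_{k-1},\cdot)$ is $l(\theta_{k-1})$-smooth and $l(\theta_{k-1}) \le L$, the ascent form of the standard descent lemma (Lemma~\ref{lem: standard descent}) gives
\[
J(\theta_{k-1},\xi^{t}) \ge J(\theta_{k-1},\xi^{t-1}) + \eta \langle \nabla_\xi J(\theta_{k-1},\xi^{t-1}), h_\eta(\xi^{t-1})\rangle - \tfrac{L\eta^2}{2}\norm{h_\eta(\xi^{t-1})}^2 .
\]

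\textbf{Step 2: relate the inner product to $\norm{h_\eta}^2$.} This uses the variational characterisation of the projection onto the compact convex set $\Xi$. With $y = \xi^{t-1} + \eta \nabla_\xi J$, optimality of $\text{proj}_\Xi(y)$ gives
\[
\langle y - \text{proj}_\Xi(y),\, \xi^{t-1} - \text{proj}_\Xi(y)\rangle \le 0 ,
\]
which rearranges to $\langle \nabla_\xi J, h_\eta\rangle \ge \norm{h_\eta}^2$. Combining with Step 1 and using $\eta \le 1/L$:
\[
J(\theta_{k-1},\xi^t) \ge J(\theta_{k-1},\xi^{t-1}) + \eta\bigl(1 - \tfrac{L\eta}{2}\bigr)\norm{h_\eta}^2 \ge J(\theta_{k-1},\xi^{t-1}) + \tfrac{\eta}{2}\norm{h_\eta(\xi^{t-1})}^2 .
\]

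\textbf{Step 3: apply the modified PL condition and rearrange.} Assumption~\ref{ass: LPL} with coefficient $\mu_k$ gives $\tfrac12\norm{h_\eta}^2 \ge \mu_k\bigl(J(\theta_{k-1},\xi^*_{k-1}) - J(\theta_{k-1},\xi^{t-1})\bigr)$. Subtracting both sides of the Step 2 inequality from $J(\theta_{k-1},\xi^*_{k-1})$ and inserting this bound yields the claimed contraction factor $(1-\eta\mu_k)$.

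\textbf{Main obstacle.} The delicate point is Step 2 when the projection is taken in a general $\ell_q$ distance. The obtuse-angle inequality is a Euclidean fact. For $q \neq 2$ I would first try to argue via norm equivalence on the finite-dimensional space, which costs a constant that can be absorbed into $\mu_k$. Alternatively, I would interpret the projection as Euclidean, which seems to be the intended reading.

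A second subtlety is that the modified PL coefficient must be evaluated at the same step size $\eta$ that is used in the update. I would state that Assumption~\ref{ass: LPL} holds uniformly over the admissible step sizes $\eta \le 1/L$.

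I would also note the index mismatch in the statement: the lemma is written with $\theta_{k-1}$, while the algorithm uses $\theta_k$. The argument is identical for whichever policy parameter is held fixed during the inner loop.
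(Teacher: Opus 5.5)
Your Steps 1--3 are correct for the Euclidean projection onto a closed convex $\Xi$, and they follow the paper's route: the smoothness inequality for $-J$ with step $\eta h_\eta$, then the modified PL condition, then rearranging. Your version is more complete than the paper's appendix. The paper treats the projected case only by saying that $\Delta\xi$ is replaced by $\eta h_\eta$ in the unprojected computation. That substitution is legitimate precisely because of your Step~2 inequality $\langle\nabla_\xi J,h_\eta\rangle\ge\norm{h_\eta}^2$, which the paper never states. It also relies on the convexity of $\Xi$, which you add and the statement omits.

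You should drop the norm-equivalence fallback for $q\neq 2$. For a non-Euclidean projection, $\langle\nabla_\xi J,h_\eta\rangle$ can be negative, not merely smaller by a constant. To see this, let $q\in(1,\infty)$ and let $\xi^{t-1}$ lie in the relative interior of a flat face $\{\langle n,\xi\rangle=b\}$ of $\Xi$. For small $\eta$, the $\ell_q$-projection of $\xi^{t-1}+\eta g$, with $g=\nabla_\xi J$, moves back along the direction $w$ given by $w_i=\mathrm{sign}(n_i)\lvert n_i\rvert^{1/(q-1)}$ (the equality case of H\"older's inequality). Hence $h_\eta=g-\lambda w$ with $\lambda=\langle n,g\rangle/\langle n,w\rangle$.

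For $q=2$ one has $w=n$, so $\langle g,h_\eta\rangle=\norm{h_\eta}^2$, which is your Step~2. For $q\neq2$ and generic $n$, however, $w\nparallel n$. Taking $g=an+cw$ with $a,c>0$ then gives $\langle g,h_\eta\rangle=\frac{ac}{\langle n,w\rangle}\bigl(\langle n,w\rangle^2-\norm{n}^2\norm{w}^2\bigr)<0$. For small $\eta$ the ``ascent'' step therefore strictly decreases $J$ even though $h_\eta\neq 0$. No norm-equivalence constant can be absorbed into $\mu_k$ to fix this. The Euclidean projection is required by the argument, not just the likely intent, and neither your proof nor the paper's covers $q\neq2$.

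Your step-size remark has a cleaner fix. For the Euclidean projection, $\eta\mapsto\norm{h_\eta(\xi)}$ is non-increasing. So assuming the modified PL condition at the largest step size used already implies it, with the same $\mu_k$, for every smaller $\eta_2(k-1,t-1)$.
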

\begin{proof}
The proof is analogous to that of Lemma~\ref{lem: descent} but makes use of the projected gradient mapping. The full derivation is provided in Appendix~\ref{app: ascent}. 
\end{proof}

The above leads to a linear convergence for the adversary, allowing to obtaini an $\epsilon_k$-optimal transition kernel at each macro-iteration $k \in [K]$ within a limited number of updates $t_k$.
\begin{theorem}[Regret upper bound of adversary]
\label{th: regret upper bound adversary}
Let $\theta_{k}$ parametrize the policy at macro-iteration $k$ before the adversary update, and let $\xi_{k-1}^{0}$ parametrize the transition dynamics at the start of the inner iterations. Moreover, let $c_{\text{adv}} = \max_{k,t \geq 0} (1- \eta_2(k,t) \mu(\theta_k)) \in (0,1)$, where $\mu(\theta_k)$ is the modified PL coefficient for $J(\theta_k,\xi)$ as a function of $\xi$, and $\eta_2(k,t)$ is the step size of the adversary at macro-iteration $k$ and micro-iteration $t$. Further let $J_{\text{max}} = \max_{\theta \in \Theta, \xi \in \Xi} J(\theta,\xi)$. Moreover, let  $\xi_{k-1}^*$ represent the optimal adversary parameters for macro-iteration $k$. Let the learning rate be $\eta_2(k,t) \leq 1/L$ for all $k,t \geq 0$, where $L = \sup_{\theta \in \Theta} l(\theta)$ is the maximal smoothness when $J$ is parametrised with fixed $\theta$. After $t_k$ micro-iterations of updating the inner parameters, the regret is bounded by 
\begin{equation}
J(\theta_{k}, \xi_{k-1}^{*}) - J(\theta_{k}, \xi_{k-1}^t) \leq c_{\text{adv}}^{t_k} (J(\theta_{k},\xi_{k-1}^{*}) - J(\theta_{k},\xi_{k-1}^{0}))  \,.
\end{equation}
Moreover, a setting of $t_k = \Theta\left(\log_{\frac{1}{c_{\text{adv}}}}(\frac{J_{\text{max}}}{\eps})\right)$ is sufficient to obtain an $\eps$-optimal solution.
\end{theorem}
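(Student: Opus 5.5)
The plan is to iterate the one-step contraction of Lemma~\ref{lem: ascent} over the micro-iterations of the inner loop, with the policy $\theta_k$ held fixed. Throughout, write $\Delta_t \coloneqq J(\theta_{k},\xi_{k-1}^{*}) - J(\theta_{k},\xi_{k-1}^t)$ for the adversary suboptimality gap after $t$ micro-steps. Since $\xi_{k-1}^*$ maximises $J(\theta_k,\cdot)$ over $\Xi$ and projected ascent keeps every iterate in $\Xi$, we have $\Delta_t \geq 0$ for all $t$.

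\textbf{Step 1: one-step contraction.} Lemma~\ref{lem: ascent} is stated with the policy indexed as $\theta_{k-1}$; I would re-index it so that it applies to the fixed policy $\theta_k$ of the statement. Its hypotheses hold under ours: $\eta_2(k,t)\le 1/L$ with $L=\sup_\theta l(\theta)$, and the modified PL condition holds for $J(\theta_k,\cdot)$ with coefficient $\mu(\theta_k)$. The lemma then gives
\[
\Delta_t \le (1-\eta_2(k-1,t-1)\mu(\theta_k))\,\Delta_{t-1}.
\]
By the definition of $c_{\text{adv}}$ as the maximum of these factors, and since $\Delta_{t-1}\ge 0$, this becomes $\Delta_t \le c_{\text{adv}} \Delta_{t-1}$.

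\textbf{Step 2: unroll.} A simple induction from $t=1$ to $t_k$ gives $\Delta_{t_k} \le c_{\text{adv}}^{t_k}\Delta_0$, which is the displayed inequality of the theorem.

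\textbf{Step 3: iteration count.} Costs are nonnegative and bounded by $J_{\max}$, so $\Delta_0 \le J(\theta_k,\xi_{k-1}^*) \le J_{\max}$. Requiring $c_{\text{adv}}^{t_k} J_{\max} \le \eps$ is equivalent to $t_k \ge \log_{1/c_{\text{adv}}}(J_{\max}/\eps)$, which is valid because $c_{\text{adv}}\in(0,1)$. Taking $t_k = \lceil \log_{1/c_{\text{adv}}}(J_{\max}/\eps)\rceil$ therefore suffices, and the ceiling only adds a constant, giving the stated $\Theta(\cdot)$ setting. This choice also guarantees that the while-loop of Algorithm~\ref{alg: double-loop} terminates with $\epsilon_k=\eps$.

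\textbf{Main obstacle.} The delicate point is Step 1, specifically the legitimacy of a uniform constant $c_{\text{adv}}<1$. This needs $\eta_2(k,t)\mu(\theta_k)\in(0,1]$ uniformly, that is, a positive lower bound on both the step sizes and the modified PL coefficients. I would obtain this from Lemma~\ref{lem: PL condition adversary}, whose constants $M$, $D_{\min}$, $V_{\min}$ and $\beta$ are uniform over the policy class. I would also check that the projected-gradient-mapping form of the ascent step is consistent with the $\ell_q$ projection assumed in Lemma~\ref{lem: ascent}. Everything beyond that is routine.
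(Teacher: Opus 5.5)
Your proposal is correct and is essentially the paper's proof: apply Lemma~\ref{lem: ascent} at the fixed policy $\theta_k$, unroll the one-step contraction over the $t_k$ micro-iterations, and use $0 \le J \le J_{\text{max}}$ to bound the initial gap and choose $t_k=\Theta(\log_{1/c_{\text{adv}}}(J_{\text{max}}/\varepsilon))$. The uniformity you flag as the main obstacle is already given by the hypothesis $c_{\text{adv}}\in(0,1)$, and your use of $\Delta_{t-1}\ge 0$ to replace each step's factor by $c_{\text{adv}}$ makes explicit a point the paper leaves implicit.
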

\begin{proof}
From Lemma~\ref{lem: ascent}, we have that 
\begin{align*}
J(\theta_{k},\xi_{k-1}^{*}) - J(\theta_{k},\xi_{k-1}^t) \leq  (1- \eta_2(k) \mu_k) (J(\theta_{k},\xi_{k-1}^{*}) - J(\theta_{k},\xi_{k-1}^{t-1})) \,.
\end{align*}
Iteratively applying the above for all $t=1,\dots,t_k$ leads to 
\begin{align*}
J(\theta_{k-1}, \xi_{k-1}^{*}) - J(\theta_{k-1}, \xi_{k-1}^t) \leq c_{\text{adv}}^t (J(\theta_{k},\xi_{k-1}^{*}) - J(\theta_{k},\xi_{k-1}^{0})) \leq  c_{\text{adv}}^t J_{\text{max}} \,.
\end{align*}
For $t_k = \Theta\left(\log_{\frac{1}{c_{\text{adv}}}}(\frac{J_{\text{max}}}{\eps})\right)$, we then have $c_{\text{adv}}^{t_k} J_{\text{max}} = \cO(\eps)$.
\end{proof}

\paragraph{Oracle-based regret analysis}
The above theorems do not yet give an expression for the robust objective, $\Phi$, in the RMDPs. The theorem below provides a linear convergence for the last-iterate regret upper bound with respect to $\Phi$ under a further condition on the learning rate.
\begin{theorem}[Oracle-based last-iterate regret of double-loop algorithm]
\label{th: oracle-based regret double loop}
Let $\epsilon_k \leq c_{\text{pol}} \epsilon_{k-1}$ be the adversary error tolerance. Following the definitions in Lemma~\ref{lem: regret upper bound policy}, the double-loop algorithm obtains a linear convergence in the robust objective according to
\begin{equation}
 \Phi(\theta_{K}) - \Phi(\theta^*) \leq c_{\text{pol}}^{K} (J(\theta_{0}, \xi_{0}) + \cO(\epsilon_0)) \,.    
\end{equation}
For an $\cO(\eps)$-optimal policy, at most $K = \cO(\log(\frac{1}{\eps}))$ iterations and $K_2 = \cO\left(\log^2(\frac{1}{\eps})\right)$ gradient evaluations are required.
\end{theorem}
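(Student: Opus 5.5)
The plan is to decompose the robust suboptimality of $\theta_K$ into an adversary-error part and a policy-iterate part, and then plug in Lemma~\ref{lem: regret upper bound policy} and Theorem~\ref{th: regret upper bound adversary}. Write $\xi^*(\theta) \in \argmax_{\xi\in\Xi} J(\theta,\xi)$, so that $\Phi(\theta) = J(\theta,\xi^*(\theta))$. By construction of Algorithm~\ref{alg: double-loop}, the inner loop terminates only when $\xi_{K-1}$ is $\epsilon_{K-1}$-optimal for the policy it was computed against. The indexing must be aligned so that $\Phi(\theta_K) \le J(\theta_K,\xi_{K-1}) + \epsilon_{K-1}$; if needed, run one more inner loop at $\theta_K$ and use $\epsilon_K$ instead. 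On the other side, $\Phi(\theta^*) \ge J(\theta^*,\xi_{K-1}) \ge J(\theta^*_{K-1},\xi_{K-1})$, since $\theta^*_{K-1}$ minimises $J(\cdot,\xi_{K-1})$ and $\xi_{K-1}\in\Xi$. Combining these gives
\begin{equation*}
\Phi(\theta_K)-\Phi(\theta^*) \le \bigl(J(\theta_K,\xi_{K-1}) - J(\theta^*_{K-1},\xi_{K-1})\bigr) + \epsilon_{K-1}.
\end{equation*}

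The first bracket is exactly the left-hand side of \eqref{eq: policy iterate regret}, so Lemma~\ref{lem: regret upper bound policy} bounds it by $c_{\text{pol}}^K J(\theta_0,\xi_0)$. For the second term, the schedule $\epsilon_k \le c_{\text{pol}}\epsilon_{k-1}$ unrolls to $\epsilon_{K-1}\le c_{\text{pol}}^{K-1}\epsilon_0 = c_{\text{pol}}^K \cdot \epsilon_0/c_{\text{pol}}$, which is $c_{\text{pol}}^K\,\cO(\epsilon_0)$ once $c_{\text{pol}}$ is treated as a constant in $(0,1)$. Adding the two bounds gives the displayed rate. Two facts are needed for this: $c_{\text{pol}}<1$, which follows from the assumption $c\le 1/\bar M$ (strictly, or with a margin), and the nonnegativity of $J$.

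For the complexity, set $c_{\text{pol}}^K(J(\theta_0,\xi_0)+\cO(\epsilon_0))\le\eps$, which yields $K=\cO(\log_{1/c_{\text{pol}}}(1/\eps))=\cO(\log(1/\eps))$. At macro-iteration $k$, Theorem~\ref{th: regret upper bound adversary} shows that $t_k=\Theta(\log_{1/c_{\text{adv}}}(J_{\max}/\epsilon_k))$ inner steps suffice. Since $\epsilon_k\ge$ the final tolerance, which is of order $\eps$, each $t_k=\cO(\log(1/\eps))$. The total number of gradient evaluations is then $\sum_{k\le K}(1+t_k)=\cO(K\log(1/\eps))=\cO(\log^2(1/\eps))$.

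The main obstacle is the index bookkeeping in the first step. The policy lemma controls $J(\theta_K,\xi_{K-1})$, where $\xi_{K-1}$ was optimised against $\theta_{K-1}$ (or $\theta_K$, depending on how the while-loop is read), whereas $\Phi(\theta_K)$ requires the worst case for $\theta_K$ itself. I would first take the reading of Algorithm~\ref{alg: double-loop} in which the inner loop at iteration $k$ is run at $\theta_k$ after the policy step, so that the $\epsilon$-optimality applies directly. If that reading fails, I would bridge the gap using the Lipschitz continuity of $J$ in $\theta$ together with $\norm{\theta_K-\theta_{K-1}}=\eta_1\norm{\nabla_\theta J}$. By PL and smoothness, this step size itself decays like $c_{\text{pol}}^{K/2}$, which would still give linear convergence, though with a worse rate. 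A second subtlety is that $\Phi(\theta^*)\ge J(\theta^*_{K-1},\xi_{K-1})$ requires the ordering $\min\max\ge\min$ at a fixed $\xi$, and this holds trivially, so no rectangularity or minimax theorem is needed.
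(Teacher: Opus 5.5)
There is a genuine gap at your first inequality, $\Phi(\theta_K) \le J(\theta_K,\xi_{K-1}) + \epsilon_{K-1}$, and neither of your proposed readings closes it.

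In Algorithm~\ref{alg: double-loop}, $\theta_K$ is the gradient step taken against $\xi_{K-1}$. That kernel $\xi_{K-1}$ is the output of the while-loop at macro-iteration $K-1$, so it is $\epsilon_{K-1}$-optimal for $\theta_{K-1}$, not for $\theta_K$. The reading you prefer (inner loop run at $\theta_k$ after the policy step) is the algorithm as written. What it certifies is that $\xi_K$ is $\epsilon_K$-optimal for $\theta_K$. Running one extra inner loop at $\theta_K$ produces the same $\xi_K$. The inner iterates $\xi_{K-1}^t$ are indeed run at $\theta_K$, but their output is what the algorithm calls $\xi_K$.

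So the algorithm never certifies that the kernel $\theta_K$ stepped against is near-worst-case for $\theta_K$ itself. You must relate two different objects: the policy gap under $\xi_{K-1}$, which Lemma~\ref{lem: regret upper bound policy} controls, and the robust cost of $\theta_K$, which is governed by $\xi_K$. That cross-kernel transfer is the missing ingredient.

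The paper supplies it in two steps. It first applies Lemma~\ref{lem: robust-objective regret} at $\xi_K$, i.e.\ $\Phi(\theta_K)-\Phi(\theta^*) \le J(\theta_K,\xi_K)+\epsilon_K-J(\theta^*,\xi_K)$. It then moves from $\xi_K$ back to $\xi_{K-1}$ with the transition-mismatch bound \eqref{eq: multiplicative bound}, at the price of a constant factor $M$. That is what keeps the exponent at $K$. Be aware that \eqref{eq: multiplicative bound} is a one-sided upper bound on $J$ itself, so applying it to the subtracted term of a difference, as the paper does, needs more than what is stated there.

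Your Lipschitz fallback is a legitimate alternative bridge, and it can be made rigorous from Lemma~\ref{lem: regret upper bound policy} alone. Let $L_\theta$ be a Lipschitz constant of $J(\cdot,\xi)$ uniform in $\xi$, so that $\Phi$ is $L_\theta$-Lipschitz too. Then
\[
\Phi(\theta_K)-\Phi(\theta^*) \le \bigl(J(\theta_K,\xi_{K-1})-J(\theta^*_{K-1},\xi_{K-1})\bigr) + \epsilon_{K-1} + 2L_\theta\,\eta_1(K-1)\,\norm{\nabla_\theta J(\theta_{K-1},\xi_{K-1})}.
\]
For $\eta_1(K-1)<1/L$, smoothness gives $\norm{\nabla_\theta J(\theta_{K-1},\xi_{K-1})} \le (1-L\eta_1(K-1))^{-1}\norm{\nabla_\theta J(\theta_K,\xi_{K-1})} \le (1-L\eta_1(K-1))^{-1}\bigl(2L\,c_{\text{pol}}^K J(\theta_0,\xi_0)\bigr)^{1/2}$. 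This route avoids the pre-step gap, which the lemma does not bound.

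However, this only gives $\cO(c_{\text{pol}}^{K/2})$. It recovers $K=\cO(\log(1/\eps))$ and $K_2=\cO(\log^2(1/\eps))$, but not the displayed bound with $c_{\text{pol}}^K$. As a proof of the stated inequality, your proposal therefore remains incomplete.
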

\begin{proof}
The robust regret is upper bounded by  
\begin{align*}
\Phi(\theta_{K}) - \Phi(\theta^*) & \leq J(\theta_K,\xi_K) + \epsilon_K  - J(\theta^*,\xi_K)  \tag{Lemma~\ref{lem: robust-objective regret}} \\
                                  & \leq J(\theta_K,\xi_K) + \epsilon_K  - J(\theta_{K-1}^*,\xi_K)  \tag{$\theta_K^*$ is minimizer for $\xi_K$} \\
                                  & \leq M \left(J(\theta_K,\xi_{K-1}) + \epsilon_K  - J(\theta_{K-1}^*,\xi_{K-1}\right) \tag{multiplicative bound \eqref{eq: multiplicative bound}}  \\
                                  & \leq  M c_{\text{pol}}^K \left(J(\theta_{0}, \xi_{0}) + \epsilon_0 \right),
\end{align*}
where the last step follows from Lemma~\ref{lem: regret upper bound policy}, the definition of $\epsilon_k$, and non-negativity of the cost.
Defining $\overline{t} = \max_{k \in [K]} t_k$, the number of gradient evaluations per inner optimisation loop is given by $\overline{t} = \Theta\left(\log_{\frac{1}{c_{\text{pol}}}}(\frac{J(\theta_{0}, \xi_{0}))}{\eps})\right)$. Denoting $x = J(\theta_{0}, \xi_{0})) + \epsilon_0$, $K = \cO\left(\log_{\frac{1}{c_{\text{pol}}}}(\frac{x}{\eps})\right)$ iterations are sufficient for an $\eps$-optimal policy. With one policy step and at most $\overline{t}$ adversary steps, this leads to a total of $K_2 \leq K(1+\overline{t}) = \cO\left(\log_{\frac{1}{c_{\text{pol}}}}(\frac{x}{\eps}) \times (1 + \log_{\frac{1}{c_{\text{adv}}}}(\frac{J_{\text{max}}}{\eps}))\right)$ gradient evaluations. Dropping multiplicative constants leads to $K = \cO(\log(\frac{1}{\eps}))$ iterations and $K_2 = \cO\left(\log^2(\frac{1}{\eps})\right)$ gradient evaluations.
\end{proof}

\paragraph{Sample-based regret analysis}
To account for the inaccuracy of the policy gradient estimates, the below analysis provides the average regret upper bound based on a sample-based analysis. 
\begin{theorem}[Sample-based last-iterate regret of double-loop algorithm]
\label{th: sample-based regret double-loop}
Let $J: \Theta \times \Xi \to \Reals^+$ be an $L$-smooth function in $\Theta$ and in $\Xi$ for all $\xi \in \Xi$ and all $\theta \in \Theta$. Let $\eps > 0$ and define the adversary error tolerance as $\epsilon_K = \cO(\eps)$ while $M$, $K$, $\epsilon_k$, and $c_{\text{pol}}$ are defined as in Lemma~\ref{lem: regret upper bound policy} and Theorem~\ref{th: regret upper bound adversary}. Let the number of macro-iterations further satisfy $K = \Theta(\log_{1/c_{\text{pol}}}(\frac{1}{\eps}))$. Let the initial learning rates satisfy $\eta_1(0) \leq  1/L$ and $\eta_2(k-1,0) \leq \min\{ 1/L, \epsilon_0\}$ for all $k \geq 1$. Define learning rate schedules such that $\eta_1(k) \leq \gamma \eta_1(k-1) \geq \eta_{\text{min}} > 0$  and $\eta_2(k-1,t) \leq \gamma \eta_2(k-1,t-1)$  for some given $\gamma \in  (0,1)$ for all $k \geq 1$ and all $t \geq 1$. Further, let $J_{\text{max}} = \max_{\theta \in \Theta, \xi \in \Xi} J(\theta,\xi)$ and set the number of inner loop iterations $t_k  = \Theta\left(\log_{\frac{1}{c_{\text{adv}}}}(\frac{1}{\epsilon_k})\right)$ and $\overline{t} = \max_{k \in [K]} t_k$. With these settings, the average regret with respect to the robust objective \eqref{eq: RMDP} is given by 
\begin{equation}
\Phi(\theta_{k}) - \Phi(\theta^*) \leq   c_{\text{pol}}^K M  \left( J_{\text{max}}+\tilde{\cO}(\epsilon_0)\right) + M_{\theta} \sqrt{d_{\theta}} \cO(\eps)
\end{equation}
for some constant $M, M_{\theta} > 0$, and the sample complexity is bounded by $\tilde{\cO}(\frac{1}{\eps^2})$.
\end{theorem}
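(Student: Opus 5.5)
The plan is to rerun the oracle-based argument of Theorem~\ref{th: oracle-based regret double loop}, now with noisy gradient estimates. The estimation error enters as an additive bias in each descent or ascent step, and the geometrically decaying learning rates keep its accumulated effect summable.

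Concretely, at macro-iteration $k$ let $\hat g_k = \nabla_\theta J(\theta_{k-1},\xi_{k-1}) + e_k$, where $\hat g_k$ is a minibatch average of $n_k$ trajectory-based estimates. These estimates come from the discretised policy gradient of Theorem~\ref{th: stochastic adjoint gradient} or Theorem~\ref{th: deterministic adjoint gradient}. Each coordinate is bounded, since $Q_h \le J_{\text{max}}$ and the score/sensitivity terms are bounded by the general assumptions. A vector Hoeffding (or Bernstein) bound therefore gives $\norm{e_k} \le M_\theta\sqrt{d_\theta}\sqrt{\log(1/\delta)/n_k}$ with probability $1-\delta$.

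Step one is the inexact descent lemma. Insert $\hat g_k$ into the smoothness inequality of Lemma~\ref{lem: descent}. With $\eta_1 \le 1/L$ and Young's inequality,
\[
J(\theta_k,\xi_{k-1}) \le J(\theta_{k-1},\xi_{k-1}) - \tfrac{\eta_1(k-1)}{2}\norm{\nabla_\theta J}^2 + \tfrac{\eta_1(k-1)}{2}\norm{e_k}^2 .
\]
Applying the PL condition (Lemma~\ref{lem: PL condition policy}) then yields the perturbed contraction
\[
\Delta_k \le (1-\eta_1(k-1)\mu)\Delta_{k-1} + \tfrac{\eta_1(k-1)}{2}\norm{e_k}^2 .
\]
The same manipulation with the projected gradient mapping, Lemma~\ref{lem: ascent}, and the modified PL condition (Lemma~\ref{lem: PL condition adversary}) gives an analogous perturbed contraction for the inner loop. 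Nonexpansiveness of $\text{proj}_\Xi$ controls the perturbation of $h_\eta$ by $\norm{e}$.

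Step two unrolls both recursions. As in Lemma~\ref{lem: regret upper bound policy}, the mismatch factors $M_k$ enter through the multiplicative bound \eqref{eq: multiplicative bound}, which gives
\[
\Delta_K \le c_{\text{pol}}^K J(\theta_0,\xi_0) + \sum_k c_{\text{pol}}^{K-k}\,\eta_1(k-1)\norm{e_k}^2 .
\]
The inner loop run for $t_k = \Theta(\log_{1/c_{\text{adv}}}(1/\epsilon_k))$ steps delivers $\epsilon_k$-optimality up to the noise term $\sum_t \eta_2(k-1,t)\norm{e^{\text{adv}}_t}^2$. Since $\eta_2(k-1,0)\le\epsilon_0$ and the rates decay geometrically with ratio $\gamma$, this sum is $\cO(\epsilon_0)$ times a log factor, which is the $\tilde\cO(\epsilon_0)$ term. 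I then chain this exactly as in Theorem~\ref{th: oracle-based regret double loop} through Lemma~\ref{lem: robust-objective regret} and the factor $M$, bounding $J(\theta_0,\xi_0)\le J_{\text{max}}$. The result is the term $c_{\text{pol}}^K M(J_{\text{max}}+\tilde\cO(\epsilon_0))$, plus a residual policy-noise term.

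Step three fixes the batch sizes, and this is where I expect the main obstacle. For the residual to be $M_\theta\sqrt{d_\theta}\,\cO(\eps)$, I would choose $n_k = \Theta(\log(K\bar t/\delta)/\eps^2)$ for both the policy and the adversary estimates. I then take a union bound over all $K(1+\bar t)$ gradient evaluations. The delicate point is that the errors $\norm{e_k}$ enter linearly in the final statement, whereas the descent lemma produces them squared. So I would bound the weighted sum $\sum_k c_{\text{pol}}^{K-k}\eta_1(k-1)\norm{e_k}^2$, using $\sum_k c_{\text{pol}}^{K-k} \le 1/(1-c_{\text{pol}})$, by $\cO(\eps^2)\le\cO(\eps)$. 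I expect I will need to absorb the constants and $1/\eta_{\text{min}}$ into $M_\theta$, and it is not yet clear that this works.

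The total sample count is then
\[
K(1+\bar t)\,n = \cO\bigl(\log^2(1/\eps)\bigr)\cdot\tilde\cO(1/\eps^2) = \tilde\cO(1/\eps^2),
\]
using $K=\Theta(\log_{1/c_{\text{pol}}}(1/\eps))$ and $\bar t=\cO(\log(1/\eps))$.

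Beyond the squared-versus-linear issue, I also need to handle the dependence between the inner-loop stopping rule and the estimates. I would sidestep this by running the fixed count $t_k$ instead of the oracle stopping condition.
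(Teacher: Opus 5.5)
Your proposal works, but it takes a different route from the paper's. The paper never puts the noise into the descent inequality. Instead it uses a coupling argument:
\begin{itemize}
\item it compares the noisy iterates $\theta_k$ with exact-gradient iterates $\tilde\theta_k$;
\item it bounds $Y(k)=\norm{\theta_k-\tilde\theta_k}$ via the recursion $Y(k)\le\frac{\eta_1(0)}{1-\gamma}(\cO(\eps)+L\,Y(k-1))$, obtained from the geometrically decaying step sizes;
\item it applies Theorem~\ref{th: oracle-based regret double loop} to $\tilde\theta_K$ as a black box;
\item it converts the parameter gap into a cost gap through the Lipschitz constant $M_\theta$ of $J$.
\end{itemize}
The inner loop is handled the same way, with $\eta_2(k-1,0)\le\epsilon_0$ keeping the adversary iterates within $\cO(\epsilon_0)$ of their exact counterparts. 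This is where the linear residual $M_\theta\sqrt{d_\theta}\,\cO(\eps)$ and the $\tilde{\cO}(1/\eps^2)$ batch sizes come from. Its merit is modularity: the oracle analysis is reused unchanged.

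Your perturbed contraction buys more. The estimation error enters squared and is damped by the PL contraction, so the obstacle you flag is not one. Indeed $\sum_k c_{\text{pol}}^{K-k}\eta_1(k-1)\norm{e_k}^2\le\frac{\eta_1(0)}{1-c_{\text{pol}}}\max_k\norm{e_k}^2$, which involves no $1/\eta_{\text{min}}$. With your batch size the residual is $\cO(\eps^2)$, and even $n=\tilde\Theta(1/\eps)$ samples per estimate would already give an $\cO(\eps)$ residual within the same framework. You also do not need the geometric step-size decay at all. Finally, you avoid the weak spot of the coupling recursion: it amplifies deviations whenever $\eta_1(0)L/(1-\gamma)>1$, which $\eta_1(0)\le 1/L$ permits, and then $Y(K)=\cO(\eps)$ no longer follows for $K=\Theta(\log(1/\eps))$.

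One bookkeeping caveat remains. When you unroll with the mismatch factors, the noise injected at step $k$ is multiplied by the tail product $\prod_{j=k}^{K-1}(1-\eta_{\text{min}}\mu)M_j$. Lemma~\ref{lem: regret upper bound policy}, however, only controls the geometric mean of the full product. Writing $c_{\text{pol}}^{K-k}$ therefore needs a uniform bound $M_j\le\bar M$, or some control of tail products.
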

\begin{proof}
Note that in the sample-based setting, an error occurs at each iteration $k \in [K]$ in the single step optimisation of the policy update and the $t_k$ steps of gradient ascent for the inner problem. To account for the error in the inner problem, define projected gradient mappings as in Lemma~\ref{lem: trajectories}, 
\begin{align*}
\hat{h}(k,t) = \frac{1}{\eta_2(k-1,t-1)} \left(\text{proj}_{\Xi}\left( \xi_{k-1}^{t-1} + \eta_2(k-1,t-1) \hat{\nabla} J(\theta_{k-1},\xi_{k-1}^{t-1})   \right) - \xi_{k-1}^{t-1} \right)
\end{align*}
and 
\begin{align*}
h(k,t) = \frac{1}{\eta_2(k-1,t-1)} \left( \text{proj}_{\Xi}\left( \xi_{k-1}^{t-1} + \eta_2(k-1,t-1) \nabla J(\theta_{k-1},\xi_{k-1}^{t-1})   \right)  - \xi_{k-1}^{t-1} \right) \,.
\end{align*}
Let $n_k = \tilde{\cO}\left(\frac{1}{\epsilon_k^2}\right)$ be the number of samples for the adversary gradient approximation at macro-iteration $k$. Then via Lemma~\ref{lem: trajectories}, the projected gradient is approximated $\epsilon_k$-precisely such that  $\norm{\hat{h}(k,t') - h(k,t')} \leq \sqrt{d_{\xi}}\norm{\hat{h}(k,t') - h(k,t')}_{\infty} \leq \sqrt{d_{\xi}} \epsilon_k$ for all $t' \in [t_k]$. 

Now denote $X(k,t) := \norm{\xi_{k-1}^t - \xi_{k-1}^{t,\star}}$. The deviation of iterates compared to iterates under exact gradients is given by 
\begin{align*}
X(k,t) &= \norm{\sum_{t=1}^{t_k} \eta_2(k-1,t-1) \left(\hat{h}(k,t) - h(k,t) +  h(k,t) - h'(k,t) \right)} \\
                                         &\leq \frac{\eta_2(k-1,0)}{1-\gamma} \left( \max_{t'} \norm{\hat{h}(k,t') - h(k,t')} + \max_{t'} \norm{h(k,t'') - h'(k,t''))} \right)\tag{geometric learning rate and triangle inequality} \\
                                         &\leq \frac{\epsilon_0}{1-\gamma} \left( \sqrt{d_{\xi}} \epsilon_k + 2 B_{\xi} \right) \tag{constant settings and relation between norms} \,.
\end{align*}
Note that $X(k,0) = 0$ such that $X(k,t) = \cO(\epsilon_0)$ for all $t \geq 1$. Due to the setting of $t_k = \Theta\left(\log(1/\epsilon_k)\right)$, the above error and Theorem~\ref{th: regret upper bound adversary} imply an $\tilde{\cO}(\epsilon_k+\epsilon_0) = \tilde{\cO}(\epsilon_0)$ last-iterate regret solution to the inner problem at each macro-iteration.

Denote $(\tilde{\theta}_{k})_{k=1}^{K}$ as the iterates under exact gradient descent. Following similar arguments as in Theorem~\ref{th: oracle-based regret double loop} and setting $\Delta_k := J(\theta_k,\xi_k) - J(\tilde{\theta}_{k},\xi_k)$, the regret under inexact policy gradient descent with iterates $(\theta_k)_{k=1}^{K}$ is given by
\begin{align*}
 \Phi(\theta_{K}) - \Phi(\theta^*) &\leq \left( J(\tilde{\theta}_{K},\xi_K) + \tilde{\cO}(\epsilon_0)  - J(\theta^*,\xi_K) \right) + \Delta_k  \tag{Lemma~\ref{lem: robust-objective regret}} \\
                            &\leq c_{\text{pol}}^K M \left(J(\tilde{\theta}_{0}, \xi_{0}) + \tilde{\cO}(\epsilon_0)) \right) + \Delta_k  \tag{using Lemma~\ref{lem: regret upper bound policy} as in Theorem~\ref{th: oracle-based regret double loop}}\\
                            &\leq c_{\text{pol}}^K M \left( J_{\text{max}} + \tilde{\cO}(\epsilon_0) \right) + \Delta_k  \,,
\end{align*}
where the last step follows from the setting of $K= \Theta(1/\eps)$, the upper bound on the cost, and the geometric series.

Now denote $Y(k) := \norm{\theta_k - \tilde{\theta}_{k}}$ as the deviation of the iterates from those under exact gradient descent $(\tilde{\theta}_{k})_{k=1}^{K}$. Then  
\begin{align*}
Y(k) &= \norm{\sum_{k=1}^{K} \eta_1(k-1) \left(\hat{\nabla} J(\theta_{k-1},\xi_{k}) - \nabla J(\tilde{\theta}_{k-1}),\xi_{k}) \right)} \\
                            &= \norm{\sum_{k=1}^{K}  \eta_1(k-1) \left(\hat{\nabla} J(\theta_{k-1},\xi_{k}) - \nabla J(\theta_{k-1}),\xi_{k})  + \nabla J(\theta_{k-1}),\xi_{k}) - \nabla J(\tilde{\theta}_{k-1},\xi_{k})\right)} \\
                            &\leq \frac{\eta_1(0)}{(1-\gamma)} \norm{\cO(\eps)  + L  Y(k-1)} \,.
\end{align*}
To resolve the recursion, note that $Y(0) = 0$ such that $Y(k) = \cO(\eps)$ for all $k \geq 1$. Applying $M_{\theta}$-Lipschitz continuity of $J$, we have for all $k \in [K]$ that 
\begin{align*}
\Delta_k \leq M_{\theta} \norm{\theta_{k} - \theta_{k}'} \leq M_{\theta} \sqrt{d_{\theta}} \norm{\theta_{k} - \theta_{k}'}_{\infty} \leq M_{\theta} \sqrt{d_{\theta}}\cO(\eps) \,.
\end{align*}
With $K = \Theta(\log(\frac{1}{\eps}))$, the sample-based regret is given by 
\begin{align*}
\Phi(\theta_{K}) - \Phi(\theta^*) &\leq  c_{\text{pol}}^K M \left( J_{\text{max}}+\tilde{\cO}(\epsilon_0)\right) + M_{\theta} \sqrt{d_{\theta}} \cO(\eps) \\
                                  &= \cO(\eps) \,,
\end{align*}
Now set  $n = \tilde{\cO}(\frac{1}{\eps^2})$ as the number of trajectories required to estimate the policy gradient as in Lemma~\ref{lem: trajectories}. Note that similarly, due to the definition of $n_k$, we have that $n_k \leq n_K = \tilde{\cO}(\frac{1}{\eps^2})$. Therefore, the total number of required samples, $n_{\text{total}}$, is given by
\begin{align*}
n_{\text{total}} = \sum_{k=1}^{K} n + n_k t_k \leq K \tilde{\cO}\left(\frac{1}{\eps^2}\right) (1 + \overline{t}) =  \tilde{\cO}\left(\frac{1}{\eps^2}\right) \left(1 + \Theta\left(\log\left(\frac{1}{\eps}\right)\right)\right) = \tilde{\cO}\left(\frac{1}{\eps^2}\right) \,.
\end{align*}
\end{proof}

\subsection{Mean-field optimisation}
\label{sec: mean-field optimisation}
Now we consider mean-field optimisation, which is based on a distributional optimisation formulation. Such a formulation can more readily  
The second regret analysis is based on mean-field Langevin stochastic descent-ascent (MFL-SDA) \citep{Liu2025}, an algorithm for distributional min-max optimisation. We formulate the approach for RMDPs as summarised in Algorithm~\ref{alg: mfl-sda}. The assumptions are inherited from \citep{Liu2025}. Rather than the PL condition, its distributional counterpart, the log-Sobolev inequality is assumed. 

\begin{definition}[Log-Sobolev inequality (LSI)]
\label{def: LSI}
A distribution $\nu'$ satisfies LSI with parameter $\alpha > 0$ if for all $\nu \ll \nu'$ (i.e. absolutely continuous),
\begin{equation}
\cH( \nu \vert\vert \nu') \leq \frac{1}{2\alpha} \cI(\nu \vert \nu') \,,
\end{equation}
where $\cI(\nu \vert \nu') = \Ex_{\nu}\left[\norm{ \nabla \log \frac{\dif \nu}{\dif \nu'}}^2 \right]$ is the relative Fisher information.
\end{definition}

\subsubsection{Regret analysis}
MFL-SDA solves the min-max optimisation problem
\begin{equation}
\label{eq: energy}
\min_{\nu \in \bP[\Theta]} \max_{\rho \in \bP[\Xi]} E(\nu,\rho) = G(\nu,\rho) + \tau \left( \cH(\nu \vert \nu_{\text{ref}})    - \cH(\rho \vert \rho_{\text{ref}})  \right) \,,
\end{equation}
where $G(\nu,\rho) = \Ex_{\theta \sim \nu,\xi \sim \rho}\left[J(\theta,\xi) \right]$ and the ref subscripts indicate Gaussian reference distributions. \(\bP[\Theta]\) denotes the set of distributions over \(\Theta\), the policy parameter space, and \(\bP[\Xi]\) is defined similarly for the space of transition kernels. The energy functional is strongly convex in $\nu$ and strongly concave in $\rho$, leading to the existence of a unique Nash equilibrium. This leads to Wasserstein gradient flow equations given by 
\begin{align}
&\dif\theta_t =   - \nabla \frac{\delta G[\nu_{t},\rho_{t}](\theta_{t})}{\delta \nu}\dif t   - \tau \theta_{t}  \dif t  + \sqrt{2 \tau} \dif B_t^1 \label{eq: WGF1} \\
&\dif \xi_t =  \eta \nabla \frac{\delta G[\nu_{t},\rho_{t}](\xi_{t})}{\delta \rho}\dif t  - \tau \xi_{t}  \dif t  + \sqrt{2 \tau} \dif B_t^2 \label{eq: WGF2} \,,
\end{align}
where $\frac{\delta G[\nu_{t},\rho_{t}](\theta_{t})}{\delta \nu}$ and $\frac{\delta G[\nu_{t},\rho_{t}](\xi_{t})}{\delta \rho}$ represent the first variations of $G$ with respect to $\nu$ and $\rho$, and $B_t^i$ for $i=1,2$ represent separate Brownian terms initialised at $B_0^i = 0$. The terms $\tau \theta_{t}$ and $\tau \xi_{t}$ show up due to the KL divergence being equal to the entropy plus second order moment for standard-normal reference distributions. The time-scale ratio $\eta > 0$ can be set following \cite{Lu2023} to ensure linear convergence. discretising the flow in time, MFL-SDA updates are given by the lines 4--5 of Algorithm~\ref{alg: mfl-sda}. 

The specific LSI assumption over the policy and adversary distributions is defined through Gibbs distributions representing the inner maximizers.
\begin{assumption}
\label{ass: LSI}   
For any $\nu \in \bP[\Theta]$ and any $\rho \in \bP[\Xi]$, the Gibbs distributions
\begin{align*}
&P_{\nu}[\rho](\xi) \propto \exp\left( \tau^{-1}  \frac{\delta G[\nu,P_{\nu}[\rho]](\xi)}{\delta \rho} - \frac{1}{2}\norm{\xi}^2  \right) \\
&P_{\rho}[\nu](\theta) \propto \exp\left( - \tau^{-1}  \frac{\delta G[P_{\rho}[\nu],\rho](\theta)}{\delta \nu} - \frac{1}{2}\norm{\theta}^2  \right)
\end{align*}
both satisfy LSI (Definition~\ref{def: LSI}) with parameter $\alpha > 0$.
\end{assumption}

The relation between the PL condition and LSI has been discussed in previous works. In particular, one of the interpretations is as a PL condition over the functional $\cH(\nu \vert \nu') = \int \log(\frac{\dif \nu}{\dif \nu'}) \dif \nu$  in the Wasserstein geometry \citep{Otto2000}. Applying this reasoning with $\nu' = P_{\nu}[\rho]$, 
\begin{align}
\cH(\nu \vert \nu') \leq \frac{1}{2\alpha} \cI(\nu \vert \nu') = \frac{1}{2\alpha} \int \norm{\nabla \log \frac{\dif \nu}{\dif \nu'}}^2 \dif \nu  = \frac{1}{2\alpha} \norm{\nabla  \cH(\nu \vert \nu')}^2 \,.
\end{align}
Analogously $\cH(\rho \vert \rho') \leq \frac{1}{2\alpha} \norm{\cH(\rho \vert \rho')}^2$ for $\rho' = P_{\rho}[\nu]$. Such a PL condition enables linear convergence to the optimal Gibbs distributions.

In addition to the LSI assumption, MFL-SDA requires the three below assumptions. 
\begin{assumption}[Initial condition]
\label{ass: initial condition}
The initial distribution has finite energy, i.e. $E(\nu_0,\rho_0) < \infty$, and the initial third order moments satisfy $\Ex_{\theta \sim \nu_0} \left[ \norm{\theta}_2^4 \right], \Ex_{\xi \sim \rho_0} \left[ \norm{\xi}_2^4 \right] < \infty$.
\end{assumption}

\begin{assumption}[Regularity of $G$]
\label{ass: regularity}
The functional $G(\nu,\rho)$ is convex in $\nu$ and concave in $\rho$. The first variations have bounded higher-order derivatives, up to the fourth power, i.e. $\norm{\nabla^i \frac{\delta G}{\delta \nu}}_F,\norm{\nabla^i \frac{\delta G}{\delta \rho}}_F \leq M_i$ where $\norm{\cdot}_{F}$ indicates the Frobenius norm. The cross second-order variations are bounded, i.e. $\norm{\frac{\delta^2 G}{\delta \nu \delta \rho}[\nu, \rho](\xi)}_{\infty} \leq C_0$. Moreover, the Hessian and Jacobian of its second variations are bounded, i.e. $\norm{\nabla_{\theta} \nabla_{\theta}^{\tr} \frac{\delta^2 G}{\delta \nu \delta \rho}}, \norm{\nabla_{\theta} \nabla_{\theta}^{\tr} \frac{\delta^2 G}{\delta \nu \delta \rho}} \leq  C_1$ and $\norm{\nabla_{\theta} \frac{\delta^2 G}{\delta \nu \delta \rho}}, \norm{\nabla_{\rho} \frac{\delta^2 G}{\delta \nu \delta \rho}} \leq  C_2$.
\end{assumption}
Note that the convexity in $\nu$ is not restrictive since for any $\gamma \in  (0,1)$
\begin{align*}
&G(\gamma \nu \! + \!(1-\gamma) \nu',\rho) = \int_{\Xi} \int_{\Theta} \! J(\theta,\xi) \, (\gamma \nu \!+ \!(1-\gamma) \nu)(\theta) \rho(\xi) \\
&= \int_{\Xi} \int_{\Theta} \! J(\theta,\xi) \gamma \nu(\theta) \rho(\xi) +  \int_{\Xi} \int_{\Theta} \! J(\theta) (1-\gamma) (\theta,\xi) \rho(\xi) 
= \gamma G(\nu,\rho) +  (1-\gamma) G(\nu',\rho) \,.
\end{align*}
For concavity in $\rho$, the analogous equality follows.

\begin{assumption}[Bounded moments]
\label{ass: bounded moments}
Define $g_k\:\nabla \frac{\delta}{\delta \nu}  G[\nu_{k},\rho_{k}](\theta_{k})$ and $h_k\:\frac{\delta J}{\delta \rho} [\nu_{k+1},\rho_{k}](\xi_{k})$. Then there exists a scalar $\zeta \geq 0$ such that $\Ex \left[\norm{g_k - \hat{g}_k}^4 \vert \nu_{k},\rho_{k} \right], \Ex \left[\norm{h_k - \hat{h}_k}^4 \vert \nu_{k+1},\rho_{k} \right] \leq \zeta$.
\end{assumption}

Following the uniform-in-time propagation of chaos property of MFL dynamics \citep{Chen_2025}, the empirical distribution of an $N$-particle system will closely match that of the true distribution, with an approximation error of at most $\cO(1/N)$, for all iterations of the algorithm. Therefore, we consider a practical implementation with $N$ particles to implement the algorithm.

Due to $G$ being bilinear in its arguments, the exact Wasserstein gradients are given by
\begin{align}
& \nabla \frac{\delta G}{\delta \nu}[\nu,\rho](\theta)  
=\Ex_{\xi \sim \rho}[ \nabla_\theta J(\theta,\xi)], \label{eq: exact Wasserstein policy}
\\
& \nabla \frac{\delta G}{\delta \rho} [\nu,\rho](\xi) 
=\Ex_{\theta \sim \nu}[ \nabla_\xi J(\theta,\xi)]\,.  \label{eq: exact Wasserstein adversary}
\end{align}
We then consider finite-particle approximations similar to SGD-MFLD \citep{Suzuki2023}, 
\begin{align}
& \nabla \frac{\delta G}{\delta \nu}[\nu,\rho](\theta^i)  
\approx \frac{1}{N}\sum_{j=1}^N \nabla_\theta J(\theta^i,\xi^j),
\label{eq: particle approximation policy} \\
& \nabla \frac{\delta G}{\delta \rho} [\nu,\rho](\xi^j) 
\approx \frac{1}{N}\sum_{i=1}^N \nabla_\xi J(\theta^i,\xi^j)\,.  \label{eq: particle approximation adversary}
\end{align}

\begin{algorithm}
\caption{Mean-field Langevin Stochastic Descent-Ascent (MFL-SDA) for RMDPs.} \label{alg: mfl-sda}
\begin{algorithmic}[1]
\State \textbf{Inputs:} initial parameter distributions $\nu_0$,$\rho_0$, initial $N$ policy and adversary parameters $\theta_0,\xi_0$, learning rates $\eta_1,\eta_2$, Gaussian noise $\{e_k^i\}_{k=1}^{K}$ with $e_k^i \in \Reals^{N \times d}$ for $i=1,2$.
\State \textbf{Outputs:} near-optimal parameter distributions $\nu_K$,$\rho_K$
 \For{$k = 1,\dots,K$}
 \State Sample trajectories using $\pi_{\theta_{k-1}^i}$ and $f_{\xi_{k-1}^i}$ for all $i=1,\dots,N$.
\State Estimate $\nabla \frac{\delta G}{\delta \nu} [\nu_{k-1},\rho_{k-1}](\theta_{k-1})$ (e.g. \eqref{eq: particle approximation policy}).
\State Update $\theta_k \gets \theta_{k-1} - \eta_1 \left(\hat{\nabla} \frac{\delta G}{\delta \nu} [\nu_{k-1},\rho_{k-1}](\theta_{k-1}) + \tau \theta_{k-1}    \right)  + \sqrt{2 \eta_1 \tau} e_k^1$
 \State Sample trajectories using $\pi_{\theta_{k-1}^i}$ and $f_{\xi_{k-1}^i}$ for all $i=1,\dots,N$.
\State Estimate $\nabla \frac{\delta G}{\delta \rho} [\nu_{k},\rho_{k-1}](\xi_{k-1})$ (e.g. \eqref{eq: particle approximation adversary}). 
\State Update $\xi_k \gets \xi_{k-1} + \eta_2 \left( \hat{\nabla} \frac{\delta G}{\delta \rho} [\nu_{k},\rho_{k-1}](\xi_{k-1}) - \tau \xi_{k-1}    \right)  + \sqrt{2 \eta_2 \tau} e_k^2$
 \EndFor
\end{algorithmic}
\end{algorithm}

MFL-DA assumes the availability of exact gradients while MFL-SDA works under unbiased estimates of the true gradient. Both lead to a regret bound composed of a geometric decay term and a bias term, and differ only in that the bias term of now depends on higher-order moments, which are bounded by Assumption~\ref{ass: bounded moments}, and by the regret being formulated in terms of an expectation over the stochasticity of the gradient. Therefore, for brevity, we only treat the stochastic gradient case here.

The stochastic gradient analysis sets the number of iterations to $K= \cO(\frac{1}{\eps} 
\log(\frac{1}{\eps}))$, where the $\eps = \cO(\eta_1)$ in the denominator appears due to the time discretisation of \Cref{eq: WGF1,eq: WGF2} into Algorithm~\ref{alg: mfl-sda}. While this number is slightly increased compared to the setting $K= \cO(\frac{1}{\eps})$ in the sample-based analysis double-loop algorithm in Section~\ref{sec: double-loop}, this algorithm operates on the space of parameter distributions and may have practical benefits in global optimisation.

The proof relies on the Lyapunov function, which is defined for a fixed $\lambda > 0$ as
\begin{align}
\label{eq: Lyapunov}
&\cL(\nu,\rho) = \cL_1(\nu) + \lambda \cL_2(\nu,\rho) \\
&\cL_1(\nu) = \max_{\rho' \in \bP[\Xi]} E(\nu,\rho')   - \min_{\nu \in \bP[\Theta]} \max_{\rho \in \bP[\Xi]} E(\nu,\rho)    \nonumber \\ 
&\cL_2(\nu,\rho) = \max_{\rho' \in \bP[\Xi]} E(\nu,\rho') - E(\nu,\rho)  \nonumber \,,  
\end{align}
and which vanishes to zero for distributions equal to the global optimum $(\nu^*,\rho^*)$ in the weak sense.

\begin{theorem}[Last-iterate regret of MFL-SDA (Corollary~1 of \cite{Liu2025} restated)]
\label{th: regret MFL-SDA}
Let LSI with $\alpha > 0$ hold for the Gibbs distributions $P_{\nu}[\rho]$ and $P_{\rho}[\nu]$ as in Assumption~\ref{ass: LSI}. Let $\eta_1 \leq \frac{1}{C_1}$, $\tau \leq \frac{1}{2C_1^2}$, $\eta_2 \leq \frac{1}{2\tau \alpha}$, and $\eta_1 = \min\{\lambda,0.2,\frac{1}{\tau \alpha} \} \tau \alpha \eta_2$. Further, require that $\eta_1 = \cO(\frac{\eps\tau^3\alpha^3}{d})$. Then under \cref{ass: initial condition,ass: bounded moments,ass: regularity}, MFL-SDA obtains a (distributional) last-iterate regret w.r.t. the saddle point of 
\begin{equation}
\label{eq: regret MFL-SDA}
\tau \left( \cH(\nu \vert \nu^*) - \cH(\rho \vert \rho^*)  \right) \leq (1 - 2\eta_1 \tau \alpha )^K \cL(\nu_0,\rho_0)  + \cO(\eps) \,.
\end{equation}
\end{theorem}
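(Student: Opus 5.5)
The plan is to reduce the statement to Corollary~1 of \cite{Liu2025} by checking that the RMDP instance \eqref{eq: energy} fits its hypotheses. The work is verification rather than a new proof. With $G(\nu,\rho)=\Ex_{\theta\sim\nu,\xi\sim\rho}[J(\theta,\xi)]$, the objective is bilinear in $(\nu,\rho)$. Hence $G$ is (trivially) convex in $\nu$ and concave in $\rho$, as already noted after Assumption~\ref{ass: regularity}. Adding the entropic regularisers $\tau\cH(\cdot\vert\cdot_{\text{ref}})$ with Gaussian references gives a functional that is strongly convex--strongly concave in the Wasserstein sense. This yields a unique saddle point $(\nu^*,\rho^*)$, and $\nu^*,\rho^*$ are the fixed points of the Gibbs maps $P_\rho[\cdot]$ and $P_\nu[\cdot]$.

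Next I will identify Algorithm~\ref{alg: mfl-sda} with the MFL-SDA iteration. The first variations are $\frac{\delta G}{\delta\nu}[\nu,\rho](\theta)=\Ex_{\xi\sim\rho}J(\theta,\xi)$ and $\frac{\delta G}{\delta\rho}[\nu,\rho](\xi)=\Ex_{\theta\sim\nu}J(\theta,\xi)$, so their spatial gradients are \eqref{eq: exact Wasserstein policy}--\eqref{eq: exact Wasserstein adversary}. These gradients are supplied by the pathwise or adjoint estimators of Theorems~\ref{th: pathwise policy gradient}--\ref{th: adjoint adversary gradient SDE}. Lines 6 and 9 are then the Euler--Maruyama discretisations of \eqref{eq: WGF1}--\eqref{eq: WGF2} with stochastic gradients. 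Their noise is controlled by Assumption~\ref{ass: bounded moments}, and their higher derivatives are controlled by Assumption~\ref{ass: regularity}. The derivative bounds can be traced back to the general Lipschitz and boundedness assumptions on $f_\xi,r,R,\pi_\theta$ through the sensitivity and costate ODEs, which have Gr\"onwall-type bounds.

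For the convergence itself, I would follow the Lyapunov argument of \cite{Liu2025} using $\cL$ from \eqref{eq: Lyapunov}. In continuous time, Assumption~\ref{ass: LSI} converts the entropy gaps $\cH(\nu_t\vert P_{\rho_t}[\nu_t])$ and $\cH(\rho_t\vert P_{\nu_t}[\rho_t])$ into Fisher-information dissipation, which gives $\frac{\dif}{\dif t}\cL\le -2\tau\alpha\,\cL$ provided the time-scale ratio and $\lambda$ are chosen as in the statement. In discrete time, the one-step bound becomes $\Ex\cL(\nu_{k+1},\rho_{k+1})\le(1-2\eta_1\tau\alpha)\Ex\cL(\nu_k,\rho_k)+\cO(\eta_1^2 d/\tau^2\alpha^2+\eta_1\zeta^{1/2})$. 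Unrolling this recursion and using $\eta_1=\cO(\eps\tau^3\alpha^3/d)$ makes the accumulated bias $\cO(\eps)$. The final step is the comparison $\tau(\cH(\nu\vert\nu^*)-\cH(\rho\vert\rho^*))\le\cL(\nu,\rho)$, which follows from the strong convexity--concavity of $E$, i.e.\ the entropy sandwich inequality around the saddle point.

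The main obstacle is making the discretisation error term rigorous. It needs uniform-in-time moment bounds on the iterates, which is where Assumption~\ref{ass: initial condition} and the $\tau\theta$ and $\tau\xi$ confinement terms enter. It also needs to control the error of the Gibbs proximal maps under the interacting, two-time-scale updates. Since the statement is presented as a restatement, I would import this step wholesale from \cite{Liu2025}. My own argument would only confirm that the constants $C_1$ and $\zeta$ are finite for our $J$, because the step-size conditions $\eta_1\le 1/C_1$ and $\tau\le 1/(2C_1^2)$ depend on them.
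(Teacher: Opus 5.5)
This is the paper's route as well: the paper gives no separate proof. It restates Corollary~1 of \cite{Liu2025}, takes the LSI, initial-condition, regularity and bounded-moment conditions as hypotheses, and reads off convexity--concavity of $G$ from its bilinearity. Your reduction is therefore correct, and the constants $C_1,\zeta$ need no tracing back to the general assumptions, since they are assumed.

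There is one slip in your heuristic recursion, harmless because you import that step anyway. A per-step term $\eta_1\zeta^{1/2}$ would unroll to $\zeta^{1/2}/(2\tau\alpha)$ rather than $\cO(\eps)$. For unbiased estimates the gradient noise enters at order $\eta_1^2$ per step, and that is what lets the accumulated bias vanish with $\eta_1$.
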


Under the aforementioned $N$-particle approximation, we obtain an $\tilde{\cO}(\frac{1}{\eps})$ iteration complexity.

\subsubsection{Regret analysis under particle approximation}
The convergence guarantee of \cite{Liu2025} applies to the population-level MFL-SDA dynamics. In the implementation, the distributions \((\nu_k, \rho_k)\) are replaced by the aforementioned empirical measures \((\nu_k^N, \rho_k^N)\). For one-population MFLD minimization, \cite{Suzuki2023} established \textit{uniform-in-time} particle error bounds of the order \(\cO(1/N)\) in suitable objective/squared metrics under LSI and regularity assumptions. However, their result does not directly apply to our present two-population descent-ascent dynamics. In particular, uniform-in-time propagation-of-chaos for minimax MFLD requires additional stability arguments/assumptions since the coupled saddle dynamics need not be contractive under the assumptions used for the one-population result.

Propagation of chaos refers to the phenomenon that, as the number of particles \(N\) increases, any fixed subset of the interacting particles becomes asymptotically independent and distributed according to the population mean-field law. That is, the empirical measure of a set of particles $X^i$, $i \in [N]$,
\[\mu^N=\frac{1}{N}\sum_{i=1}^N\delta_{X^i} ,\]
converges to the mean-field distribution \(\mu\) as \(N \rightarrow \infty\). As shown by \cite{Chen_2025} as the analogue for the single population case, this implies that the energy function $E$ to the minimum over the product measure is close to the mean-field optimum, i.e.
\begin{equation}
\label{eq: chaos}
\inf_{\mu^N} \frac{1}{N}  E^N(\mu^N) - E(\mu^*) \leq \frac{C_\tau}{N} \,,
\end{equation}
where $C_{\tau} > 0$ is a constant that depends on the regularisation constant $\tau$.

We formulate a similar result as equation \eqref{eq: chaos} in our min-max setting, which allows to make a \textit{conditional} finite-particle sample-complexity statement. In particular, if the empirical two-population MFL-SDA approximation satisfies
\begin{align}
    \sup_k \Ex \left[ \cL(\nu_k^N, \rho_k^N) - \cL(\nu_k, \rho_k)
    \right] \leq \frac{C_N}{N^\alpha}
\end{align}
for some \(\alpha > 0\), then choosing \(N = \Theta(\epsilon^{-1/\alpha})\) makes the particle error \(\cO(\epsilon)\). Since each MFL-SDA iteration uses \(N^2\) policy-adversary pairs and the population iteration complexity is \(K = \cO(\epsilon^{-1} \log (1/\epsilon))\), the resulting trajectory complexity is $K N^2 = \tilde{\cO}(\epsilon^{-1 -2/\alpha})$. Particularly, an \(\cO(1/N)\) particle error would yield \(\tilde{\cO}(\epsilon^{-3})\) while an \(\cO(N^{-1/2})\) particle error would yield \(\tilde{\cO}(\epsilon^{-5})\). 

For \textit{finite-time} propagation-of-chaos, note that in existing results (e.g. Theorem 3 of \cite{Suzuki2023}), an additional constant of $C_K \sim e^{-CK}$ is introduced for some $C_K > 0$ depending on the learning rate $\eta$ and the regularisation $\tau$. Such results require a contractivity/dissipativity assumption to obtain a uniform-in-time result in \(\cO(1/N)\). Informally, the Langevin regularisations \(-\tau \theta\) and \(-\tau \xi\) must dominate the destabilizing coupling caused by \(J\). In our setting, the \(\tau\) terms do give some dissipativity, but the cross terms from \(\nabla_\theta J(\theta, \xi)\) and \(\nabla_\xi J(\theta, \xi)\) can destroy contraction.

In other words, for MFL-SDA, the dissipativity is a condition on the drift/vector field of the Langevin descent-ascent dynamics. 
Let \[z = (\theta, \xi).\]
Then for the population MFL-SDA flow, ignoring measure-dependence for clarity, the drift for regularised descent-ascent is
\[b(z) = \begin{pmatrix}
 -\nabla_\theta J(\theta, \xi) - \tau \theta\\
    \nabla_\xi J(\theta, \xi) - \tau \xi
\end{pmatrix}.\]

Dissipativity states that for two points \(z, z'\),
\[\langle z - z' , b(z) - b(z')\rangle \leq -m \|z-z' \|^2\]
for some \(m > 0\). In the gradient flow dynamics, it implies that  $\norm{z_k - z_k'}^2 \leq e^{-2mk} \norm{z_0 - z'_0}^2$ by Gr\"{o}nwall's inequality. 

Expanding the RHS, we obtain the terms
\[- \langle \theta - \theta' , \nabla_\theta J(\theta, \xi) - \nabla_\theta J(\theta', \xi') \rangle +  \langle \xi - \xi' , \nabla_\xi J(\theta, \xi) - \nabla_\xi J(\theta', \xi') \rangle - \tau \|\theta - \theta'\| - \tau \|\xi - \xi'\|^2.\]
The last two terms are always negative, whereas the first two can oscillate depending on \(J\). This necessitates further investigation and a derivation of stability criterion for this setting.

The precise drift $b$ of the ascent-descent Langevin dynamics would also have measure-dependence, i.e.
\begin{align*}
    &b_\theta(\theta; \rho) = - \Ex_{\xi \sim \rho}[\nabla_\theta J(\theta , \xi)] - \tau \theta\\
    &b_\xi(\xi; \nu) = \Ex_{\theta \sim \nu}[\nabla_\xi J(\theta, \xi)] - \tau \xi.
\end{align*}

The stability condition for this more precise setting would need to be of the form
\begin{align}
    \langle z-z' , b(z; \nu, \rho) - b(z'; \nu', \rho') \rangle \leq - m \|z-z'\|^2 + C \left( W^2_2(\nu, \nu') + W^2_2(\rho, \rho') \right)
\end{align}
where the first term gives the contraction between the particles and the second term accounts for the fact that the two particles may see different empirical distributions.

\begin{remark}
    Note that Assumption~\ref{ass: regularity} provides the convex-concave regularity required for the population-level MFL-SDA convergence theorem. However, this is distinct from the joint dissipativity/monotonicity condition needed to obtain uniform-in-time finite-particle propagation of chaos for the two-population particle system. The latter would require additional control of the coupled descent-ascent drift, for example strong monotonicity of the game vector field or dominance of the Langevin regularisation over the cross-interaction constants.
\end{remark}

\subsubsection{A note on Mirror Mean-Field Langevin Dynamics}
The above MFL-SDA algorithm typically solves an \textit{unconstrained} mean-field Langevin descent-ascent problem in Euclidean ambient space, with feasibility handled by the energy function \[E(\nu,\rho) = G(\nu,\rho) + \tau \left( \cH(\nu \vert \nu_{\text{ref}})    - \cH(\rho \vert \rho_{\text{ref}})  \right)\] where \(\nu_{\text{ref}}\) and \(\rho_{\text{ref}}\) denote reference measures (taken to be standard Gaussians). Note that there is no projection operator in \Cref{alg: mfl-sda}. Given that our ascent measure \(\rho\) (parametrised by \(\xi\)) is defined to be in a compact set in our original RMDP problem \eqref{eq: RMDP}, one may need to find an alternate way to incorporate the constraint if the MFL-SDA is to be used. A naive replacement by projection here would (under certain conditions) change the dynamics qualitatively. A hard projection introduces a non-smooth boundary operation, resulting in the inability to use the arguments/techniques in \cite{Liu2025}. To this end, we can leverage the techniques in \cite{gu2025mirrormeanfieldlangevindynamics}, which deal with mirror dynamics of mean-field Langevin dynamics (MFLD). 

In this case, we can assume that the min-max problem \eqref{eq: energy} (at least, for the adversary ascent \(\rho\)) is performed over measures supported on a \textit{convex} set \(X \subset \Reals^d\). Thus, the optimisation problem reads 
\[\min_{\nu \in \bP[\Theta]} \max_{\rho \in \bP[\Xi]} E(\nu,\rho) = G(\nu,\rho) + \tau \left( \cH(\nu \vert \nu_{\text{ref}})    - \cH(\rho \vert \rho_{\text{ref}})  \right).
\]

For the ascent, recall the first variation with respect to \(\rho\):
\[\frac{\delta E}{\delta \rho}[\nu, \rho](\xi), \quad h_\rho(\nu, \rho; \xi) \: \nabla_\xi \frac{\delta E}{\delta \rho}[\nu, \rho](\xi).\]

We assume the existence of an appropriate Legendre mirror map/barrier function for the constrained set; that is, there exists a \(\psi : \bP[\Xi] \rightarrow \Reals\). The primal variable at iteration \(k\), \(\xi_k = \nabla \psi^*(\tilde \xi_k)\) where \(\tilde \xi\) is the dual variable. 

Then the deterministic mirror mean-field ascent step for a constrained particle \(\xi^j_k\) at iteration \(k\) is
\begin{align*}
    \tilde \xi^j_k & = \nabla \psi(\xi^j_k),\\
    \tilde \xi^j_{k + 1/2} &= \tilde \xi^j_k + \eta_2 h_\rho(\nu_k, \rho_k; \xi^j_k),\\
    \xi^j_{k+1} &= \nabla \psi^*(\xi^j_{k+1/2}).
\end{align*}
Equivalently, in proximal/Bregman form, we have the particle update
\[\xi^j_{k+1} = \argmax_{\xi \in \Xi} \left \{  \langle h_\rho(\nu_k, \rho_k; \xi^j_k), \xi \rangle - \frac{1}{\eta_2}D_\psi (\xi, \xi^j_k)\right\}\]
where 
\[D_\psi(\xi, \xi^j_k) = \psi(\xi) - \psi(\xi^j_k) - \langle \nabla \psi(z^j_k), \xi - \xi^j_k \rangle\]
is the Bregman divergence.
Thus, we update in dual coordinates \(\tilde \xi\) and map it back with \(\nabla \psi^*: \Reals^{d_\xi} \rightarrow \Xi\), ensuring feasibility. For more details, we refer to \cite{gu2025mirrormeanfieldlangevindynamics}.

\begin{remark}
\label{rem: uncertainty set parametrisation}
    Certain parametrisations involving squashing functions at the output layer can be practically useful even when the parameter space \(\Xi\) is unconstrained. For instance, the parametrisation $g_{\xi}(t,x) = \tanh(W (t,x)^{\tr} + B)$ with $\xi = (W,B) \in \Xi = \Reals^{d_x \times (1+d_x)} \times \Reals^{d_x}$ will obtain deviations from the nominal that are limited to the space $[-1,1]^{d_x}$ despite the parameters being unbounded.
\end{remark}

\section{Experiments}
We now turn to an empirical comparison of the different policy gradient estimators  for CT-RMDPs. We focus on the double-loop robust policy gradient algorithm (Algorithm~\ref{alg: double-loop}). Similarly to earlier implementations of double-loop robust policy gradients \citep{Wang2024,bossens2025mirror}, our double-loop algorithms are implemented by using multiple updates per macro-iteration for the policy and the transition kernel, and the inner problem is solved using a projected gradient ascent.

\subsection{Robust LQR Neural ODE domain} 
The first domain of interest is based on a classical linear quadratic control problem but with an added non-linear term to represent the uncertainty set. The nominal dynamics are given by 
\begin{equation}
\bar{f}(t,x_t,u_t) = A x_t + B u_t \,,
\end{equation}
 for $A \in \Reals^{d_x \times d_x}$ and $B \in \Reals^{d_x \times d_u}$, and the running cost is given by the quadratic equation
\begin{equation}
r(t,x_t,u_t) = x_t^{\tr} Q x_t + u_t^{\tr} B u_t \,,
\end{equation}
for $Q \in \Reals^{d_x \times d_x}$, and the terminal cost is $R(x_T) = x_T^{\tr} Q x_T \Delta t$. The dynamics of the system under parametrisation $\xi$ are given by the addition of the nominal dynamics and an uncertainty term depending on $\xi$ as in \eqref{eq: uncertainty set parametrisation}. The uncertainty term is implemented as an MLP of the form
\begin{equation}
g_{\xi}(t,x_t,u_t) = 2 (W_2 \tanh(W_1 (t, x_t, u_t) + B_1) + B_2) \,,
\end{equation}
where $\xi \in \Xi = \{(W_1,B_1,W_2,B_2) : \norm{(W,b)}_{\infty} \leq \phi\}$ represents the weights and biases. All experiments are conducted with $\cX, \cU = [-5.0,5.0]^2$ and $\phi = 1.0$. The ODE parameters, including $A$, $B$ and the starting state, are uniquely determined for each independent run based on samples from Gaussian distributions.

The policy is an MLP that takes the time and state as input and outputs the action. The policy has 2 fully connected hidden layers of size 128, the first of which is followed by dropout regularisation (with drop probability $0.6$) and ReLU activation, and the second of which is followed by ReLU activation. Its output layer depends on the type of algorithm. For the pathwise gradient and the deterministic Hamiltonian algorithms, the output layer is a fully connected layer which directly outputs the action through a sigmoid and normalisation. For the deterministic Hamiltonian gradient with exploration, the standard deviation for a Gaussian distribution is defined by an initial log standard deviation of -1.0 and geometric annealing is performed with factor 0.95. For the stochastic Hamiltonian gradient and discrete (REINFORCE) algorithms, the standard deviation is defined through a learnable parameter on the log scale, with range in $[-3.0,0.0]$.

\paragraph{Effectiveness of the adversary} Table~\ref{tab: adversaries} shows that both adversarial policy gradients are equally effective in maximizing the cost, with a higher cost compared to zero order optimisation when training for 100 iterations on a randomly initialized policy. As a validation, Table~\ref{tab: test performance} also shows that the empirical maxima observed in the robustness test are well below the costs observed in the adversary test.

\begin{table}[htbp!]
\centering
\caption{Comparison of adversaries w.r.t. the mean and standard error of the total cost across 5 independent runs of adversarial optimisation on a randomly initialised policy. For each run, the score is normalised to $[0,1]$ based on the the cost obtained by the pathwise adversary on that run, such that 1.0 indicates a score on par with the pathwise adversary.}
\label{tab: adversaries}
\begin{tabular}{c|c}
\toprule\\  
Pathwise Adversary & $1.000 \pm 0.000 $\\ 
ZO Adversary & $0.818 \pm 0.041 $\\ 
Hamiltonian Adversary & $1.000 \pm 0.000 $\\ 
\bottomrule
\end{tabular}
\end{table}

\begin{table}[htbp!]
    \centering
\caption{Normalized performance (mean $\pm$ standard error) on adversarial test (top) and robustness test (bottom) for each discretisation time step ($\Delta t$), aggregated across 5 independent runs. The bold entries highlight the best two performing algorithms in the adversary test. The adversary test performs 100 iterations of pathwise gradient ascent against the deterministic evaluation policy obtained from the training; the robustness test runs 50 randomly sampled transition kernels and takes the mean and maximum.}
\label{tab: test performance}
\begin{subtable}[t]{0.99\textwidth}
\caption{Adversary test.}
\label{tab: test performance adversary}
\resizebox{\textwidth}{!}{
\begin{tabular}{l|*{5}{c}}
\toprule \\ 
& \multicolumn{1}{l}{$\Delta t = 0.0005$}& \multicolumn{1}{l}{$\Delta t = 0.001$}& \multicolumn{1}{l}{$\Delta t = 0.005$}& \multicolumn{1}{l}{$\Delta t = 0.01$}& \multicolumn{1}{l}{$\Delta t = 0.05$}\\
\midrule\\  
Discrete Non-robust & $1.303$ \pmt{0.506} & $1.303$ \pmt{0.506} & $1.305$ \pmt{0.506} & $1.308$ \pmt{0.506} & $1.337$ \pmt{0.506} \\ 
Discrete Robust & $0.974$ \pmt{0.180} & $0.974$ \pmt{0.180} & $0.977$ \pmt{0.181} & $0.981$ \pmt{0.182} & $1.013$ \pmt{0.188} \\ 
Stochastic Hamiltonian Non-robust & $0.510$ \pmt{0.157} & $0.510$ \pmt{0.157} & $0.511$ \pmt{0.157} & $0.513$ \pmt{0.158} & $0.528$ \pmt{0.159} \\ 
Stochastic Hamiltonian Robust & $\mathbf{0.410}$ \pmt{0.130} & $\mathbf{0.410}$ \pmt{0.130} & $\mathbf{0.411}$ \pmt{0.130} & $\mathbf{0.411}$ \pmt{0.130} & $\mathbf{0.420}$ \pmt{0.132} \\ 
Stochastic Hamiltonian Robust ZO Both & $1.123$ \pmt{0.016} & $1.124$ \pmt{0.016} & $1.125$ \pmt{0.016} & $1.127$ \pmt{0.016} & $1.144$ \pmt{0.017} \\ 
Stochastic Hamiltonian Robust ZO Inner & $0.572$ \pmt{0.112} & $0.572$ \pmt{0.112} & $0.574$ \pmt{0.112} & $0.576$ \pmt{0.112} & $0.592$ \pmt{0.113} \\ 
Deterministic Hamiltonian Non-robust & $0.550$ \pmt{0.107} & $0.550$ \pmt{0.107} & $0.551$ \pmt{0.107} & $0.553$ \pmt{0.108} & $0.565$ \pmt{0.110} \\ 
Deterministic Hamiltonian Robust & $0.518$ \pmt{0.108} & $0.518$ \pmt{0.108} & $0.520$ \pmt{0.108} & $0.521$ \pmt{0.108} & $0.533$ \pmt{0.108} \\  
Deterministic Hamiltonian (explore) Non-robust & $0.558$ \pmt{0.106} & $0.558$ \pmt{0.106} & $0.559$ \pmt{0.107} & $0.561$ \pmt{0.107} & $0.574$ \pmt{0.109} \\ 
Deterministic Hamiltonian (explore) Robust & $0.522$ \pmt{0.106} & $0.523$ \pmt{0.106} & $0.524$ \pmt{0.106} & $0.525$ \pmt{0.106} & $0.535$ \pmt{0.107} \\ 
Pathwise Non-robust & $0.550$ \pmt{0.107} & $0.550$ \pmt{0.107} & $0.551$ \pmt{0.108} & $0.553$ \pmt{0.108} & $0.565$ \pmt{0.110} \\ 
Pathwise Robust & $\mathbf{0.434}$ \pmt{0.117} & $\mathbf{0.434}$ \pmt{0.117} & $\mathbf{0.435}$ \pmt{0.117} & $\mathbf{0.436}$ \pmt{0.118} & $\mathbf{0.445}$ \pmt{0.118} \\ 
Pathwise Robust ZO Both & $1.574$ \pmt{0.043} & $1.574$ \pmt{0.043} & $1.576$ \pmt{0.043} & $1.579$ \pmt{0.043} & $1.603$ \pmt{0.043} \\ 
Pathwise Robust ZO Inner & $0.521$ \pmt{0.114} & $0.521$ \pmt{0.114} & $0.522$ \pmt{0.115} & $0.524$ \pmt{0.115} & $0.535$ \pmt{0.116} \\ 
\end{tabular}}
\end{subtable}

\begin{subtable}[t]{0.99\textwidth}
\caption{Robustness test.}
\label{tab: test performance robust}
\resizebox{\textwidth}{!}{
\begin{tabular}{l|*{10}{c}}
\toprule \\ 
& \multicolumn{2}{l}{$\Delta t = 0.0005$}& \multicolumn{2}{l}{$\Delta t = 0.001$}& \multicolumn{2}{l}{$\Delta t = 0.005$}& \multicolumn{2}{l}{$\Delta t = 0.01$}& \multicolumn{2}{l}{$\Delta t = 0.05$}\\
\midrule\\  
& Mean & Maximum & Mean & Maximum & Mean & Maximum & Mean & Maximum & Mean & Maximum \\  
\midrule\\  
Discrete Non-robust & $0.421$ \pmt{0.222} & $1.207$ \pmt{0.527}& $0.421$ \pmt{0.222} & $1.207$ \pmt{0.527}& $0.422$ \pmt{0.222} & $1.207$ \pmt{0.528}& $0.423$ \pmt{0.222} & $1.208$ \pmt{0.528}& $0.431$ \pmt{0.223} & $1.210$ \pmt{0.533}\\ 
Discrete Robust & $0.316$ \pmt{0.077} & $0.882$ \pmt{0.213}& $0.316$ \pmt{0.077} & $0.882$ \pmt{0.213}& $0.317$ \pmt{0.077} & $0.884$ \pmt{0.213}& $0.319$ \pmt{0.078} & $0.887$ \pmt{0.214}& $0.333$ \pmt{0.082} & $0.907$ \pmt{0.220}\\ 
Stochastic Hamiltonian Non-robust & $0.099$ \pmt{0.016} & $0.358$ \pmt{0.079}& $0.099$ \pmt{0.016} & $0.358$ \pmt{0.079}& $0.099$ \pmt{0.016} & $0.358$ \pmt{0.079}& $0.100$ \pmt{0.016} & $0.359$ \pmt{0.079}& $0.103$ \pmt{0.016} & $0.360$ \pmt{0.078}\\ 
Stochastic Hamiltonian Robust & $0.123$ \pmt{0.017} & $0.423$ \pmt{0.095}& $0.123$ \pmt{0.017} & $0.423$ \pmt{0.095}& $0.124$ \pmt{0.017} & $0.421$ \pmt{0.095}& $0.125$ \pmt{0.017} & $0.420$ \pmt{0.094}& $0.134$ \pmt{0.016} & $0.460$ \pmt{0.075}\\ 
Stochastic Hamiltonian Robust ZO Both & $0.322$ \pmt{0.010} & $0.999$ \pmt{0.036}& $0.322$ \pmt{0.010} & $0.999$ \pmt{0.036}& $0.323$ \pmt{0.010} & $1.001$ \pmt{0.036}& $0.323$ \pmt{0.010} & $1.003$ \pmt{0.036}& $0.329$ \pmt{0.011} & $1.020$ \pmt{0.035}\\ 
Stochastic Hamiltonian Robust ZO Inner & $0.128$ \pmt{0.023} & $0.400$ \pmt{0.101}& $0.128$ \pmt{0.023} & $0.400$ \pmt{0.101}& $0.129$ \pmt{0.023} & $0.399$ \pmt{0.101}& $0.129$ \pmt{0.023} & $0.398$ \pmt{0.100}& $0.136$ \pmt{0.023} & $0.391$ \pmt{0.093}\\ 
Deterministic Hamiltonian Non-robust & $0.094$ \pmt{0.013} & $0.353$ \pmt{0.072}& $0.094$ \pmt{0.013} & $0.353$ \pmt{0.072}& $0.094$ \pmt{0.013} & $0.353$ \pmt{0.072}& $0.095$ \pmt{0.013} & $0.354$ \pmt{0.072}& $0.098$ \pmt{0.013} & $0.356$ \pmt{0.074}\\ 
Deterministic Hamiltonian Robust & $0.116$ \pmt{0.024} & $0.477$ \pmt{0.112}& $0.116$ \pmt{0.024} & $0.477$ \pmt{0.112}& $0.116$ \pmt{0.024} & $0.478$ \pmt{0.113}& $0.117$ \pmt{0.024} & $0.478$ \pmt{0.113}& $0.121$ \pmt{0.025} & $0.484$ \pmt{0.114}\\ 
Deterministic Hamiltonian (explore) Non-robust & $0.097$ \pmt{0.014} & $0.353$ \pmt{0.075}& $0.097$ \pmt{0.014} & $0.353$ \pmt{0.075}& $0.097$ \pmt{0.014} & $0.354$ \pmt{0.075}& $0.098$ \pmt{0.014} & $0.355$ \pmt{0.075}& $0.101$ \pmt{0.014} & $0.362$ \pmt{0.076}\\ 
Deterministic Hamiltonian (explore) Robust & $0.115$ \pmt{0.022} & $0.459$ \pmt{0.110}& $0.115$ \pmt{0.022} & $0.459$ \pmt{0.110}& $0.115$ \pmt{0.022} & $0.459$ \pmt{0.110}& $0.115$ \pmt{0.022} & $0.460$ \pmt{0.111}& $0.119$ \pmt{0.022} & $0.464$ \pmt{0.111}\\ 
Pathwise Non-robust & $0.094$ \pmt{0.013} & $0.353$ \pmt{0.072}& $0.094$ \pmt{0.013} & $0.353$ \pmt{0.072}& $0.094$ \pmt{0.013} & $0.353$ \pmt{0.072}& $0.095$ \pmt{0.013} & $0.354$ \pmt{0.072}& $0.099$ \pmt{0.013} & $0.356$ \pmt{0.074}\\ 
Pathwise Robust & $0.120$ \pmt{0.026} & $0.465$ \pmt{0.112}& $0.120$ \pmt{0.026} & $0.465$ \pmt{0.112}& $0.120$ \pmt{0.026} & $0.466$ \pmt{0.112}& $0.121$ \pmt{0.026} & $0.466$ \pmt{0.112}& $0.124$ \pmt{0.026} & $0.471$ \pmt{0.114}\\ 
Pathwise Robust ZO Both & $0.559$ \pmt{0.019} & $1.511$ \pmt{0.034}& $0.559$ \pmt{0.019} & $1.511$ \pmt{0.034}& $0.560$ \pmt{0.019} & $1.513$ \pmt{0.034}& $0.561$ \pmt{0.019} & $1.514$ \pmt{0.034}& $0.570$ \pmt{0.019} & $1.530$ \pmt{0.032}\\ 
Pathwise Robust ZO Inner & $0.105$ \pmt{0.019} & $0.434$ \pmt{0.112}& $0.105$ \pmt{0.019} & $0.434$ \pmt{0.112}& $0.105$ \pmt{0.019} & $0.434$ \pmt{0.112}& $0.106$ \pmt{0.019} & $0.435$ \pmt{0.111}& $0.109$ \pmt{0.019} & $0.437$ \pmt{0.109}\\ 
\hline
\end{tabular}}
\end{subtable}
\end{table}

\paragraph{Effectiveness of non-robust and robust policy optimisation.} The general test performance of policies can be observed in the adversary test in Table~\ref{tab: test performance}, which is based on the hyperparameter settings as shown in Table~\ref{tab: hyperparameters ODE} of Appendix~\ref{app: double-loop details}. The pathwise and deterministic Hamiltonian policy gradient algorithms are effective in both non-robust and robust optimisation, and the best performing algorithm is the robust pathwise optimiser. The stochastic Hamiltonian policy gradient algorithm performs best overall. The traditional, discrete-time algorithm (REINFORCE) yields a comparatively low performance. The zero order optimisers generally perform lower, although if only the adversary is zero order, a reasonable test performance on par with non-robust optimisation can be obtained.

\paragraph{Discretisation effects} 
Sample-and-hold training appears to be nearly equivalent to the training without it (i.e. considering factors involving e.g. $\nabla_x \mu_{\theta}$ to reflect policy changes between time steps). The best score was found in the pathwise robust sample-and-hold based training, perhaps due to it being slightly less complex to learn. Due to the similarity, and slightly improved performance, we only report the sample-and-hold training.

Large improvements are found when comparing continuous-time algorithms to the discrete-time algorithm, even for the nearly equivalent stochastic Hamiltonian gradient algorithm. The large effect is attributed to the continuous-time algorithms taking into account the dynamics in between two time steps to compute the policy gradient more accurately, i.e. based on an accurate approximation of the quadratic cost terms. These effects are likely underestimated because -- in the absence of an adversarial gradient for discrete-time algorithms in deterministic dynamics -- the inner problem was solved with our deterministic Hamiltonian adversary.  

The trained policies themselves do not seem to depend on the Discretisation time step at test time. The larger time step has slightly higher performance  due to setting the terminal reward proportional to the time step (consistent with discrete-time algorithms). Beyond this effect, there are only limited effects observed from the test-time time steps. The results shown in Table~\ref{tab: test performance} are based on an adversary trained with the coarsest time step ($\Delta t = 0.05$). However, we also conducted preliminary experiments where the tests separately trained the adversaries for each time step; such adversarial tests were found to be unreliable, with the adversary performing lower than some empirical maxima, which can be attributed to the 100 iterations being insufficient to optimise the much more complex optimisation problems. We also confirm that with larger number of iterations inversely proportional to the time step, the performance still remains similar across time steps.

\begin{table}[htbp!]
    \centering
\caption{Normalized adversarial test performance of the best policy under different MFL-SDA optimisers. For each Discretisation time step ($\Delta t$), the lowest cost is selected out of the $N=25$ policies and it is aggregated across 5 independent runs (mean $\pm$ standard error). The bold entries highlight the best-performing algorithm for each Discretisation time step. The normalisation is consistent with Table~\ref{tab: test performance}, allowing direct comparison.}
\label{tab: test performance MFL-SDA}
\begin{tabular}{l|*{5}{c}}
\toprule \\ 
& \multicolumn{1}{l}{$\Delta t = 0.0005$}& \multicolumn{1}{l}{$\Delta t = 0.001$}& \multicolumn{1}{l}{$\Delta t = 0.005$}& \multicolumn{1}{l}{$\Delta t = 0.01$}& \multicolumn{1}{l}{$\Delta t = 0.05$}\\
\midrule\\  
Deterministic Hamiltonian Robust& $\mathbf{0.193}$ \pmt{0.034} & $\mathbf{0.193}$ \pmt{0.034} & $\mathbf{0.195}$ \pmt{0.034} & $\mathbf{0.198}$ \pmt{0.034} & $0.224$ \pmt{0.034} \\ 
Pathwise Robust& $0.205$ \pmt{0.048} & $0.205$ \pmt{0.048} & $0.206$ \pmt{0.048} & $0.207$ \pmt{0.048} & $\mathbf{0.218}$ \pmt{0.047} \\ 
Discrete Robust& $0.404$ \pmt{0.120} & $0.405$ \pmt{0.120} & $0.405$ \pmt{0.120} & $0.406$ \pmt{0.119} & $0.414$ \pmt{0.114} \\ 
\bottomrule
\end{tabular}
\end{table}

\begin{figure}[htbp!]
    \centering
    \includegraphics[width=0.99\linewidth]{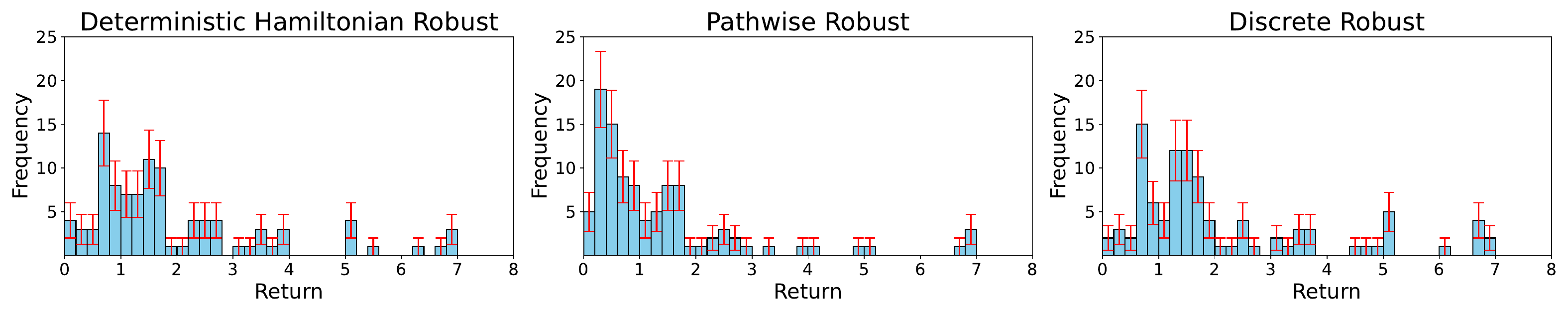}
    \caption{Histogram of the test performance of the policies obtained by the distributional optimisers.  The $y$-axis indicates the total frequency across all policies for all seeds. The error bars indicate $\sqrt{n}$, where $n = \text{seeds} \times \text{N}$, in line with a Poisson distribution assumption. The Discretisation time step is $\Delta t = 0.0005$.}
    \label{fig: histogram}
\end{figure}

\paragraph{Distributional optimisation with MFL-SDA} In addition to the above-mentioned double-loop algorithm experiments, we also present MFL-SDA based experiments. These experiments are conducted in the same environment with the same adversary test. We conduct experiments with $N=25$ particles for the policies and the adversaries to form the particle-based gradient estimates as in \Cref{eq: particle approximation policy,eq: particle approximation adversary}. To limit computational expense, we only evaluate Discrete Robust, Deterministic Hamiltonian Robust, and Pathwise Robust, and perform only 50 iterations. The hyperparameter settings are shown in Table~\ref{tab: hyperparameters ODE} of Appendix~\ref{app: double-loop details}. The results in Table~\ref{tab: test performance MFL-SDA} show that the best policy in the set of particles is invariably better-performing than the obtained double-loop algorithm policies, showcasing a solid global optimisation property. Consistent with the previous results, the discrete algorithm is outperformed by the continuous-time algorithms. The deterministic Hamiltonian and pathwise algorithm are comparable in performance with the Hamiltonian algorithm performing slightly better for small time steps and the pathwise performing slightly better for $\Delta t = 0.05$. The test performance varies widely (Figure~\ref{fig: histogram}), and we note that more narrow distributions can be obtained using a lower regularisation constant $\tau$. Overall, the pathwise gradient obtains the best scores in few-shot adaptation and in the distributional sense.

\section{Conclusion}
This paper addresses the problem of continuous-time algorithms for RMDPs, focusing specifically on policy gradient algorithms. We first derive analytical gradient formulas for the policy and the adversary in the context of ODEs, random ODEs, and SDEs. We then derive double-loop optimization and mean-field optimization based regret bounds and investigate double-loop optimization empirically in a robust linear quadratic regulator domain. Through an empirical study, the effectiveness of stochastic and deterministic continuous-time policy gradient algorithms is confirmed, and a large improvement is found compared to discrete-time optimization and zero-order optimization. The empirical study also shows mean-field optimization as a powerful framework to achieve improved robust optimizers. Additional contributions are the tools derived to analyse rectangularity, gradient dominance, discretisation, and properties of double-loop algorithms under general policy parameterizations.

\bibliographystyle{plainnat}
\newpage
\bibliography{references.bib} 
\appendix 
 \section{Supporting lemmata}\label{sec:appendix}
\subsection{General lemmata}
The discrete-time policy gradient can be derived for random ODEs according to the following lemma.
\begin{lemma}[Discrete-time policy gradient]
\label{lem: discrete-time policy gradient}
Let $h$ be the discretisation step size and $t_n \: nh$. Recall that we work under a stochastic policy $u_n \sim \pi_\theta(\cdot|t_n, x_n)$ for the discrete cost $J_h(\theta, \xi) = \Ex \left[ \sum_{n=1}^{N-1} r(t_n, x_n, u_n)h + R(x_N)\right]$. Let the trajectory be $\tau = (x_0, u_0, x_1, u_1, \ldots, x_N)$ obtained under a random ODE. Then the discrete-time policy gradient is given by 
\begin{equation}
\label{eq: discrete-time policy gradient}
\nabla_\theta J_h(\theta, \xi) = \Ex \left[ \sum_{n=0}^{N-1} \nabla_\theta \log \pi_\theta (u_n|t_n, x_n) Q_h(t_n, x_n, u_n)\right] \,,
\end{equation}
where $Q_h(t_n,x_{t_n},u_n) \: \Ex[G(\tau) \vert t_n, x_n, u_n]$.
\end{lemma}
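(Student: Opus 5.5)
The plan is the classical likelihood-ratio (REINFORCE) derivation applied to the sample-and-hold discrete system, followed by the standard causality argument that turns the full return into the cost-to-go $Q_h$.

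\textbf{Step 1 (trajectory density and log-derivative trick).} For a fixed $\xi$, the random ODE discretised with sample-and-hold actions gives a Markov chain. Given $(x_n,u_n)$, the next state $x_{n+1}$ is produced by the transition kernel $\Pr_\xi(\cdot\vert t_n,x_n,u_n)$. This kernel is the flow map of $f_\xi$ over $[t_n,t_{n+1})$ together with any exogenous ODE randomness, and it does not depend on $\theta$. The trajectory density therefore factorises as
\[
p_\theta(\tau)=d_0(x_0)\prod_{n=0}^{N-1}\pi_\theta(u_n\vert t_n,x_n)\,\Pr_\xi(x_{n+1}\vert t_n,x_n,u_n).
\]
Write $G(\tau)=\sum_n r(t_n,x_n,u_n)h+R(x_N)$. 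Under the general assumptions, $\nabla_\theta\pi_\theta$ is bounded and Lipschitz and the costs are bounded on the compact invariant set, so dominated convergence lets us interchange differentiation and integration. This gives
\[
\nabla_\theta J_h=\Ex\big[G(\tau)\nabla_\theta\log p_\theta(\tau)\big]=\Ex\Big[G(\tau)\sum_{n=0}^{N-1}\nabla_\theta\log\pi_\theta(u_n\vert t_n,x_n)\Big],
\]
because $d_0$ and $\Pr_\xi$ contribute nothing to the $\theta$-gradient.

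\textbf{Step 2 (replace the full return by the cost-to-go).} Fix $n$ and split $G(\tau)$ into the costs incurred before step $n$ and the costs from step $n$ onward. The costs before step $n$ depend only on $(x_0,u_0,\dots,x_n)$. Condition on that history $\mathcal{F}_n$. Using
\[
\Ex_{u\sim\pi_\theta(\cdot\vert t_n,x_n)}\big[\nabla_\theta\log\pi_\theta(u\vert t_n,x_n)\big]=\nabla_\theta\int\pi_\theta(u\vert t_n,x_n)\,\dif u=0,
\]
the cross term between past costs and the score at step $n$ vanishes. For the remaining costs, condition on $(t_n,x_n,u_n)$. The tower property and the Markov property then replace the future costs by $\Ex[\,\cdot\,\vert t_n,x_n,u_n]=Q_h(t_n,x_n,u_n)$. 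Summing over $n$ gives \eqref{eq: discrete-time policy gradient}.

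\textbf{Main obstacle.} The one step that needs care is the notational mismatch between the statement and the result. The statement defines $Q_h=\Ex[G(\tau)\vert t_n,x_n,u_n]$ using the \emph{full} return, whereas the argument above produces the cost-to-go from step $n$. I would handle this by showing the two versions give the same gradient. The extra term in the full-return version is the past costs, which are $\mathcal{F}_n$-measurable. Its contribution to the gradient is zero by the zero-mean score identity, so the same baseline argument covers both readings.

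A secondary check is the indexing of the cost sum. The sum in $J_h$ starts at $n=1$, while the gradient sum starts at $n=0$; the $n=0$ cost term is either included or independent of $\theta$, and I would remark which. I would also verify that, for $x_n$ in the compact set, $\pi_\theta(\cdot\vert t_n,x_n)$ has a density, so that $\log\pi_\theta$ is well defined. The boundedness assumptions on $\nabla_\theta\pi_\theta$ are what make this work.
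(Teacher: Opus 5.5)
Your proposal is correct, but Step 2 takes a different route from the paper.

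The paper factorises $p_\theta(\tau)$ over the actions only, using that the random-ODE transitions are deterministic given $u_n$. It then applies the log-derivative trick as in your Step 1 and finishes in one line with the tower property. Since $\nabla_\theta\log\pi_\theta(u_n\vert t_n,x_n)$ is $\sigma(x_n,u_n)$-measurable,
\[
\Ex\big[\nabla_\theta\log\pi_\theta(u_n\vert t_n,x_n)\,G(\tau)\big]=\Ex\big[\nabla_\theta\log\pi_\theta(u_n\vert t_n,x_n)\,\Ex[G(\tau)\vert t_n,x_n,u_n]\big],
\]
which is exactly the statement with the full-return $Q_h$. So the \emph{main obstacle} you identify does not arise for the statement as written, and no causality or zero-mean-score argument is needed.

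Your Step 2 instead proves the reward-to-go version and then needs a second baseline argument to recover the full-return version. That is longer, but it gives something the paper's proof does not: the identity holds under either reading of $Q_h$. The reward-to-go reading is the one the paper relies on later. In Theorem~\ref{th: stochastic adjoint gradient}, $Q_h$ is described as the cost-to-go of the sample-and-hold system and matched with the continuous-time $Q$ of \eqref{eq: state-action value}, which integrates only from $t$ to $T$. Your kernel-based factorisation also covers stochastic transitions such as discretised SDEs, where the paper's action-only factorisation does not directly apply.

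Two small slips in your secondary checks, neither affecting the main argument:
\begin{itemize}
\item The $n=0$ running cost is not $\theta$-independent, since its expectation depends on $\theta$ through $u_0\sim\pi_\theta$. The clean observation is that the identity holds for any trajectory functional $G$ with $J_h=\Ex[G(\tau)]$ and $Q_h$ built from the same $G$. The gradient sum starts at $n=0$ because the score of $u_0$ appears in $\nabla_\theta\log p_\theta(\tau)$, however the costs are indexed.
\item For $\log\pi_\theta$ to be well defined, the policy density must be positive at the sampled actions. Boundedness of $\nabla_\theta\pi_\theta$ does not give this; it only helps with interchanging derivative and integral.
\end{itemize}
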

 \begin{proof} 
Under the random ODE assumption of the state dynamics, the only $\theta$-dependence in the trajectory distribution is though the policy, since the state transition is deterministic given $u_n$. Thus, we can decompose the trajectory density w.r.t. the action product measure as
\[p_\theta(\tau) = d_0(x_0) \Pi_{n=0}^{N-1}\pi_\theta(u_n, t_n, x_n),\]  where $d_0$ is the initial distribution. Let the total return be $G(\tau) = \sum_{n=0}^{N-1} r(t_n, x_n, u_n) + R(x_N)$. Then,
\[J_h(\theta, \xi) = \int G(\tau) p_\theta(\tau) \dif \tau.\]
Differentiating w.r.t. $\theta$, 
\[\nabla_\theta J_h = \int G(\tau) \nabla_\theta p_\theta(\tau) \dif \tau = \int G(\tau) p_\theta(\tau) \nabla_\theta \log p_\theta(\tau)\dif \tau\]
which is simply \(\Ex_{\tau}[G(\tau) \nabla_\theta \log p_\theta(\tau)]\).
It follows that \(\nabla_\theta \log p_\theta(\tau) = \sum_{n=0}^{N-1} \nabla_\theta \log \pi_\theta(u_n, t_n, x_n)\), implying \[\nabla_\theta J_h(\theta, \xi) = \Ex \left[ \sum_{n=0}^{N-1} \nabla_\theta \log \pi_\theta(u_n, t_n, x_n) G(\tau)\right].\]
Now for the policy gradient, we simply replace $G(\tau)$ for a cost-to-go function:
$Q_h(t_n,x_{t_n},u_n) \: \Ex[G(\tau) \vert t_n, x_{t_n}, u_n]$. By using properties of the conditional expectation, we obtain
 \[\Ex[\nabla_\theta \log \pi_\theta(u_n| t_n, x_n) G(\tau) \vert t_n,x_{t_n},u_n] = \nabla_\theta \log \pi_\theta(u_n| t_n, x_{t_n}) \Ex[G(\tau)| t_n, x_{t_n}, u_n],\]
which is equivalent to $\Ex[\nabla_\theta \log \pi_\theta(u_n| t_n, x_n) Q_h(t_n, x_n, u_n)]$.
So the standard discrete-time identity is
\[\nabla_\theta J_h(\theta, \xi) = \Ex \left[ \sum_{n=0}^{N-1} \nabla_\theta \log \pi_\theta (u_n|t_n, x_n) Q_h(t_n, x_n, u_n)\right].\]
\end{proof}

Below is a standard descent lemma which is convenient in the regret analysis.
\begin{lemma}[Standard descent lemma]
    \label{lem: standard descent}
Let \(J(\theta)\) be \(L\)-smooth. Then, for any \(0 \leq \eta \leq 1/L\), we have 
\begin{equation*}
    J(\theta_{t+1}) \leq J(\theta_t) - \frac{\eta}{2}\norm{\nabla J(\theta)}^2 \,.
\end{equation*} 
\end{lemma}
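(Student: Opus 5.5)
The plan is to derive the statement from the quadratic upper bound that $L$-smoothness gives, applied to one gradient step. Here $\theta_{t+1} = \theta_t - \eta \nabla J(\theta_t)$, and the gradient on the right-hand side is read as $\nabla J(\theta_t)$.

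First, I will establish the quadratic upper bound. Since $\nabla J$ is $L$-Lipschitz, the fundamental theorem of calculus along the segment from $\theta_t$ to $\theta_t + \Delta$ gives
\begin{align*}
J(\theta_t + \Delta) - J(\theta_t) - \langle \nabla J(\theta_t), \Delta\rangle = \int_0^1 \langle \nabla J(\theta_t + s\Delta) - \nabla J(\theta_t), \Delta \rangle \dif s .
\end{align*}
Cauchy--Schwarz and the Lipschitz bound make the integrand at most $L s \norm{\Delta}^2$. Integrating over $s$ therefore gives $J(\theta_t + \Delta) \leq J(\theta_t) + \langle \nabla J(\theta_t), \Delta\rangle + \frac{L}{2}\norm{\Delta}^2$.

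Second, I will substitute $\Delta = -\eta \nabla J(\theta_t)$. This yields
\begin{align*}
J(\theta_{t+1}) \leq J(\theta_t) - \eta\left(1 - \frac{L\eta}{2}\right)\norm{\nabla J(\theta_t)}^2 .
\end{align*}

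Third, I will use the step-size condition. For $0 \leq \eta \leq 1/L$ we have $1 - L\eta/2 \geq 1/2$, so $\eta(1 - L\eta/2) \geq \eta/2$. Because $\norm{\nabla J(\theta_t)}^2 \geq 0$, the coefficient of the squared norm can be replaced by $\eta/2$, which gives the claim.

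None of the steps poses a real obstacle. The only point needing care is the direction of the final inequality. The $\eta/2$ bound is the worst case over the admissible step sizes, attained at $\eta = 1/L$, not an identity. This is why the proof of Lemma~\ref{lem: descent} writes an equality at that step, which is valid only when $\eta = 1/L$.
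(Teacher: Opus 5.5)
Your proof is correct and follows the same route as the paper: derive the quadratic upper bound from $L$-smoothness, substitute $\Delta = -\eta \nabla J(\theta_t)$, and use $1 - L\eta/2 \geq 1/2$. The only difference is that you bound the remainder with the integral form, which needs only Lipschitz continuity of $\nabla J$, whereas the paper's sketch uses a Lagrange remainder with a Hessian bound and so implicitly assumes $J$ is twice differentiable; your remark that the equality in the proof of Lemma~\ref{lem: descent} only holds at $\eta = 1/L$ is also accurate.
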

\begin{proof}
    This is a well-known result, and we include a brief sketch. Let \(J(\theta)\) be \(L\)-smooth. Then, for any \(\Delta \theta\), 
    \begin{equation*}
        J(\theta + \Delta \theta) \leq J(\theta) + \langle  \nabla_\theta J(\theta), \Delta \theta\rangle + \frac{L}{2} \norm{\Delta \theta}^2 \,,
    \end{equation*}
    using a first-order Taylor expansion with a Lagrangian remainder bounded by $\frac{1}{2} \nabla^2 J(\theta_c) (\Delta \theta)^2 \leq \frac{L}{2} \norm{\Delta \theta}^2$ where $\theta_c \in [\theta,\theta+\Delta \theta]$. The result follows from setting \(\Delta \theta = - \eta \nabla J(\theta)\) and noting that \(\eta \leq 1/L\) implies \(1 - L \eta/2 \geq 1/2\).
\end{proof}

The lemma below, obtained from Theorem~4.2 of \cite{Wang2024}, shows an upper bound can be established based on the iterates. 
\begin{lemma}[Upper bound on RMDP regret]
\label{lem: robust-objective regret}
Let $\Phi$ be the objective in \eqref{eq: RMDP} and let $\theta^* \in \argmin_{\theta \in \Theta} \Phi(\theta)$. Then for any macro-step $k \geq 1$,
\begin{align}
\label{eq: robust-objective regret}
\Phi(\theta_{k}) - \Phi(\theta^*)  &\leq J(\theta_k,\xi_k) + \epsilon_k  - J(\theta^*,\xi_k) \,,
\end{align}
where $\epsilon_k > 0$ is the error tolerance for the inner problem.
\end{lemma}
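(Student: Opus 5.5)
The plan is to expand the definition of $\Phi$ as a supremum and combine two facts. The first is the $\epsilon_k$-optimality of the inner loop. The second is that the supremum of $J(\theta^*,\cdot)$ over $\Xi$ dominates its value at any particular $\xi$.

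First, I would use the stopping criterion of the while-loop in Algorithm~\ref{alg: double-loop}. When the loop exits at macro-iteration $k$, the returned $\xi_k$ satisfies
\[
J(\theta_k,\xi_k) + \epsilon_k \geq \max_{\xi \in \Xi} J(\theta_k,\xi) = \Phi(\theta_k).
\]
The maximum is attained because $\Xi$ is compact and $J(\theta_k,\cdot)$ is continuous; continuity follows from the Lipschitz assumptions on $g_\xi$ together with Gr\"onwall's inequality applied to the ODE \eqref{eq: dynamics}. This yields the upper bound $\Phi(\theta_k) \leq J(\theta_k,\xi_k) + \epsilon_k$.

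Second, I would lower-bound $\Phi(\theta^*)$. Since $\xi_k \in \Xi$ (the projection keeps all iterates feasible), the definition \eqref{eq: RMDP} gives
\[
\Phi(\theta^*) = \sup_{\xi\in\Xi} J(\theta^*,\xi) \geq J(\theta^*,\xi_k).
\]
Subtracting this from the first inequality gives \eqref{eq: robust-objective regret} directly.

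There is no real obstacle here. The only point needing care is that $\xi_k$ is exactly the iterate that certifies $\epsilon_k$-optimality for $\theta_k$, i.e.\ that the loop terminates. Termination can be ensured by invoking Theorem~\ref{th: regret upper bound adversary}, which shows that $t_k$ micro-iterations suffice. With this in hand, the argument mirrors Theorem~4.2 of \cite{Wang2024}.
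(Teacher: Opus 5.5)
Your argument is correct and is essentially the one the paper relies on; the paper gives no proof of its own and defers to Theorem~4.2 of \cite{Wang2024}. The inner loop's exit condition gives $\Phi(\theta_k)\le J(\theta_k,\xi_k)+\epsilon_k$, feasibility $\xi_k\in\Xi$ gives $\Phi(\theta^*)\ge J(\theta^*,\xi_k)$, and subtracting yields the bound; you do not need attainment of the maximum or the termination argument, because the inequality holds with $\sup$ and $\epsilon_k$-optimality of $\xi_k$ is the defining property of the double-loop iterate.
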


Performance difference lemmas are used to bound the value difference between two policies in discounted MDPs. Note that in discrete-time with $\Delta t \to 0$, our formulation is equivalent to the total cost MDP with states after the terminal state being absorbing states with cost 0. Hence we can use the so-called transient performance difference lemma for our formulation.
\begin{lemma}[Transient performance difference lemma, \cite{Lee2026}]
\label{lem: performance difference}
For a total cost MDP $\cM = (x_0,\Pr,r,\cX,\cU)$, where $x_0$ is the starting state, $\Pr $ is the transition kernel, $r$ is the cost function, and $\cX$ and $\cU$ are the state and action spaces. Let $\cM$ have a finite value function in the sense of Assumption 2.1 in \cite{Lee2026}. 
Then if $V(x \vert \pi') = 0$ for all recurrent states, the performance difference of two policies $\pi,\pi' \in \Pi$ is given by
\begin{equation}
V(x_0 \vert \pi) - V(x_0 \vert \pi') = \sum_{x \in \cX} \tilde{d}_{x_0,\Pr}^{\pi}(x) \sum_{u \in \cU} (\pi(u \vert x) - \pi'(u \vert x) ) Q(x,u \vert \pi') \,,
\end{equation}
where $\tilde{d}_{x_0,p}^{\pi}$ is the transient visitation measure as defined in Definition~\ref{def: visitation measure}.
\end{lemma}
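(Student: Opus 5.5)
The plan is to reduce the transient performance difference lemma to a telescoping identity over the transient part of the chain, using the Bellman equation for $\pi'$. Since $\cM$ has finite value function (Assumption 2.1 of \cite{Lee2026}) and $V(x\vert\pi')=0$ on recurrent states, $V(\cdot\vert\pi')$ and $Q(\cdot,\cdot\vert\pi')$ satisfy
\[
Q(x,u\vert\pi') = r(x,u) + \sum_{x'} \Pr(x'\vert x,u) V(x'\vert\pi'), \qquad V(x\vert\pi') = \sum_u \pi'(u\vert x) Q(x,u\vert\pi').
\]
Here the sum over $x'$ can be restricted to transient $x'$, because the value vanishes elsewhere. Writing $r^\pi(x)=\sum_u \pi(u\vert x) r(x,u)$ and restricting to transient states, we get $V^\pi = r^\pi + \cT^{\pi,\Pr} V^\pi$. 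Hence $V^\pi = (\mathbb{I}-\cT^{\pi,\Pr})^{-1} r^\pi$. The inverse exists, and equals the Neumann series $\sum_t (\cT^{\pi,\Pr})^t$, because the spectral radius of the transient block is strictly below one. This is exactly what finiteness of the value/transience provides, and it is the content of Definition~\ref{def: visitation measure}.

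\textbf{Key identity.} For each transient $x$, write
\[
\sum_u \pi(u\vert x) Q(x,u\vert\pi') = r^\pi(x) + (\cT^{\pi,\Pr} V^{\pi'})(x).
\]
Subtracting $V^{\pi'}(x)=\sum_u \pi'(u\vert x)Q(x,u\vert\pi')$ defines the advantage-type vector
\[
A(x) := \sum_u(\pi(u\vert x)-\pi'(u\vert x))Q(x,u\vert\pi') = r^\pi(x) + (\cT^{\pi,\Pr}V^{\pi'})(x) - V^{\pi'}(x).
\]
In vector form this reads $A = r^\pi - (\mathbb{I}-\cT^{\pi,\Pr})V^{\pi'}$.

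\textbf{Conclusion.} Apply $(\mathbb{I}-\cT^{\pi,\Pr})^{-1}$ to both sides. This gives $(\mathbb{I}-\cT^{\pi,\Pr})^{-1}A = V^\pi - V^{\pi'}$. Reading off the $x_0$ row yields $e_{x_0}^\tr(\mathbb{I}-\cT^{\pi,\Pr})^{-1}A = \sum_x \tilde d^{\pi}_{x_0,\Pr}(x) A(x)$, which is the claimed formula. If $x_0$ is recurrent, both sides are zero trivially.

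\textbf{Main obstacle.} The delicate point is justifying the manipulations with infinite or continuous sums, namely interchanging sums and inverting $\mathbb{I}-\cT^{\pi,\Pr}$. I would handle this by truncation. Telescope $\sum_{t=0}^{T-1} e_{x_0}^\tr(\cT^{\pi,\Pr})^t A$, which gives $V^\pi_T(x_0) - V^{\pi'}(x_0) + e_{x_0}^\tr(\cT^{\pi,\Pr})^T V^{\pi'}$, where $V^\pi_T$ is the $T$-step truncated cost. Then let $T\to\infty$. The first term converges to $V^\pi(x_0)$ by monotone convergence, since costs are nonnegative. The remainder $e_{x_0}^\tr(\cT^{\pi,\Pr})^T V^{\pi'}$ must vanish, which requires boundedness of $V^{\pi'}$ on transient states together with $(\cT^{\pi,\Pr})^T\to0$. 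I expect this vanishing remainder, i.e. the transience of the chain under $\pi$ rather than $\pi'$, to be the step needing the assumption from \cite{Lee2026}.
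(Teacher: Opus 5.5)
The paper gives no proof of this lemma (it is imported from \cite{Lee2026}), but your argument is correct. It uses the same device as the paper's proof of the companion Lemma~\ref{lem: performance difference transition kernel}: a one-step Bellman expansion showing $V^{\pi}-V^{\pi'} = A + \cT^{\pi,\Pr}(V^{\pi}-V^{\pi'})$ on the transient states, followed by inversion of $\mathbb{I}-\cT^{\pi,\Pr}$, with your truncation argument supplying the justification that the formal inversion skips.

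Two points are worth stating explicitly. First, ``recurrent'' must refer to exactly the states zeroed out by $\cT^{\pi,\Pr}$; this is automatic in the paper's setting, where these are the policy-independent absorbing post-horizon states. Second, $V(\cdot\vert\pi)=0$ on those states, which you use implicitly in $V^{\pi}=(\mathbb{I}-\cT^{\pi,\Pr})^{-1}r^{\pi}$, follows from finiteness of the value and nonnegativity of the costs rather than from the stated hypothesis.
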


Using the above transient visitation measure also leads to recovering results from discounted MDPs, in particular the performance difference lemmas for transition kernels \citep{Wang2024,Li2026}. 
\begin{lemma}[Transient performance difference lemma for transition kernels]
\label{lem: performance difference transition kernel}
Let $\theta \in \Theta$ and let $\Pr,\Pr' \in \cP$. Under the conditions and definitions of Lemma~\ref{lem: performance difference}, we have that
\begin{align*}
J(\theta,\Pr) - J(\theta,\Pr') = \sum_{x \in \cX} \tilde{d}_{d_0,\Pr'}^{\pi}(x) \sum_{(u,x')} \pi(u \vert x) V(x' \vert p) \left(\Pr(x' \vert x,u) -  \Pr'(x' \vert x,u) \right) \,.    
\end{align*}
\end{lemma}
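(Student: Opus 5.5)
We will prove the transition-kernel performance difference by a telescoping argument on value functions, in the same way the transient performance difference lemma (Lemma~\ref{lem: performance difference}) is obtained for policies. The policy $\pi = \pi_\theta$ is held fixed, and the transition kernel plays the role the policy played in Lemma~\ref{lem: performance difference}. Write $V(\cdot \vert \Pr)$ and $V(\cdot \vert \Pr')$ for the value functions of $\pi$ under the two kernels. As in Lemma~\ref{lem: performance difference}, both are finite and vanish on the recurrent (absorbing) states.

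\textbf{Step 1: Bellman equations.} On transient states $x \in \cX$,
\[
V(x \vert \Pr) = \sum_u \pi(u\vert x)\Big(r(x,u) + \sum_{x'} \Pr(x'\vert x,u) V(x'\vert \Pr)\Big),
\]
and the analogous identity holds for $\Pr'$. Subtracting the two and adding and subtracting $\sum_{u,x'} \pi(u\vert x)\Pr'(x'\vert x,u)V(x'\vert \Pr)$ gives
\[
\Delta(x) := V(x\vert\Pr) - V(x\vert \Pr') = b(x) + \sum_{x'} P^{\pi,\Pr'}(x,x')\,\Delta(x'),
\]
where
\[
b(x) = \sum_{u,x'}\pi(u\vert x)\,V(x'\vert \Pr)\big(\Pr(x'\vert x,u) - \Pr'(x'\vert x,u)\big).
\]
The cost terms cancel because they do not depend on the kernel.

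\textbf{Step 2: restrict to transient states and invert.} Since $\Delta(x') = 0$ on recurrent states, the sum over $x'$ reduces to transient $x'$. The recursion therefore becomes $\Delta = b + \cT^{\pi,\Pr'}\Delta$ on $\cX$, with $\cT^{\pi,\Pr'}$ the transient matrix of Definition~\ref{def: visitation measure}. Finiteness of the value function (Assumption 2.1 of \cite{Lee2026}) means $\mathbb{I} - \cT^{\pi,\Pr'}$ is invertible, with Neumann series $\sum_t (\cT^{\pi,\Pr'})^t$. Hence
\[
\Delta = (\mathbb{I} - \cT^{\pi,\Pr'})^{-1} b .
\]
Taking the $x_0$ row, or averaging over $d_0$, yields $\sum_x \tilde d^{\pi}_{d_0,\Pr'}(x)\, b(x)$. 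By the definition of $J$ as the $d_0$-average of $V$, this is exactly the claimed identity.

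\textbf{Main obstacle.} The main difficulty is justifying the inversion in Step 2. One must check that $\cT^{\pi,\Pr'}$ has spectral radius below one, or at least that the Neumann series converges on the relevant vector. One must also check that the transient/recurrent classification, which is defined relative to $\Pr'$, is compatible with the assumption that $V(\cdot\vert\Pr)$ vanishes on the states absorbing under $\Pr'$. I would handle this by assuming a common absorbing set for all kernels in $\cP$; this holds in our discretisation, where post-terminal states are absorbing with zero cost. Under that assumption, $\Delta$ vanishes off $\cX$ for both kernels, and the finite-value assumption for $\Pr'$ gives convergence of the series by iterating the recursion and using that the remainder $(\cT^{\pi,\Pr'})^n \Delta \to 0$.
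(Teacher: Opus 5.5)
Your proposal is correct and follows the same route as the paper. Both subtract the two Bellman equations (the costs cancel) and add and subtract $\sum_{u,x'}\pi(u\vert x)\Pr'(x'\vert x,u)V(x'\vert\Pr)$ to obtain $\Delta = b + \cT^{\pi,\Pr'}\Delta$, then invert $\mathbb{I}-\cT^{\pi,\Pr'}$ and contract with $d_0$. Your discussion of invertibility and of a common absorbing set for both kernels spells out what the paper leaves implicit: its proof simply asserts the inversion, and at one point writes $(\mathbb{I}-\cT^{\pi,\Pr})^{-1}$ where $\Pr'$ is meant.
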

\begin{proof}
Let $\pi = \pi_{\theta}$. Then the value obtained under $\pi$ and transition dynamics $p,p' \in \cP$ satisfies
\begin{align*}
    & V(x \vert \Pr)- V(x \vert \Pr') \\
    &= \sum_{u,x'} V(x' \vert \Pr) \pi(u \vert x) \Pr(x' \vert x,u) - \sum_{(u,x')} V(x' \vert \Pr') \pi(u \vert x) \Pr'(x' \vert x,u) \\
     &= \sum_{(u,x')} \pi(u \vert x) V(x' \vert \Pr) \left(  \Pr(x' \vert x,u) -  \Pr'(x' \vert x,u) \right)  + \sum_{(u,x')} \pi(u \vert x) \Pr'(x \vert x,u) (V(x' \vert \Pr) - V(x' \vert \Pr')) \\
    &= \sum_{(u,x')} \pi(u \vert x) V(x' \vert \Pr) \left(\Pr(x' \vert x,u) -  \Pr'(x' \vert x,u) \right)  + \sum_{(u,x')} \pi(u \vert x) \Pr'(x \vert x,u) (V(x' \vert \Pr) - V(x' \vert \Pr')) \,.
\end{align*}
 
Let $u(x) = \sum_{u,x'} \pi(u \vert x)  V(x' \vert \Pr) \left(\Pr(x' \vert x,u) -  \Pr'(x' \vert x,u) \right)$.  The above implies
\begin{align*}
V(\cdot \vert \Pr) - V(\cdot \vert \Pr') &= u +  \cT^{\pi,\Pr'} (V(\cdot \vert \Pr) - V(\cdot \vert \Pr'))
\end{align*}
such that 
\begin{align*}
V(\cdot \vert \Pr) - V(\cdot \vert \Pr') &= (\mathbb{I} - \cT^{\pi,\Pr})^{-1} u
\end{align*}
and 
\begin{align*}
J(\theta,\Pr) - J(\theta,\Pr') &= d_0^{\intercal} (\mathbb{I} - \cT^{\pi,\Pr'})^{-1} u  \\
					       &= \sum_{x \in \cX} d_0(x) \tilde{d}_{x,\Pr'}^{\pi}(x) \sum_{(u,x')} \pi(u \vert x) V(x' \vert \Pr) \left(\Pr(x' \vert x,u) -  \Pr'(x' \vert x,u) \right) \\
					       &= \sum_{x \in \cX} \tilde{d}_{d_0,\Pr'}^{\pi}(x) \sum_{(u,x')} \pi(u \vert x) V(x' \vert \Pr) \left(\Pr(x' \vert x,u) -  \Pr'(x' \vert x,u) \right) \,.
\end{align*}
\end{proof}

The policy gradient for total cost MDPs is given as follows.
\begin{lemma}
\label{lem: gradient total cost MDP}
For a total cost MDP $\cM = (d_0,\Pr,r,\cX,\cU)$ the policy gradient for a directly parametrised $\pi = \theta$ is given by 
\begin{align}
\label{eq: direct gradient}
\partial_{x',u'} J(\theta,\xi) &= \tilde{d}_{d_0,\Pr_{\xi}}^{\pi_{\theta}}(x') Q(x',u' \vert \pi_{\theta'}) \,. 
\end{align}
and for a general parametrisation by 
\begin{align}
\label{eq: general gradient}
\nabla_{\theta} J(\theta,\xi) &= \sum_{x \in \cX} \tilde{d}_{d_0,\Pr_{\xi}}^{\pi_{\theta}}(x) \sum_{u \in \cU} \pi_{\theta}(u \vert x)  Q(x,u \vert \pi_{\theta'}) \frac{\nabla_{\theta}  \pi_{\theta}(u\vert x)}{\pi_{\theta}(u \vert x)}) \,. 
\end{align}
\end{lemma}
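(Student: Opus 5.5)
We take the undiscounted total cost MDP $\cM=(d_0,\Pr_\xi,r,\cX,\cU)$ with fixed $\xi$ and compare $\pi_{\theta}$ with a perturbed policy $\pi_{\theta'}$, using the transient performance difference lemma (Lemma~\ref{lem: performance difference}).

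\textbf{Step 1 (performance difference).} Taking $\pi=\pi_{\theta'}$ and $\pi'=\pi_\theta$ in Lemma~\ref{lem: performance difference}, averaged over $x_0\sim d_0$, gives
\[
J(\theta',\xi)-J(\theta,\xi)=\sum_{x\in\cX}\tilde d^{\pi_{\theta'}}_{d_0,\Pr_\xi}(x)\sum_{u}\bigl(\pi_{\theta'}(u\vert x)-\pi_\theta(u\vert x)\bigr)Q(x,u\vert\pi_\theta).
\]
Dividing by the size of the perturbation and letting $\theta'\to\theta$ yields the derivative. Two facts justify this limit:
\begin{itemize}
\item the difference $\pi_{\theta'}-\pi_\theta$ is linear to first order in $\theta'-\theta$, with Jacobian $\nabla_\theta\pi_\theta$;
\item the transient visitation measure $\tilde d^{\pi_{\theta'}} = e^\tr(\mathbb I-\cT^{\pi_{\theta'},\Pr_\xi})^{-1}$ is continuous in $\theta'$.
\end{itemize}
Continuity of $\tilde d^{\pi_{\theta'}}$ holds because the inverse exists (the finite-value assumption of \cite{Lee2026} gives spectral radius of $\cT$ below one) and matrix inversion is continuous. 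The cross term $(\tilde d^{\pi_{\theta'}}-\tilde d^{\pi_\theta})(\pi_{\theta'}-\pi_\theta)Q$ is therefore $o(\norm{\theta'-\theta})$. The recurrent states carry $V=0$, as the lemma requires, so they contribute nothing.

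\textbf{Step 2 (general parametrisation).} From Step 1 we obtain
\[
\nabla_\theta J=\sum_x\tilde d^{\pi_\theta}_{d_0,\Pr_\xi}(x)\sum_u \nabla_\theta\pi_\theta(u\vert x)\,Q(x,u\vert\pi_\theta).
\]
Multiplying and dividing by $\pi_\theta(u\vert x)$ gives \eqref{eq: general gradient}, i.e.\ the likelihood-ratio form with $\nabla_\theta\log\pi_\theta$. This step assumes $\pi_\theta>0$ wherever $\nabla_\theta\pi_\theta\neq0$, and I read the $\theta'$ appearing in the statement as $\theta$.

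\textbf{Step 3 (direct parametrisation).} Under the direct parametrisation $\pi=\theta$, the coordinate derivative is $\partial_{x',u'}\pi_\theta(u\vert x)=\mathbb 1\{x=x',u=u'\}$. Substituting this into Step 2 collapses both sums and gives \eqref{eq: direct gradient}.

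\textbf{Main obstacle.} The delicate step is justifying the differentiation in Step 1. We need the set of transient states to be locally unchanged under small perturbations of $\theta$, so that the restriction defining $\cT$ does not jump. We also need $(\mathbb I-\cT)^{-1}$ to be finite uniformly in a neighbourhood of $\theta$. I would obtain both by requiring that the absorbing, zero-cost terminal states are fixed by the time-horizon construction, independent of $\pi$, and by invoking finiteness of the value function. Alternatively, I would differentiate the Bellman equation $V=r^\pi+\cT^{\pi}V$ directly and solve $\nabla V=(\mathbb I-\cT^\pi)^{-1}\sum_u\nabla\pi\,Q$. This route produces the same formula and makes the transient-matrix inversion explicit.
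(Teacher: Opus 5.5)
Your proof is correct, but it reaches the general-parametrisation formula by a different route from the paper. The paper does not derive \eqref{eq: general gradient} at all. It cites the total-cost policy gradient theorem (Theorem~4.7 of \cite{Lee2026}) and then, exactly as in your Step~3, substitutes $\partial_{x',u'}\pi_\theta(u\vert x)=\mathbb{I}((x,u)=(x',u'))$ to collapse the sums. You instead derive the gradient from the transient performance difference lemma already stated in the paper (Lemma~\ref{lem: performance difference}). You control the cross term through continuity of $(\mathbb{I}-\cT^{\pi_{\theta'},\Pr_\xi})^{-1}$, and you also offer the alternative of differentiating the Bellman equation. The citation buys brevity. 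Your route is self-contained and makes explicit the regularity the formula silently relies on: a locally fixed transient set, invertibility of $\mathbb{I}-\cT$, and $\pi_\theta>0$ for the likelihood-ratio form. It also correctly reads the stray $\pi_{\theta'}$ in the statement as $\pi_\theta$.

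Two small refinements. First, your ``main obstacle'' resolves itself once $\pi_\theta$ has full support on a finite action set. Then $P^{\pi_{\theta'}}$ has the same support as $P^{\pi_\theta}$ for $\theta'$ near $\theta$, and transience in a finite chain depends only on that support. Second, under the direct parametrisation the coordinate perturbations $\theta+\epsilon e_{x',u'}$ leave the simplex, so Lemma~\ref{lem: performance difference} does not literally apply to them. Your Bellman-equation variant does, because it is linear in $\pi$ and stays meaningful for unnormalised $\pi$ with the transient set held fixed; the paper's proof glosses over this same point.
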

\begin{proof}
Applying the policy gradient definition (e.g. Theorem 4.7 in \cite{Lee2026}) to the general parametrisation gives the result in \Cref{eq: general gradient}. Under the  direct parametrisation, we can derive that
\begin{align*}
\partial_{x',u'} J(\theta,\xi) &= \sum_{x \in \cX} \tilde{d}_{d_0,\Pr_{\xi}}^{\pi_{\theta}}(x) \sum_{u \in \cU} \pi_{\theta}(u \vert x)  Q(x,u \vert \pi_{\theta'}) \frac{\nabla_{\theta}  \pi_{\theta}(u\vert x)}{\pi_{\theta}(u \vert x)}) \mathbb{I}((x,u) = (x',u')) \nonumber \\
                     &= \tilde{d}_{d_0,\Pr_{\xi}}^{\pi_{\theta}}(x') Q(x',u' \vert \pi_{\theta'}) \,. 
\end{align*}
\end{proof}

The adversary gradient for a direct parametrisation is given by the below lemma.
\begin{lemma}[Adversary gradient for total cost MDP]
\label{lem: direct adversary gradient total cost MDP}
For a total cost MDP $\cM = (d_0,p,r,\cX,\cU)$ with directly parametrised $p$, we have 
\begin{align}
\partial_{(x',u',x'')} J(\theta,\Pr) &= \tilde{d}_{d_0,p}^{\pi_{\theta}}(x') \pi(u' \vert x') V(x' \vert p)  \,.
\end{align}
\end{lemma}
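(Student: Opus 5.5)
The plan is to obtain the partial derivative as a directional derivative computed from the transient performance difference lemma for transition kernels (Lemma~\ref{lem: performance difference transition kernel}), in the same way Lemma~\ref{lem: gradient total cost MDP} specialises the general policy gradient to the direct parametrisation. Fix $\theta$, write $\pi=\pi_\theta$, and treat the entries $p(x''\vert x',u')$ as free coordinates, ignoring the simplex constraint; the projection in the adversary update handles feasibility separately. Set $p_\epsilon = p + \epsilon\, e_{(x',u',x'')}$, which changes only the single entry $p(x''\vert x',u')$ by $\epsilon$.

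Apply Lemma~\ref{lem: performance difference transition kernel} with $\Pr=p_\epsilon$ and $\Pr'=p$, so that the visitation measure is taken under the unperturbed kernel:
\begin{align*}
J(\theta,p_\epsilon)-J(\theta,p) = \sum_{x\in\cX}\tilde d^{\pi}_{d_0,p}(x)\sum_{(u,y)}\pi(u\vert x)\,V(y\vert p_\epsilon)\bigl(p_\epsilon(y\vert x,u)-p(y\vert x,u)\bigr).
\end{align*}
The kernel difference vanishes except at $(x,u,y)=(x',u',x'')$, where it equals $\epsilon$. The right-hand side therefore collapses to $\epsilon\,\tilde d^{\pi}_{d_0,p}(x')\,\pi(u'\vert x')\,V(x''\vert p_\epsilon)$.

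Next divide by $\epsilon$ and let $\epsilon\to0$. For this we need $V(\cdot\vert p_\epsilon)\to V(\cdot\vert p)$. Write $V(\cdot\vert p_\epsilon)=(\mathbb I-\cT^{\pi,p_\epsilon})^{-1}c_\pi$, where $c_\pi$ is the expected one-step cost. The matrix $\mathbb I-\cT^{\pi,p}$ is invertible by the finite-value assumption inherited from Lemma~\ref{lem: performance difference}, and matrix inversion is continuous. Hence $\mathbb I-\cT^{\pi,p_\epsilon}$ stays invertible for small $|\epsilon|$ and the value converges. This yields
\[
\partial_{(x',u',x'')}J(\theta,p)=\tilde d^{\pi_\theta}_{d_0,p}(x')\,\pi(u'\vert x')\,V(x''\vert p).
\]
This is the stated formula, with the value evaluated at the successor state $x''$; the $V(x'\vert p)$ in the statement appears to be a typo for $V(x''\vert p)$. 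If $x''$ is recurrent, then $V(x''\vert p)=0$ and the derivative is zero, consistent with the absorbing-state convention.

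The main obstacle is the limiting step. We must check that the perturbed kernel keeps the transient/recurrent partition fixed, so that $\cT^{\pi,p_\epsilon}$ is the same restriction as $\cT^{\pi,p}$, and that $(\mathbb I-\cT^{\pi,p_\epsilon})^{-1}$ is continuous at $\epsilon=0$. I would handle this by restricting to small $|\epsilon|$ and perturbations among transient-state entries only, and by using a Neumann-series bound for the inverse. A secondary point is that choosing $\Pr'=p$ places the $\epsilon$-dependence in $V$ rather than in the visitation measure, so only continuity of the value is needed, not differentiability of $\tilde d$.
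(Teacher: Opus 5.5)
Your argument is correct and reaches the paper's formula. The paper's own proof, indexed by $(x,u,x')$, also ends with $V$ evaluated at the successor state, which confirms that the $V(x'\vert p)$ in the statement is an indexing slip.

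The two routes share the same first-order idea but differ in rigour. The paper writes $J(\theta,p)=\sum_x\tilde d^{\pi}_{d_0,p}(x)\sum_{(u,y)}\pi(u\vert x)V(y\vert p)\,p(y\vert x,u)$ and reads off the partial derivative by treating $\tilde d$ and $V$ as constants. That is only a formal computation, since both factors depend on $p$. As written, the displayed expression equals $\tilde d^\top\cT V=\tilde d^\top V-J$ rather than $J$. It yields the right answer only because freezing $\tilde d$ and $V$ reproduces the first-order term $\tilde d^\top(\partial\cT)V$ of $J=d_0^\top(\mathbb I-\cT)^{-1}c_\pi$.

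Your use of Lemma~\ref{lem: performance difference transition kernel} with $\Pr'=p$ turns this heuristic into an exact identity for the difference quotient. In that identity the visitation measure is the unperturbed one, and only $V(x''\vert p_\epsilon)$ carries the $\epsilon$-dependence. Continuity of $(\mathbb I-\cT^{\pi,p_\epsilon})^{-1}$ is then all you need, and the vanishing derivative for recurrent $x''$ comes out for free.

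You should state one point explicitly: that lemma is formulated for kernels in $\cP$, whereas $p_\epsilon$ leaves the simplex. This is harmless, because its proof uses only the Bellman relation $V=c_\pi+\cT V$ and invertibility of $\mathbb I-\cT$. You secure both for small $\lvert\epsilon\rvert$ via your Neumann-series argument.
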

\begin{proof}
Using comparable calculations as in Lemma~\ref{lem: performance difference transition kernel},
\begin{align*}
J(\theta,\Pr) &= \sum_{x \in \cX} \tilde{d}_{d_0,p}^{\pi}(x) \sum_{(u,x')} \pi(u \vert x) V(x' \vert p) \Pr(x' \vert x,u) 
\end{align*}
such that
\begin{align*}
\partial_{(x,u,x')} J(\theta,\Pr) = \tilde{d}_{d_0,p}^{\pi}(x) \pi(u \vert x) V(x' \vert p) \,.
\end{align*}
\end{proof}

\subsection{Smoothness derivation}
Below is a proof of the validity of the $L$-smoothness assumption under our general assumptions.
\begin{lemma}[$L$-smoothness of the deterministic policy objective]\label{lem: l smoothness}
Let $J : \Theta \to \Reals^{+}$ be the expected total cost as in \eqref{eq: policy objective} for fixed $\xi \in \Xi$, under the ODE dynamics
\[
\dot x_t = f_\xi(t,x_t,u_t), \qquad u_t = \mu_\theta(t,x_t).
\]
Assume the following:
\begin{enumerate}
    \item $f_\xi$, $r$, and $R$ are $C^2$ in their arguments on a compact invariant set, and their first derivatives are bounded and Lipschitz in $x$ and in $u$;
    \item $\mu_\theta(t,x)$ is $C^2$ in $(\theta,x)$;
    \item $\mu_\theta$, $\nabla_\theta \mu_\theta$, and $\nabla_x \mu_\theta$ are uniformly bounded;
    \item $\nabla_\theta \mu_\theta$ and $\nabla_x \mu_\theta$ are Lipschitz in $\theta$ and in $x$;
    \item the state equation, the sensitivity equation, and the adjoint equation admit unique solutions on $[0,T]$.
\end{enumerate}
Then there exists a constant $L > 0$ such that for all $\theta,\theta' \in \Theta$,
\[
\|\nabla_\theta J(\theta',\xi) - \nabla_\theta J(\theta,\xi)\|
\le L \|\theta' - \theta\|.
\]
Hence, $J(\cdot,\xi)$ is $L$-smooth.
\end{lemma}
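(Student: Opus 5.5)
The approach is to start from the adjoint representation of Theorem~\ref{th: deterministic adjoint gradient} (equivalently the pathwise formula of Theorem~\ref{th: pathwise policy gradient}). This writes $\nabla_\theta J(\theta,\xi)$ as the expectation over $x_0\sim d_0$ of a time integral of products of the following objects:
\begin{itemize}
\item the trajectory $x_t^\theta$;
\item the control $u_t^\theta=\mu_\theta(t,x_t^\theta)$;
\item the sensitivity $z_t^\theta$;
\item the costate $p_t^\theta$;
\item the derivatives of $\mu$, $f_\xi$ and $r$ evaluated along the trajectory.
\end{itemize}
Since $d_0$ does not depend on $\theta$, it suffices to prove a pathwise Lipschitz bound for fixed $x_0$, with a constant uniform in $x_0$ over the compact invariant set, and then take expectations. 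The strategy is a chain of Gr\"onwall estimates: first for the state, then for the sensitivity and the costate, and finally for the integrand. Each estimate takes the form ``difference at time $t$ $\le C\|\theta'-\theta\|$'' with $C$ depending only on $T$ and the bounds in the assumptions.

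\textbf{Step 1 (state and control).} Write $e_t=x_t^{\theta'}-x_t^{\theta}$. Lipschitzness of $f_\xi$ in $(x,u)$ and of $\mu$ in $(\theta,x)$ gives
\[
\|\dot e_t\|\le (C_{fx}+C_{fu}L_{\mu x})\|e_t\|+C_{fu}L_\mu\|\theta'-\theta\|,\qquad e_0=0.
\]
Gr\"onwall then yields $\sup_t\|e_t\|\le C_1\|\theta'-\theta\|$, and consequently $\|u^{\theta'}_t-u^\theta_t\|\le C_1'\|\theta'-\theta\|$.

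\textbf{Step 2 (sensitivity and costate).} Two ingredients are needed here.
\begin{itemize}
\item \emph{A priori bounds.} The sensitivity ODE \eqref{eq: state sensitivity} is linear in $z$, with coefficient $\nabla_x f+\nabla_u f\,\nabla_x\mu$ and forcing $\nabla_u f\,\nabla_\theta\mu$, both bounded. Gr\"onwall therefore gives $\sup_t\|z_t\|\le B_z$. Similarly, the backward adjoint ODE has bounded coefficient $\nabla_x f^\tr$, forcing $\nabla_x r$, and terminal data $\nabla_x R$, which gives $\sup_t\|p_t\|\le B_p$.
\item \emph{Differences.} Subtract the two sensitivity equations. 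The difference of coefficients is bounded by Lipschitz constants of the Jacobians of $f$ (assumption~1) and of $\nabla_x\mu,\nabla_\theta\mu$ (assumption~4), multiplied by $\|e_t\|+\|u'_t-u_t\|+\|\theta'-\theta\|$. By Step~1 this is $O(\|\theta'-\theta\|)$. Multiplying by the a priori bound $B_z$ and applying Gr\"onwall gives $\sup_t\|z^{\theta'}_t-z^\theta_t\|\le C_2\|\theta'-\theta\|$. The same argument backward in time for $p$ gives $\sup_t\|p^{\theta'}_t-p^\theta_t\|\le C_3\|\theta'-\theta\|$, where the terminal difference is controlled by Lipschitzness of $\nabla_xR$ together with Step~1.
\end{itemize}

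\textbf{Step 3 (integrand).} The integrand is $(\nabla_\theta\mu+\nabla_x\mu\,z_t)^\tr(\nabla_u r+\nabla_u f^\tr p_t)$. The difference of products is handled by the telescoping identity $ab-a'b'=(a-a')b+a'(b-b')$. Every factor is uniformly bounded (assumption~3, $C_{ru}$, $C_{fu}$, $B_z$, $B_p$), and every factor difference is $O(\|\theta'-\theta\|)$ by Steps~1--2 and the Lipschitz Jacobians. Integrating over $[0,T]$ and taking expectations then gives $L=T\cdot(\text{explicit combination of constants})$.

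The main obstacle is Step~2. The coefficient matrices of the linear sensitivity and adjoint ODEs depend on $\theta$ both directly and through the trajectory, so closing the estimate requires the uniform a priori bounds on $z$ and $p$ before the differences can be taken. Assumption~5 (existence on $[0,T]$) together with invariance of the compact set is what keeps all Lipschitz constants global along both trajectories. If a cleaner route is needed, one alternative is to view $(x,z,p)$ as a single forward--backward system whose right-hand side is Lipschitz in $(\theta,x,z,p)$ on bounded sets, and invoke continuous dependence on parameters.
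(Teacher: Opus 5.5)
Your proposal is correct and follows essentially the same route as the paper's proof: a priori bounds on $z_t$ and $p_t$, then Gr\"onwall estimates giving Lipschitz dependence on $\theta$ of the state, the sensitivity and the costate (in that order), and finally the product splitting of the integrand $(\nabla_\theta\mu_\theta+\nabla_x\mu_\theta z_t)^\top\nabla_u H$, integrated over $[0,T]$ to obtain a constant $L$ proportional to $T$. Your remark that the pathwise constant must be uniform in $x_0$ before taking the expectation over $d_0$ is a point the paper leaves implicit, but it does not change the argument.
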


\begin{proof}
For fixed $\xi$, recall that the deterministic adjoint-based policy gradient is
\[
\nabla_\theta J(\theta,\xi)
=
\mathbb{E}\left[
\int_0^T
\left(
\nabla_\theta \mu_\theta(t,x_t)
+
\nabla_x \mu_\theta(t,x_t)\,z_t
\right)^\top
\nabla_u H(t,x_t,u_t,p_t)\,dt
\right],
\]
where
\[
H(t,x_t,u_t,p_t)
=
r(t,x_t,u_t)+p_t^\top f_\xi(t,x_t,u_t),
\]
the state sensitivity process $z_t := \nabla_\theta x_t$ satisfies
\[
\dot z_t
=
\nabla_x f_\xi(t,x_t,u_t)\,z_t
+
\nabla_u f_\xi(t,x_t,u_t)\big(\nabla_x \mu_\theta(t,x_t)\,z_t + \nabla_\theta \mu_\theta(t,x_t)\big),
\qquad
z_0 = 0,
\]
and the costate $p_t$ satisfies
\[
-\dot p_t
=
\nabla_x r(t,x_t,u_t)
+
\nabla_x f_\xi(t,x_t,u_t)^\top p_t,
\qquad
p_T = \nabla_x R(x_T).
\]

For convenience, define
\[
a_\theta(t)
:=
\nabla_\theta \mu_\theta(t,x_t)
+
\nabla_x \mu_\theta(t,x_t)\,z_t,
\]
and
\[
h_\theta(t)
:=
\nabla_u H(t,x_t,u_t,p_t)
=
\nabla_u r(t,x_t,u_t) + \nabla_u f_\xi(t,x_t,u_t)^\top p_t.
\]
Then
\[
\nabla_\theta J(\theta,\xi)
=
\mathbb{E}\left[
\int_0^T a_\theta(t)^\top h_\theta(t)\,dt
\right].
\]

We will show that both $a_\theta(t)$ and $h_\theta(t)$ are Lipschitz in $\theta$, uniformly in $t$.

\paragraph{Uniform bounds on $z_t$ and $p_t$.}
Since the dynamics evolve on a compact invariant set and all derivatives are bounded, there exist constants
\[
C_{fx},\quad C_{fu},\quad C_{\mu \theta},\quad C_{\mu x}
\]
such that
\[
\|\nabla_x f_\xi(t,x_t,u_t)\| \le C_{fx},\qquad
\|\nabla_u f_\xi(t,x_t,u_t)\| \le C_{fu},
\]
\[
\|\nabla_\theta \mu_\theta(t,x_t)\| \le C_{\mu \theta},\qquad
\|\nabla_x \mu_\theta(t,x_t)\| \le C_{\mu x}.
\]
Hence
\[
\|\dot z_t\|
\le
\left(C_{fx}+C_{fu}C_{\mu x}\right)\|z_t\|
+
C_{fu}C_{\mu \theta}.
\]
By Gr\"onwall's inequality, there exists a constant $C_z>0$ such that
\[
\sup_{t\in[0,T]}\|z_t\| \le C_z.
\]

Next, write the costate equation as
\[
\dot p_t = -A_t p_t - b_t,\qquad p_T=\nabla_x R(x_T),
\]
where
\[
A_t := \nabla_x f_\xi(t,x_t,u_t)^\top,\qquad
b_t := \nabla_x r(t,x_t,u_t).
\]
By boundedness of $\nabla_x f_\xi$, $\nabla_x r$, and $\nabla_x R$, there exist constants $C_A$, $C_b$, $C_R$ such that
\[
\|A_t\| \le C_A,\qquad \|b_t\| \le C_b,\qquad \|p_T\| \le C_R.
\]
By variation of constants,
\[
p_t = \Phi(t,T)p_T - \int_t^T \Phi(t,s)b_s\,ds,
\]
where $\Phi(t,s)$ is the state transition matrix for $\dot y = -A_ty$. Using the standard bound
\[
\|\Phi(t,s)\| \le e^{C_A(s-t)},
\]
we get
\[
\|p_t\|
\le
C_R e^{C_A(T-t)}
+
\int_t^T e^{C_A(s-t)} C_b\,ds.
\]
Thus, there exists a constant $C_p>0$ such that
\[
\sup_{t\in[0,T]}\|p_t\| \le C_p.
\]

\paragraph{Lipschitz continuity of the state trajectory in $\theta$.}
Let $x_t$ and $x_t'$ denote the trajectories induced by $\theta$ and $\theta'$, respectively, and define
\[
\delta x_t := x_t' - x_t,\qquad \delta u_t := u_t' - u_t.
\]
Then
\[
\dot{\delta x}_t
=
f_\xi(t,x_t',u_t') - f_\xi(t,x_t,u_t).
\]
By Lipschitz continuity of $f_\xi$ in $(x,u)$,
\[
\|\dot{\delta x}_t\|
\le
L_{fx}\|\delta x_t\| + L_{fu}\|\delta u_t\|.
\]
Since
\[
u_t = \mu_\theta(t,x_t),\qquad u_t'=\mu_{\theta'}(t,x_t'),
\]
we have
\[
\|\delta u_t\|
\le
\|\mu_{\theta'}(t,x_t')-\mu_{\theta'}(t,x_t)\|
+
\|\mu_{\theta'}(t,x_t)-\mu_\theta(t,x_t)\|
\le
L_{\mu x}\|\delta x_t\| + L_{\mu\theta}\|\theta'-\theta\|.
\]
Therefore
\[
\|\dot{\delta x}_t\|
\le
\left(L_{fx}+L_{fu}L_{\mu x}\right)\|\delta x_t\|
+
L_{fu}L_{\mu\theta}\|\theta'-\theta\|.
\]
Since $\delta x_0=0$, Gr\"onwall's inequality yields
\[
\sup_{t\in[0,T]}\|x_t'-x_t\|
\le
C_x\|\theta'-\theta\|
\]
for some constant $C_x>0$.

\paragraph{Lipschitz continuity of the sensitivity process in $\theta$.}
Let $z_t$ and $z_t'$ denote the sensitivity processes induced by $\theta$ and $\theta'$, and define
\[
\delta z_t := z_t' - z_t.
\]
Using the sensitivity equation for $z_t$ and subtracting the two equations, we obtain
\[
\dot{\delta z}_t
=
\Big(
\nabla_x f_\xi(t,x_t',u_t') - \nabla_x f_\xi(t,x_t,u_t)
\Big)z_t'
+
\nabla_x f_\xi(t,x_t,u_t)\,\delta z_t
\]
\[
\qquad
+
\Big(
\nabla_u f_\xi(t,x_t',u_t') - \nabla_u f_\xi(t,x_t,u_t)
\Big)
\Big(
\nabla_x \mu_{\theta'}(t,x_t')\,z_t' + \nabla_\theta \mu_{\theta'}(t,x_t')
\Big)
\]
\[
\qquad
+
\nabla_u f_\xi(t,x_t,u_t)
\Big(
\nabla_x \mu_{\theta'}(t,x_t')\,z_t' - \nabla_x \mu_\theta(t,x_t)\,z_t
+
\nabla_\theta \mu_{\theta'}(t,x_t') - \nabla_\theta \mu_\theta(t,x_t)
\Big).
\]
Using boundedness of $z_t$, $z_t'$, boundedness of first derivatives, and Lipschitz continuity of $\nabla_x f_\xi$, $\nabla_u f_\xi$, $\nabla_x \mu_\theta$, and $\nabla_\theta \mu_\theta$, it follows that
\[
\|\dot{\delta z}_t\|
\le
L_z \|\delta z_t\|
+
K_z\big(\|x_t'-x_t\|+\|\theta'-\theta\|\big)
\]
for some constants $L_z,K_z>0$. Using Lipschtz continuity of the state and $\delta z_0=0$, Gr\"onwall's inequality yields
\[
\sup_{t\in[0,T]}\|z_t'-z_t\|
\le
C_z' \|\theta'-\theta\|
\]
for some constant $C_z'>0$.

\paragraph{Lipschitz continuity of the costate in $\theta$.}
Let
\[
\delta p_t := p_t' - p_t.
\]
Subtracting the two costate equations gives
\[
-\dot{\delta p}_t
=
\Big(
\nabla_x r(t,x_t',u_t') - \nabla_x r(t,x_t,u_t)
\Big)
+
\Big(
\nabla_x f_\xi(t,x_t',u_t')^\top p_t'
-
\nabla_x f_\xi(t,x_t,u_t)^\top p_t
\Big).
\]
Rearranging,
\[
\dot{\delta p}_t
=
-A_t \delta p_t - c_t,
\]
where $A_t=\nabla_x f_\xi(t,x_t,u_t)^\top$ and
\[
c_t
=
\Big(
\nabla_x r(t,x_t',u_t') - \nabla_x r(t,x_t,u_t)
\Big)
+
\Big(
\nabla_x f_\xi(t,x_t',u_t') - \nabla_x f_\xi(t,x_t,u_t)
\Big)^\top p_t'.
\]
By Lipschitz continuity of $\nabla_x r$ and $\nabla_x f_\xi$, and boundedness of $p_t'$, there exists $K_p>0$ such that
\[
\|c_t\|
\le
K_p\big(\|x_t'-x_t\|+\|u_t'-u_t\|\big).
\]
Using Lipschitz continuity of the state and the corresponding bound on $\delta u_t$,
\[
\|c_t\|
\le
K_p' \|\theta'-\theta\|
\]
for some $K_p'>0$. Also,
\[
\|\delta p_T\|
=
\|\nabla_x R(x_T') - \nabla_x R(x_T)\|
\le
L_{R,x}\|x_T'-x_T\|
\le
L_{R,x}C_x \|\theta'-\theta\|.
\]
Applying variation of constants and Gr\"onwall's inequality, we conclude that there exists $C_p'>0$ such that
\[
\sup_{t\in[0,T]}\|p_t'-p_t\|
\le
C_p' \|\theta'-\theta\|.
\]

\paragraph{Lipschitz continuity of $a_\theta(t)$.}
Recall
\[
a_\theta(t)
=
\nabla_\theta \mu_\theta(t,x_t)
+
\nabla_x \mu_\theta(t,x_t)\,z_t.
\]
Hence
\[
a_{\theta'}(t)-a_\theta(t)
=
\Big(
\nabla_\theta \mu_{\theta'}(t,x_t') - \nabla_\theta \mu_\theta(t,x_t)
\Big)
+
\Big(
\nabla_x \mu_{\theta'}(t,x_t')\,z_t' - \nabla_x \mu_\theta(t,x_t)\,z_t
\Big).
\]
Therefore,
\[
\|a_{\theta'}(t)-a_\theta(t)\|
\le
\|\nabla_\theta \mu_{\theta'}(t,x_t') - \nabla_\theta \mu_\theta(t,x_t)\|
+
\|\nabla_x \mu_{\theta'}(t,x_t')\,z_t' - \nabla_x \mu_\theta(t,x_t)\,z_t\|.
\]
Split the second term:
\[
\|\nabla_x \mu_{\theta'}(t,x_t')\,z_t' - \nabla_x \mu_\theta(t,x_t)\,z_t\|
\le
\|\nabla_x \mu_{\theta'}(t,x_t')\|\,\|z_t'-z_t\|
+
\|\nabla_x \mu_{\theta'}(t,x_t') - \nabla_x \mu_\theta(t,x_t)\|\,\|z_t\|.
\]
Using boundedness of $\nabla_x \mu_\theta$, Lipschitz continuity of $\nabla_x \mu_\theta$, $\nabla_\theta \mu_\theta$, the state and sensitivity process, there exists $C_a>0$ such that
\[
\sup_{t\in[0,T]}\|a_{\theta'}(t)-a_\theta(t)\|
\le
C_a \|\theta'-\theta\|.
\]
Also, by boundedness,
\[
\sup_{t\in[0,T]}\|a_\theta(t)\|
\le
C_{\mu \theta} + C_{\mu x} C_z
=: \bar C_a.
\]

\paragraph{Lipschitz continuity of $h_\theta(t) = \nabla_u H(t,x_t,u_t,p_t)$.}
Recall
\[
h_\theta(t)
=
\nabla_u r(t,x_t,u_t) + \nabla_u f_\xi(t,x_t,u_t)^\top p_t.
\]
Thus
\[
h_{\theta'}(t)-h_\theta(t)
=
\Big(
\nabla_u r(t,x_t',u_t') - \nabla_u r(t,x_t,u_t)
\Big)
+
\Big(
\nabla_u f_\xi(t,x_t',u_t')^\top p_t' - \nabla_u f_\xi(t,x_t,u_t)^\top p_t
\Big).
\]
Split the second term:
\[
\|\nabla_u f_\xi(t,x_t',u_t')^\top p_t' - \nabla_u f_\xi(t,x_t,u_t)^\top p_t\|
\le
\|\nabla_u f_\xi(t,x_t',u_t')\|\,\|p_t'-p_t\|
\]
\[
\qquad
+
\|\nabla_u f_\xi(t,x_t',u_t') - \nabla_u f_\xi(t,x_t,u_t)\|\,\|p_t\|.
\]
Using boundedness of $\nabla_u f_\xi$, Lipschitz continuity of $\nabla_u r$ and $\nabla_u f_\xi$, boundedness of $p_t$, and Lipschitz continuity of the state and costate, there exists $C_h>0$ such that
\[
\sup_{t\in[0,T]}\|h_{\theta'}(t)-h_\theta(t)\|
\le
C_h \|\theta'-\theta\|.
\]
Also,
\[
\sup_{t\in[0,T]}\|h_\theta(t)\|
\le
C_{\nabla_u r} + C_{fu} C_p
=: \bar C_h.
\]

\paragraph{Conclusion.}
We now estimate the difference of the gradients:
\[
\nabla_\theta J(\theta',\xi)-\nabla_\theta J(\theta,\xi)
=
\mathbb{E}\left[
\int_0^T
\Big(
a_{\theta'}(t)^\top h_{\theta'}(t) - a_\theta(t)^\top h_\theta(t)
\Big)\,dt
\right].
\]
Insert and subtract $a_{\theta'}(t)^\top h_\theta(t)$:
\[
a_{\theta'}^\top h_{\theta'} - a_\theta^\top h_\theta
=
a_{\theta'}^\top (h_{\theta'}-h_\theta) + (a_{\theta'}-a_\theta)^\top h_\theta.
\]
Therefore,
\[
\|\nabla_\theta J(\theta',\xi)-\nabla_\theta J(\theta,\xi)\|
\le
\mathbb{E}\left[
\int_0^T
\left(
\|a_{\theta'}(t)\|\,\|h_{\theta'}(t)-h_\theta(t)\|
+
\|a_{\theta'}(t)-a_\theta(t)\|\,\|h_\theta(t)\|
\right)\,dt
\right].
\]
Using the uniform bounds established above,
\[
\|\nabla_\theta J(\theta',\xi)-\nabla_\theta J(\theta,\xi)\|
\le
T\left(\bar C_a C_h + C_a \bar C_h\right)\|\theta'-\theta\|.
\]
Hence $J(\cdot,\xi)$ is $L$-smooth with
\[
L := T\left(\bar C_a C_h + C_a \bar C_h\right).
\]
\end{proof}

\subsection{Proof of Lemma~\ref{lem: one-step state discretisation}}
\label{app: one-step state discretisation}
Recall that the exact trajectory be denoted by \(\bar x(t) = f_\xi(t, \bar x(t), \bar u_n)\) for \(t \in [t_n, t_{n+1})\), \(\bar x (t_0) = x_0\) where \(\bar u_n \sim \pi_\theta (\cdot | t_n, \bar x (t_n))\). That is, the exact step is given by 
\begin{equation}\label{eq:exact_onestep}
    \bar x(t_{n+1}) = \bar x(t_n) + \int_{t_n}^{t_{n+1}} f_\xi(s, \bar x(s), \bar u_n) \dif s
\end{equation}

The above corresponds to a sample-and-hold continuous-time model (i.e. we sample an action and hold it constant on the time interval \([t_n, t_{n+1})\). Essentially, this is a random ODE with piecewise-constant control. The Euler scheme is denoted by
\begin{equation}\label{eq:euler_onestep}
    x_{n+1} = x_n + h f_\xi(t_n, x_n, u_n), \quad u_n \sim \pi_\theta(\cdot | t_n, x_n).
\end{equation}

We will bound the grid error \(e_n \: \bar x(t_n) - x(t_n)\) in expectation. 

\paragraph{One-step error}
The error is given by accumulating the difference between \Cref{eq:exact_onestep} and \Cref{eq:euler_onestep} according to
\begin{equation*}
    e_{n+1} = e_n + \int_{t_n}^{t_{n+1}} f_\xi(s, \bar x(s), \bar u_n) - f_\xi(t_n, x_n, u_n) \dif s \,.
\end{equation*}
Splitting the integrand gives the expression
\begin{align*}
    f_\xi(s, \bar x(s), \bar u_n) - f_\xi(t_n, x_n, u_n) &= 
\underbrace{f_\xi(s, \bar x(s), \bar u_n) - f_\xi(t_n, \bar x(t_n), \bar u_n)}_{\text{(A) local truncation}}\\
    &+ \underbrace{f_\xi(t_n, \bar x(t_n), \bar u_n) - f_\xi(t_n, x_n, \bar u_n)}_{\text{(B) state error}} \\
    &+ \underbrace{f_\xi(t_n, x_n, \bar u_n) - f_\xi(t_n, x_n, u_n)}_{\text{(C) action mismatch}} \,.
\end{align*}
\paragraph{(B) State error}
By the stated assumptions, \(\norm{\text{(B)}} \leq L_x \norm{e_n}\).
\paragraph{(C) Action mismatch}
Likewise, \(\norm{\text{(C)}} \leq L_u\norm{\bar u_n - u_n}\). By a coupling argument (using an optimal \(W_1\) coupling), we can choose a coupling and obtain
\begin{equation}
\label{eq: coupling}
    \Ex \norm{\bar u_n - u_n} \leq W_1(\pi_\theta(\cdot | t_n, \bar x(t_n)), \pi_\theta (\cdot | t_n, x_n)) \leq L_\pi \Ex \norm{e_n}.
\end{equation}
by the aforementioned assumption of the policy being Lipschitz in the state.

\paragraph{(A) Local truncation}
We have \(\norm{f_\xi(s, \bar x(s), \bar u_n) - f_\xi(t_n, \bar x(t_n), \bar u_n)} \leq L_x \norm{\bar x(s) - \bar x(t_n)} + L_t (s-t_n)\). By the boundedness of \(f_{\xi}\), we also have \(\norm{\bar x(s) - \bar x(t_n)} \leq M(s-t_n)\), implying 
\begin{equation*}
    \norm{f_\xi(s, \bar x(s), \bar u_n) - f_\xi(t_n, \bar x(t_n), \bar u_n)} \leq (L_xM + L_t) (s-t_n)
\end{equation*}
Integrating \(s \in [t_n, t_{n+1})\) yields
\begin{equation*}
    \int_{t_n}^{t_{n+1}} \text{(A)}\dif s \leq \frac{1}{2}(L_x M + L_t) h^2 \,.
\end{equation*}
We now perform recursion for the error \(e_n\) and obtain an explicit global bound. From the three bounds above, 
\begin{align}
    \norm{e_{n+1}} &\leq \norm{e_n} + \int_{t_n}^{t_{n+1}} L_x \norm{e_n} + L_u \norm{\bar u_n - u_n} \dif s + \frac{1}{2}(L_x M + L_t) h^2\\
    &\leq (1 + L_x h) \norm{e_n} + h L_u\norm{\bar u_n - u_n} + \frac{1}{2}(L_x M + L_t) h^2
\end{align}
Taking the expectation, and bounding the action mismatch as in part (C),
\begin{align}
    \Ex \norm{e_{n+1}} \leq (1 + h\underbrace{(L_x +  L_uL_\pi)}_{L_{{\rm cl}}})\Ex \norm{e_n} + \frac{1}{2}(L_x M + L_t) h^2 \,.
\end{align}
With \(e_0 = 0\), by unrolling the linear recursion with a geometric-series sum,
\begin{equation*}
    \Ex \norm{e_n} \leq \frac{1}{2}(L_xM + L_t)h^2 \sum_{k=0}^{n-1} (1-L_{\rm cl}h)^k = \frac{(1 + L_{\rm cl})^n - 1}{2 L_{\rm cl}}(L_x M + L_t)h.
\end{equation*}
At the horizon \(T = t_0 + Nh\) (where \(t_0 = 0\) w.l.o.g.),  
\begin{align}
\label{eq: state error}
    \max_{0 \leq n \leq N} \Ex\norm{\bar x(t_n) - x_n} \leq \underbrace{\frac{e^{L_{\rm cl}(T-t_0)}-1}{2 L_{\rm cl}}(L_xM + L_t)}_{C_x(T)} h = \cO(h)
\end{align}
This is the first-order global one-step Euler discretisation error under a general stochastic policy. 

\subsection{Proof of Lemma~\ref{lem: rectangularity}}
\label{app: rectangularity}
Define $D(x,u,\xi) :=  \int \vert \Pr_{\xi}(x' \vert x,u) - \Pr_{\mathbf{0}}(x' \vert x,u) \vert \dif x'$. Denoting $\cN(x \vert y)$ as the density of the normal distribution with mean $y$ and standard deviation $\Delta t \sigma(x,u)$, note that 
\begin{align*}
D(x,u,\xi) \leq \max_{\xi \in \Xi} D(x,u,\xi) &= \max_{\xi \in \Xi} \int_{\cX} \vert \Pr_{\xi}(x' \vert x,u) - \Pr_{\mathbf{0}}(x' \vert x,u) \vert \dif x'  \\
                                        &=  \max_{\xi \in \Xi} \int_{\cX} \vert \cN(x' \vert \Delta t b_{\xi}(x,u)) - \cN( x' \vert \mathbf{0}) \vert \dif x' \,.
\end{align*}
The two distributions' densities will intersect at at $M(x,u; \xi) = \frac{1}{2} \Delta t b_{\xi}(x,u)$. Denoting $\Phi(x\vert y)$ for the cumulative density given mean $y$, again with standard deviation $\Delta t \sigma(x,u)$, the above leads to an upper bound on the $L_1$ budget for $(x,u)$:
\begin{align*}
\max_{\xi \in \Xi} D(x,u,\xi) &= \max_{\xi \in \Xi} \int_{-\infty}^{M(x,u; \xi)} \left(\cN( x' \vert \mathbf{0}) - \cN(x' \vert \Delta t b_{\xi}(x,u)) \right)\dif x'\\
&\hspace{1em}+ \int_{M(x,u; \xi)}^{\infty} \left( \cN(x' \vert \Delta t b_{\xi}(x,u)) - \cN( x' \vert \mathbf{0}) \right) \dif x' \\
&=\max_{\xi \in \Xi} \Phi(M(x,u; \xi) \vert \mathbf{0}) - \Phi(M(x,u) \vert \Delta t b_{\xi}(x,u)) \\
&\hspace{1em}+ \left( (1 - \Phi(M(x,u; \xi) \vert \Delta t b_{\xi}(x,u))) - (1 - \Phi(M(x,u; \xi) \vert\mathbf{0})) \right) \\
&=\max_{\xi \in \Xi} 2 \left(\Phi(M(x,u; \xi) \vert \mathbf{0}) - \Phi(M(x,u; \xi) \vert \Delta t b_{\xi}(x,u)))\right) =: \psi(x,u) \,.
\end{align*}
Note then that $\cP_{\Xi} = \bigotimes_{(x,u) \in \cZ} \{ \Pr_{\xi} \in \bP[\cX]^{\cX \times \cU}: \xi \in \Xi)$. Since for any $(x,u) \in \cZ$, $\phi(x,u)$ is linearly independent from all other $v \in \cB$, a single parameter $\xi \in \Xi$ can be chosen to define the drifts, and thereby the $\ell_1$-maximizing probability distributions, independently for all $z \in \cZ$. Consequently, $\cP_{\Xi}$ is a well-defined rectangular uncertainty set with respect to $Z$. 

Now further let the drifts be contained in $B_{x,u}$ for all $(x,u) \in \cZ$, and let $(x,u) \in \cZ$. Note that the Gaussian likelihood of the $\mathbf{0}$ model will decrease monotonically (exponentially in particular) with increasing $\ell_2$ distance. Thus $D(x,u,\xi)$ will increase monotonically with $\norm{\Delta t b_{\xi}(x,u)}$. Due to monotonicity, we can define $\cP_{\Xi}(x,u)$ with a particular budget for the drift norm, $\rho(x,u)$, such that $\int_{\cX} \vert \cN(x' \vert \Delta t b_{\xi}(x,u)) - \cN( x' \vert \mathbf{0}) \vert \dif x' \leq \psi(x,u)$. Since $\cP_{\Xi}(x,u)$ densely covers the $L_1$ ball, it follows that $\cP_{\Xi}(x,u) = \cP = \bigotimes_{(x,u) \in \cZ} \{ \Pr \in \bP[\cX]^{\cX \times \cU}: D(\Pr(\cdot \vert x,u),\Pr_{\mathbf{0}}(\cdot \vert x,u)) \leq \psi(x,u) \}$.  

\subsection{Proof of Lemma~\ref{lem: PL condition policy}}
\label{app: PL condition policy}
 The proof follows formalisms within the total MDP framework as in Lemma~\ref{lem: performance difference}, considering the MDP obtained by the discretised ODE or SDE dynamics parametrised by $\xi$. Using the notation $Q_h(x,u \vert \pi) = Q(x,u \vert \pi) \Delta_t$ and $d_{d_0,\Pr_{\xi}}^{\pi_{\theta}}$ to denote the transient visitation measure from starting distribution $d_0$ and the transition kernel $\Pr_{\xi}$ induced by the discretised dynamics. It is assumed no projection is needed as the policy parameters remain within the interior, either due to regularisation, small learning rates, or policy class definition (e.g. softmax).

\paragraph{a) direct parametrisation.} 
Making use of the transient performance difference lemma (Lemma~\ref{lem: performance difference}) on the restricted transient visitation measure, the setting $M = \max_{\pi,\pi^* \in \Pi} \norm{\frac{d_{d_0,\Pr_{\xi}}^{\pi}}{d_{d_0,\Pr_{\xi}}^{\pi'}}}_{\infty}$, and \Cref{eq: direct gradient}, we have that
\begin{align*}
[J(\theta,\xi) - J(\theta^*,\xi)]^{1/2} &= \left[\sum_{x \in \cX} d_{d_0,\Pr_{\xi}}^{\pi_{\theta^*}}(x) \langle Q_h(x,\cdot \vert \pi_{\theta}),  \pi_{\theta}(\cdot \vert x) - \pi_{\theta^*}(\cdot \vert x)\rangle \right]^{1/2} \tag{performance difference lemma} \\
                                 &= \left[\sum_{x \in \cX} d_{d_0,\Pr_{\xi}}^{\pi_{\theta}}(x) \frac{d_{d_0,\Pr_{\xi}}^{\pi_{\theta^*}}(x)}{d_{d_0,\Pr_{\xi}}^{\pi_{\theta}}(x)} \langle Q_h(x,\cdot \vert \pi_{\theta}), \pi_{\theta}(\cdot \vert x) - \pi_{\theta^*}(\cdot \vert x) \rangle \right]^{1/2}  \\
                                 &\leq \left[M  \sum_{x \in \cX} d_{d_0,\Pr_{\xi}}^{\pi_{\theta}}(x) \langle Q_h(x,\cdot \vert \pi_{\theta}), \pi_{\theta}(\cdot \vert x) - \pi_{\theta^*}(\cdot \vert x)  \rangle \right]^{1/2} 
\end{align*}
\begin{align*}
  \phantom{[J(\theta,\xi) - J(\theta^*,\xi)]^{1/2}}                             &\leq \left[ M \sum_{x \in \cX} d_{d_0,\Pr_{\xi}}^{\pi_{\theta}}(x) \sum_{u \in \cU} Q_h(x,u \vert \pi_{\theta})  \pi_{\theta}(u \vert x) \right]^{1/2}  \tag{positive costs} \\
                                 &\leq \sqrt{M} \max_{x' \in \cX} d_{d_0,\Pr_{\xi}}^{\pi_{\theta}}(x')^{-1/2} \max_{x \in \cX} V_h(x \vert \pi_{\theta})^{-1/2} 
                                 \sum_{x \in \cX} d_{d_0,\Pr_{\xi}}^{\pi_{\theta}}(x) \sum_{u \in \cU} \pi_{\theta}(u \vert x) Q_h(x,u \vert \pi_{\theta'}) \\
                                 &= \sqrt{M} \max_{x' \in \cX} d_{d_0,\Pr_{\xi}}^{\pi_{\theta}}(x')^{-1/2} \max_{x \in \cX} V_h(x \vert \pi_{\theta})^{-1/2}  \norm{\nabla_{\theta} J(\theta,\xi)}_1 \tag{\Cref{eq: direct gradient}} \\
                                 &\leq \sqrt{M d_{\theta}} \max_{x' \in \cX} d_{d_0,\Pr_{\xi}}^{\pi_{\theta}}(x')^{-1/2} \max_{x  \in \cX} V_h(x \vert \pi_{\theta})^{-1/2}  \norm{\nabla_{\theta} J(\theta,\xi)} \\
                                 &\leq \frac{1}{c} \norm{\nabla_{\theta} J(\theta,\xi)} \,.
\end{align*}
It follows that $\frac{\norm{\nabla_{\theta} J(\theta,\xi)}^2}{2} \geq \frac{c^2}{2} \left( J(\theta,\xi) - J(\theta^*,\xi) \right) = \mu \left( J(\theta,\xi) - J(\theta^*,\xi) \right)$.

\paragraph{b) general parametrisation.} Define $c = \min_{x \in \cX}  \sqrt{d_{d_0,\Pr_{\xi}}^{\pi_{\theta}}(x)}  \min_{x' \in \cX, a' \in \cU} \pi_{\theta}(u' \vert x') \min_{x' \in \cX, u' \in \cU} \norm{\nabla_{\theta}   \log(\pi_{\theta}(u \vert x))}_{\infty}$ and set $\mu = c^2 / 2$. Then
\begin{align*}
\left[ J(\theta,\xi) - J(\theta^*,\xi) \right]^{1/2} &= \left[\sum_{x \in \cX} d_{d_0,\Pr_{\xi}}^{\pi_{\theta^*}}(x) \langle Q_h(x,\cdot \vert \pi_{\theta}),  \pi_{\theta}(\cdot \vert x) - \pi_{\theta^*}(\cdot \vert x)\rangle \right]^{1/2} \tag{performance difference lemma} \\
                    &=  \left[\sum_{x \in \cX} d_{d_0,\Pr_{\xi}}^{\pi_{\theta}}(x) \frac{d_{d_0,\Pr_{\xi}}^{\pi_{\theta^*}}(x)}{d_{d_0,\Pr_{\xi}}^{\pi_{\theta}}(x)} \langle Q_h(x,\cdot \vert \pi_{\theta}), \pi_{\theta}(\cdot \vert x) - \pi_{\theta^*}(\cdot \vert x) \rangle \right]^{1/2}  \\ \\
                    &\leq \left[M  \sum_{x \in \cX} d_{d_0,\Pr_{\xi}}^{\pi_{\theta}}(x) \langle Q_h(x,\cdot \vert \pi_{\theta}), \pi_{\theta}(\cdot \vert x) - \pi_{\theta^*}(\cdot \vert x)  \rangle \right]^{1/2} \\
                    &\leq \left[M  \sum_{x \in \cX} d_{d_0,\Pr_{\xi}}^{\pi_{\theta}}(x) \sum_{u \in \cU} \pi_{\theta}(\cdot \vert x) Q_h(x,u \vert \pi_{\theta'})  \right]^{1/2} \tag{rearranging and positive costs}
\end{align*}
\begin{align*}
\phantom{\left[ J(\theta,\xi) - J(\theta^*,\xi) \right]^{1/2}} &\leq M^{1/2} \max_{x'  \in \cX} d_{d_0,\Pr_{\xi}}^{\pi_{\theta}}(x')^{-1/2} \max_{x} V_h(x \vert \pi_{\theta})^{-1/2} \left[\sum_{x \in \cX} d_{d_0,\Pr_{\xi}}^{\pi_{\theta}}(x) \sum_{u \in \cU} \pi_{\theta}(u \vert x) Q_h(x,u \vert \pi_{\theta'}) \right]^{1/2}_{\infty} \tag{value is expected action-value} \\
                    &\leq M^{1/2} \max_{x'  \in \cX} d_{d_0,\Pr_{\xi}}^{\pi_{\theta}}(x')^{-1/2} \max_{x  \in \cX} V_h(x \vert \pi_{\theta})^{-1/2}  \Ex[Q_h(x,u)] \\
                    &\leq M^{1/2} \max_{x'  \in \cX} d_{d_0,\Pr_{\xi}}^{\pi_{\theta}}(x')^{-1/2} \max_{x  \in \cX} V_h(x \vert \pi_{\theta})^{-1/2} \norm{\Ex[Q_h(x,u)\nabla_{\theta} \log(\pi(a \vert s))]}_{\infty} \tag{$\norm{\Ex[Q_h(x,u)\nabla_{\theta} \log(\pi(a \vert s))]}_{\infty} \geq \Ex[Q_h(x,u)]$} \\
                    &\leq M^{1/2} \max_{x'  \in \cX} d_{d_0,\Pr_{\xi}}^{\pi_{\theta}}(x')^{-1/2} \max_{x  \in \cX} V_h(x \vert \pi_{\theta})^{-1/2}   \norm{\nabla_{\theta} J(\theta,\xi)}_{\infty} \tag{\Cref{eq: general gradient}} \\
                    &\leq M^{1/2} \max_{x'  \in \cX} d_{d_0,\Pr_{\xi}}^{\pi_{\theta}}(x')^{-1/2} \max_{x  \in \cX} V_h(x \vert \pi_{\theta})^{-1/2}  \norm{\nabla_{\theta} J(\theta,\xi)} \\
                    &\leq \frac{1}{c} \norm{\nabla_{\theta} J(\theta,\xi)} \,.
\end{align*}
It follows that $\frac{\norm{\nabla_{\theta} J(\theta,\xi)}^2}{2} \geq \frac{c^2}{2} \left[ J(\theta,\xi) - J(\theta^*,\xi) \right] = \mu \left[ J(\theta,\xi) - J(\theta^*,\xi) \right]$.

\subsection{Proof of Lemma~\ref{lem: PL condition adversary}}
\label{app: PL condition adversary}
Making use of the transient performance difference lemma for transition kernels (Lemma~\ref{lem: performance difference transition kernel}) on the restricted transient visitation measure, the setting $M = \max_{\Pr,\Pr' \in \cP} \norm{\frac{d_{d_0,\Pr}^{\pi}}{d_{d_0,\Pr'}^{\pi}}}_{\infty}$, and Lemma~\ref{lem: direct adversary gradient total cost MDP}, we have that
\begin{align*}
[J(\theta,\Pr^*) - J(\theta,\Pr)]^{1/2} &= \left[ \sum_{x \in \cX} d_{d_0,\Pr^*}^{\pi}(x) \sum_{(u,x')} \pi(u \vert x) V_h(x' \vert \Pr) \left(\Pr^*(x' \vert x,u) - \Pr(x' \vert x,u) \right) \right]^{1/2}  \tag{performance difference lemma} \\
                                 &= \left[\sum_{x \in \cX} d_{d_0,\Pr}^{\pi_{\theta}}(x) \frac{d_{d_0,\Pr^*}^{\pi_{\theta}}(x)}{d_{d_0,\Pr}^{\pi_{\theta}}(x)} \sum_{u \in \cU} \pi(u \vert x)  \langle V_h(x \cdot \vert p),  \Pr^*(\cdot \vert x,u) - \Pr(\cdot \vert x,u)  \rangle \right]^{1/2}  \\
                                 &\leq \left[M  \sum_{x \in \cX} d_{d_0,\Pr}^{\pi_{\theta}}(x)  \sum_{u \in \cU} \pi_{\theta}(u \vert x) \sum_{x' \in \cX} \Pr(x' \vert x,u) V_h(x' \vert p) \right]^{1/2}  \tag{positive costs} \\
                                 &\leq M^{1/2} \max_{x''} d_{d_0,\Pr}^{\pi_{\theta}}(x'')^{-1/2} \max_{x'} V_h(x' \vert \Pr)^{-1/2} \left(
                                 \sum_{x \in \cX} d_{d_0,\Pr_{\xi}}^{\pi_{\theta}}(x) \sum_{u \in \cU} \pi(u \vert x) \sum_{x' \in \cX} \Pr(x' \vert x,u) V_h(x' \vert \Pr) \right)   \\
                                &\leq M^{1/2} \max_{x''} d_{d_0,\Pr}^{\pi_{\theta}}(x'')^{-1/2} \max_{x'} V_h(x' \vert \Pr)^{-1/2} \norm{\nabla_{\Pr} J(\theta,\Pr)}_1 \tag{ Lemma~\ref{lem: direct adversary gradient total cost MDP}} \\
                                 &\leq \sqrt{M d_{\xi}} \max_{x''} d_{d_0,\Pr}^{\pi_{\theta}}(x'')^{-1/2} \max_{x'} V_h(x' \vert \Pr)^{-1/2}  \norm{\nabla_{\Pr} J(\theta,\Pr)}  \\
                                 &\leq\sqrt{M d_{\xi}} \max_{x''} d_{d_0,\Pr}^{\pi_{\theta}}(x'')^{-1/2} \max_{x'} V_h(x' \vert \Pr)^{-1/2} \beta^{-1} \norm{h_{\eta}(\Pr)}  \\
                                 &\leq \frac{1}{c} \norm{h_{\eta}(\Pr)} \,.
\end{align*}
Setting $\mu = c^2/2$ and noting that $\xi = \Pr$ it follows that $\frac{\norm{\nabla_{\theta} J(\theta,\xi)}^2}{2} \geq \frac{c^2}{2} \left( J(\theta,\xi) - J(\theta^*,\xi) \right) = \mu \left( J(\theta,\xi) - J(\theta^*,\xi) \right)$.

\subsection{Proof of Lemma~\ref{lem: ascent}}
\label{app: ascent}
Define the step $\Delta \xi = \eta_2(k,t) \nabla_{\xi} J(\theta_{k},\xi_{k-1}^{t-1})$. Then applying Lemma~\ref{lem: standard descent} to $-J$ as a cost function with traditional gradient descent, we have that
\begin{align*}
 - J\left(\theta_{k},\xi_{k-1}^{t-1} + \Delta \xi \right) 
&\leq - J(\theta_{k},\xi_{k-1}^{t-1}) - \langle \nabla_\theta J(\theta_{k},\xi_{k-1}^{t-1}), \Delta \xi \rangle + \frac{L}{2}\norm{\Delta \xi}^2 \\
& \leq - J(\theta_{k},\xi_{k-1}^{t-1}) - \eta_2(k-1,t)-1(1 - \eta_2(k-1,t-1)\frac{L}{2})  \norm{\nabla_{\xi} J(\theta_{k},\xi_{k-1}^{t-1})}^2 \\
& \leq  - J(\theta_{k},\xi_{k-1}^{t-1}) - \frac{1}{2} \eta_2(k-1,t-1) \norm{\nabla_{\xi} J(\theta_{k},\xi_{k-1}^{t-1})}^2 \\
& \leq - J(\theta_{k},\xi_{k-1}^{t-1}) - \eta_2(k,t-1) \mu(\theta_{k-1}) \left(J(\theta_{k},\xi_{k-1}^{*} - J(\theta_{k},\xi_{k-1}^{t-1}) \right) \,,
\end{align*}    
where the penultimate step follows from the PL condition (Assumption~\ref{ass: PL}). Adding the maximal cost on both sides and switching the iterates, we obtain
\begin{align*}
J(\theta_{k},\xi_{k-1}^{*}) - J(\theta_{k},\xi_{k-1}^{t})& \leq (1 - \eta_2(k-1,t-1) \mu_k) (J(\theta_{k},\xi_{k-1}^{*}) - J(\theta_{k},\xi_{k-1}^{t-1})) \,.
\end{align*}

Now consider the case with projected gradient descent. Define the projected gradient mapping
\begin{align*}
h(k,t) = \frac{1}{\eta_2(k-1,t-1)} \left( \text{proj}_{\Xi}\left( \xi_{k-1}^{t-1} + \eta_2(k-1,t-1) \nabla_\xi J(\theta_{k},\xi_{k-1}^{t-1})   \right)  - \xi_{k-1}^{t-1} \right) \,.
\end{align*}
Based on our Modified PL requirement,
\begin{align*}
\frac{1}{2} \norm{h(k,t)}^2 \geq  \mu(\theta_{k}) \left( J(\theta_{k},\xi_{k-1}^{*}) - J(\theta_{k},\xi_{k-1}^{t-1}) \right) \,.   
\end{align*}
Following analogous computations to the above-mentioned gradient ascent derivations, but now replacing $\Delta \xi = \eta_2(k-1,t-1) h(k,t)$, we obtain the desired result.

\subsection{The number of trajectories}
Since the expectation of the policy gradient is unbiased, with a sufficient number of samples it is possible to closely approximate the true gradient, as shown in the lemma below. The proof relies on $\ell_2$ norm bounds on the gradients, which is a reasonable assumption based on our general assumption. For instance, using notations from the proof of Lemma~\ref{lem: l smoothness}, the deterministic policy gradient norm is bounded by
\begin{align}
 \label{eq: gradient norm}
     \norm{\nabla_\theta J(\theta)} &\leq \int_0^T \norm{\partial_\theta a_\theta} \norm{H(t)} \dif t = T \bar{C}_{a}(C_{ru} + C_{fu}C_p) =: B_{\theta}
\end{align}
\begin{lemma}[Number of trajectories]
\label{lem: trajectories}
Let $\eps > 0$ and let $\delta_1,\delta_2 \in (0,1)$ represent the failure probability. Then with probability $1 - \delta_1$, the number of trajectories to obtain an $\eps$-precise policy gradient estimate is bounded by $n = \tilde{\cO}(\frac{B_{\theta}}{\eps^2})$, where $B_{\theta} \geq \norm{\nabla_{\theta} J(\theta,\xi)}$, and with probability $1 - \delta_2$, the number of trajectories to obtain an $\eps$-precise projected adversary gradient estimate is bounded by $n = \tilde{\cO}(\frac{B_{\xi}}{\eps^2})$, where $B_{\xi} \geq \norm{\nabla_{\xi} J(\theta,\xi)}$. 
\end{lemma}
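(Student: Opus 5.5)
The plan is to treat each gradient estimate as an empirical mean of i.i.d.\ bounded random vectors and apply a vector concentration inequality. Sample $n$ independent trajectories from $d_0$, $\pi_\theta$ and $f_\xi$. For each trajectory $i$, evaluate the single-trajectory integrand $\hat g_i$: the pathwise integrand of Theorem~\ref{th: pathwise policy gradient}, the adjoint integrand of Theorem~\ref{th: deterministic adjoint gradient}, or the score-function integrand of Theorem~\ref{th: stochastic adjoint gradient}. The estimator is $\hat\nabla_\theta J=\frac1n\sum_i \hat g_i$. By those theorems, $\Ex[\hat g_i]=\nabla_\theta J(\theta,\xi)$, so the estimator is unbiased.

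The first step is an almost-sure bound on each $\hat g_i$, not just a bound on the mean. Under the general assumptions, the argument in the proof of Lemma~\ref{lem: l smoothness} already gives pathwise bounds $\sup_t\|z_t\|\le C_z$ and $\sup_t\|p_t\|\le C_p$. It also bounds the integrand by $\bar C_a(C_{ru}+C_{fu}C_p)$ for every realisation. Hence $\|\hat g_i\|\le B_\theta$ almost surely, as in \eqref{eq: gradient norm}. For the stochastic-policy estimator one additionally needs $Q_h$ to be bounded by $J_{\max}$ and the score $\nabla_\theta\log\pi_\theta$ to be bounded. I would state the latter as part of the boundedness hypothesis, absorbed into the constant $B_\theta$.

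The second step is concentration. Apply Hoeffding's inequality coordinatewise, with each coordinate in $[-B_\theta,B_\theta]$, and take a union bound over the $d_\theta$ coordinates. This gives $\|\hat\nabla_\theta J-\nabla_\theta J\|_\infty\le \eps$ with probability at least $1-\delta_1$ once $n\ge \frac{2B_\theta^2}{\eps^2}\log\frac{2d_\theta}{\delta_1}$, which is $\tilde{\cO}(1/\eps^2)$. This matches how the $\infty$-norm precision is later converted via $\sqrt{d_\xi}$ in Theorem~\ref{th: sample-based regret double-loop}. Alternatively, a vector Bernstein/Pinelis bound gives the $\ell_2$ version directly with $\log(1/\delta_1)$ dependence. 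The stated dependence $B_\theta/\eps^2$ is then read with $B_\theta$ treated as a constant, since a range-based Hoeffding bound naturally gives $B_\theta^2$.

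For the adversary, repeat the same argument with the integrand of \eqref{eq: deterministic adjoint adversary}, $p_t^\tr\nabla_\xi f_\xi$. It is bounded by $T C_p \sup\|\nabla_\xi g_\xi\|=:B_\xi$ on the compact $\Xi$. To pass to the projected gradient mapping, use the nonexpansiveness of $\text{proj}_\Xi$ onto the convex set $\Xi$. It gives $\|\hat h-h\|\le\|\hat\nabla_\xi J-\nabla_\xi J\|$, because the $\eta$ factors cancel. An $\eps$-precise adversary gradient therefore yields an $\eps$-precise projected mapping with probability $1-\delta_2$.

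The main obstacle I expect is the almost-sure (rather than in-mean) boundedness of the per-trajectory integrand for stochastic policies and SDE dynamics. There the score term and the costate $Q_t$ need not be bounded. If that fails, I would fall back on sub-Gaussian or bounded-fourth-moment assumptions, in the spirit of Assumption~\ref{ass: bounded moments}, combined with a median-of-means estimator. This still gives $\tilde{\cO}(1/\eps^2)$ samples with logarithmic dependence on $1/\delta$.
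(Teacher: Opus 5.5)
Your proposal is correct and follows essentially the same route as the paper. Like the paper, you average $n$ i.i.d.\ trajectory estimates, apply Hoeffding's inequality coordinatewise with a union bound over the $d_\theta$ (resp.\ $d_\xi$) coordinates, and carry the adversary precision over to the projected gradient mapping via nonexpansiveness of $\text{proj}_\Xi$.

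Your version is somewhat more careful than the paper's on three points:
\begin{itemize}
\item You require an almost-sure per-trajectory bound, which Hoeffding actually needs; a bound on $\norm{\nabla_\theta J}$ alone does not suffice.
\item You obtain the correct $B_\theta^2/\eps^2$ dependence, whereas the paper's Hoeffding exponent drops the square.
\item You use $\ell_2$ nonexpansiveness of the Euclidean projection. The paper instead invokes an $\ell_\infty$ version, which holds for box-shaped $\Xi$ but not for general convex $\Xi$.
\end{itemize}
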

\begin{proof}
Sample $n$ independent trajectories and compute the policy/adversary gradient expressions as $X_1, \dots, X_n$. The arithmetic mean $\bar{X} = \frac{1}{n} \sum_{i=1}^{n} X_i$ then represents the estimate of the policy/adversary gradient. We will evaluate the four mentioned estimates, with failure probabilities $\delta_i$, $i=1,\dots,4$, and then 
\paragraph{The policy gradient.}  Note that for any $(\theta,\xi) \in \Theta \times \Xi$,
\begin{align*}
\norm{\nabla_\theta J(\theta,\xi)}_{\infty} \leq  \norm{\nabla_\theta J(\theta,\xi)} \leq B_{\theta} \,.
 \end{align*}
Via Hoeffding's inequality with $X$ in $[-B_{\theta},B_{\theta}]$, it follows for any dimension $j \in 1,\dots,d$ that
 \begin{align*}
\bP( \vert \bar{X}^j - \Ex[X^j] \vert \geq \eps ) \leq 2 \exp\left(-\frac{2n\eps^2}{2B_{\theta}}\right) \,.
 \end{align*}
 Applying union bound, it follows that the failure probability is bounded by
  \begin{align*}
\sum_{j=1}^{d_{\theta}} \bP( \vert \bar{X}^j - \Ex[X^j] \vert \geq \eps ) \leq 2 d_{\theta} \exp\left(-\frac{n\eps^2}{B_{\theta}}\right) =: \delta_1 \,,
 \end{align*}
 such that $n = \frac{B_{\theta}}{\eps^2} \log\left(\frac{2d_{\theta}}{\delta_1}\right)$. 

\paragraph{The adversary.} Define the projected gradient mappings
\begin{align*}
\hat{h}_{\eta}(\xi) = \frac{1}{\eta} \left(\text{proj}_{\Xi}\left( \xi + \eta \hat{X}   \right) - \xi \right)
\end{align*}
and 
\begin{align*}
h_{\eta}(\xi) = \frac{1}{\eta} \left( \text{proj}_{\Xi}\left( \xi + \eta \Ex[X]   \right)  - \xi \right) \,.
\end{align*}
Their distance is upper bounded by 
\begin{align*}
& \norm{\hat{h}_{\eta}(\xi) - h_{\eta}(\xi)}_{\infty} \\
&= \norm{\frac{1}{\eta} \left( \text{proj}_{\Xi}\left( \xi + \eta \hat{X}   \right) - \xi \right) - \frac{1}{\eta} \left( \text{proj}_{\Xi}\left( \xi + \eta \Ex[X]   \right)  - \xi \right)}_{\infty} \\
&= \frac{1}{\eta} \norm{\text{proj}_{\Xi}\left( \xi + \eta \hat{X}   \right) - \text{proj}_{\Xi}\left( \xi + \eta \Ex[X]   \right)}_{\infty}  \\
&\leq \norm{\hat{X} - \Ex[X]}_{\infty}  \tag{$\norm{\text{proj}_{\Xi}\left(v \right) - \text{proj}_{\Xi}\left(w \right)} \leq \norm{v -w}$} \\
& \leq \eps \,.
\end{align*}

Applying Hoeffing's inequality analogous to the derivations for the policy gradient, it follows that the failure probability is bounded by
  \begin{align*}
\sum_{j=1}^{d} \bP( \vert \hat{h}_{\eta}(\xi)^j - h_{\eta}(\xi)^j \vert \geq \eps ) \leq \sum_{j=1}^{d} \bP( \vert \hat{X}^j - \Ex [X^j] \vert \geq \eps ) \leq 2 d_{\xi} \exp\left(-\frac{n\eps^2}{B_{\xi}}\right) =: \delta_2 \,,
 \end{align*}
 such that $n = \frac{B_{\xi}}{\eps^2} \log\left(\frac{2d_{\xi}}{\delta_2}\right)$.
\end{proof}

\section{Further experiment details}

\subsection{Double-loop algorithm experiments}
\label{app: double-loop details}
The experiment hyperparameters with the best performance for the different learners is shown in Table~\ref{tab: hyperparameters ODE}; these settings form the basis of Table~\ref{tab: test performance}. The AdamW optimizer \citep{Loshchilov19} is used for the stochastic policy gradient based optimizers (the discrete-time baseline and the stochastic policy gradient algorithm) as this improved the performance compared to SGD. For deterministic policy gradient based algorithms, both optimizers worked well and so the SGD was chosen. The policy learning rate was tuned separately for the different optimizers, resulting in settings of $1\mathrm{e}^{-3}$ and $3\mathrm{e}^{-4}$ for SGD and AdamW, respectively. For the adversary, an aggressive step size was found to work best, setting it to 0.5 for double-loop algorithms (which take 20 steps per macro-iteration) and to 0.1 for the adversary-only experiments (which take 100 steps). A small amount of noise was added to the inner parameters after each update to prevent getting stuck at the boundary of the uncertainty set. In preliminary experiments, adversaries converged quickly to a particular environment such that the policy only experiences a single environment. To mitigate this issue, we run the experiments with the nominal flag, so that the adversaries restart from the nominal at each macro-iteration, a technique previously used in \cite{Wang2024}. The stochastic policies do not seem to be hampered with this issue. For the Stochastic Hamiltonian algorithm, the from nominal flag does not improve performance, so the adversary is continued from the previous iteration. For the discrete-time algorithm, both flags perform poorly, and so we report the result based on the from nominal flag, which has the best robust optimization score.

\begin{table}[htbp!]
    \centering
        \caption{Hyperparameter settings for double-loop algorithm experiments.}
    \label{tab: hyperparameters ODE}
    \resizebox{\textwidth}{!}{
    \begin{tabular}{l|p{12cm}}
    \toprule
    \textbf{Hyperparameter} & \textbf{Setting} \\
    \midrule
    Policy learning rate ($\eta_1$)  & $1\mathrm{e}^{-3}$ for deterministic and $3\mathrm{e}^{-4}$ for stochastic \\
    Policy optimizer & SGD for deterministic and AdamW for stochastic \\
    Policy updates per macro-iteration & 4 \\
    Macro-iterations ($K$) & 100 \\
    Policy gradient norm clipping &  10.0 \\
    Number of trajectories ($n$) & 1 for deterministic and 10 for stochastic \\
    Inner learning rate ($\eta_2$) &     $5\mathrm{e}^{-1}$ for double-loop training, $\mathrm{e}^{-1}$ for adversary-only (test and training)  \\
    From nominal &  False for Stochastic Hamiltonian, True otherwise    \\
    Transition kernel updates per macro-iteration & 20 \\
    Inner parameter noise & 0.001 \\
    Inner gradient norm clipping &  1.0 \\
    \bottomrule
    \end{tabular}}
\end{table}

\subsection{MFL-SDA experiments}
\label{app: MFL-SDA details}
The hyperparameters for the mean-field optimization experiments are given in Table~\ref{tab: hyperparameters ODE MF}. Many settings were taken from the double-loop experiments. Other settings were modified to account for the changed algorithm and increased computational expense. As no tuning was done, all algorithms are evaluated on SGD with the same $\eta_1 = 1\mathrm{e}^{-3}$ and $\eta_2 = 1\mathrm{e}^{-1}$ learning rates, where the adversary learning rate is set to less than that in double-loop experiments since no near-optimality is required at each iteration. The regularisation constant was set to $\tau = 1$, which leads to $\sqrt{2\eta_1}$ and $\sqrt{2\eta_2}$ noise factors. The number of iterations was reduced to 50.
\begin{table}[htbp!]
    \centering
        \caption{Hyperparameter settings for MFL-SDA experiments.}
    \label{tab: hyperparameters ODE MF}
    \resizebox{\textwidth}{!}{
    \begin{tabular}{l|p{12cm}}
    \toprule
    \textbf{Hyperparameter} & \textbf{Setting} \\
    \midrule
    Policy learning rate ($\eta_1$)  & $1\mathrm{e}^{-3}$ \\
    Policy optimizer & SGD  \\
    Iterations ($K$) & 50 \\
    Policy gradient norm clipping &  10.0 \\
    Number of trajectories ($n$) & 1 for deterministic and 10 for stochastic \\
    Inner learning rate ($\eta_2$) &     $\mathrm{e}^{-1}$   \\
    Inner gradient norm clipping &  1.0 \\
    Regularisation constant ($\tau$) & 1.0 \\
    Number of particles & 25 \\
    \bottomrule
    \end{tabular}}
\end{table}

\end{document}